\def\arxiv{1}
\pgfplotsset{compat=1.18}%
    \newtheoremstyle{resultstyle}
    {0.5em}
    {0.5em}
    {\itshape}
    {}
    {\bfseries}
    {.}
    { }
    {\thmname{#1}\thmnumber{ #2}\thmnote{ (#3)}}%
    \newtheoremstyle{definitionstyle}
    {0.5em}
    {0.5em}
    {}
    {}
    {\bfseries}
    {.}
    { }
    {\thmname{#1}\thmnumber{ #2}\thmnote{ (#3)}}%
    \newtheoremstyle{remarkstyle}
    {0.5em}
    {0.5em}
    {\itshape}
    {}
    {\itshape}
    {.}
    { }
    {\thmname{#1}\thmnumber{ #2}\thmnote{ (#3)}}%
    \theoremstyle{resultstyle}
    \newtheorem{theorem}{Theorem}%
    \newtheorem{lemma}{Lemma}%
    \newtheorem{proposition}{Proposition}%
    \theoremstyle{remarkstyle}%
    \newtheorem{example}{Example}%
    \newtheorem{remark}{Remark}%
    \theoremstyle{definitionstyle}%
    \newtheorem{definition}{Definition}%
    \theoremstyle{plain}
    \newtheorem{theorem}{Theorem}%
    \newtheorem{lemma}{Lemma}%
    \newtheorem{proposition}{Proposition}%
    \theoremstyle{remark}
    \newtheorem{remark}{Remark}%
    \theoremstyle{definition}%
    \newtheorem{definition}{Definition}%
\renewcommand{\a}{\alpha} 
\newcommand{\Bra}[1]{\left[ #1 \right]}
\newcommand{\Par}[1]{\left( #1 \right)}
\newcommand{\rp}{\oplus}
\newcommand{\rt}{\otimes}
\newcommand{\ro}{\mathbb{1}} 
\newcommand{\rz}{\mathbb{0}} 
\newcommand{\rconv}{\circledast}
\newcommand{\rint}{\oint}
\newcommand{\minus}{\scalebox{0.6}{$-$}}
\newcommand{\plus}{\scalebox{0.6}{$+$}}
\newcommand{\tropmin}{{T_{\minus}}}
\newcommand{\tropmax}{{T_{\plus}}}
\newcommand{\tropmaxmin}{{T_{\pm}}}
\DeclareMathOperator{\dom}{dom} 
\DeclareMathOperator{\sign}{sign} 
\newcommand{\SO}{\mathcal{S}} 
\newcommand{\TO}{\mathcal{T}} 
\newcommand{\RO}{\mathcal{R}} 
\newcommand{\PO}{\mathcal{P}} 
\newcommand{\bbR}{\mathbb{R}} 
\newcommand{\bbL}{\mathbb{L}} 
\newcommand{\bbC}{\mathbb{C}} 
\newcommand{\bbZ}{\mathbb{Z}} 
\newcommand{\bbN}{\mathbb{N}} 
\newcommand{\bbM}{\mathbb{M}} 
\newcommand{\cF}{\mathcal{F}} 
\newcommand{\cG}{\mathcal{G}} 
\newcommand{\cH}{\mathcal{H}} 
\newcommand{\cL}{\mathcal{L}} 
\newcommand{\cP}{\mathcal{P}} 
\newcommand{\pdv}[2]{\frac{\partial #1}{\partial #2}}
\begin{document}

\title[PDE-CNNs: Axiomatic Derivations and Applications]{PDE-CNNs: Axiomatic Derivations and Applications}

\author*[]{\fnm{Gijs} \sur{Bellaard}}\email{g.bellaard@tue.nl}

\author[]{\fnm{Sei} \sur{Sakata}}\email{seisakata@gmail.com}

\author[]{\fnm{Bart} \sur{M. N. Smets}}\email{b.m.n.smets@tue.nl}

\author[]{\fnm{Remco} \sur{Duits}}\email{r.duits@tue.nl}

\affil[]{\orgdiv{Department of Mathematics and Computer Science, CASA}, \orgname{Eindhoven University of Technology}, \orgaddress{\city{Eindhoven}, \country{The Netherlands}}}

\abstract{
PDE-based Group Convolutional Neural Networks (PDE-G-CNNs) use solvers of evolution PDEs as substitutes for the conventional components in G-CNNs. 
PDE-G-CNNs can offer several benefits simultaneously: fewer parameters, inherent equivariance, better accuracy, and data efficiency.

In this article we focus on Euclidean equivariant PDE-G-CNNs where the feature maps are two-dimensional throughout.
We call this variant of the framework a PDE-CNN.

From a machine learning perspective, we list several practically desirable axioms and derive from these which PDEs should be used in a PDE-CNN, this being our main contribution.
Our approach to geometric learning via PDEs is inspired by the axioms of scale-space theory, which we generalize by introducing semifield-valued signals.

Our theory reveals new PDEs that can be used in PDE-CNNs and we experimentally examine what impact these have on the accuracy of PDE-CNNs.
We also confirm for small networks that PDE-CNNs offer fewer parameters, increased accuracy, and better data efficiency when compared to CNNs.
}

\keywords{PDE, Scale-Space, Semifield, Equivariance, Neural Network, Machine Learning, Computer Vision, Convolution, Tropical Semiring, Morphology}

\maketitle

\section{Introduction} \label{sec:introduction} 

Recently, PDE-based group equivariant convolution neural networks (PDE-G-CNNs) \cite{smets2023pde} were introduced. 
PDE-G-CNNs belong to the broad family of group equivariant convolution neural works (G-CNNs) \cite{cohen2016group}.
Unlike traditional CNNs, PDE based networks replace the usual components that make up a CNN layer, that being convolutions, max pooling, and non-linear activation functions, by solvers of evolution PDEs. 
The coefficients that govern the effect of the PDEs serve as the trainable parameters.
\Cref{fig:layer} contains a diagram of an example CNN layer and PDE layer, intended to illustrate the similarities and differences between them.

It is shown in \cite{pai2023functional,smets2023pde,bellaard2023analysis,bellaard2023geometric} that{\if\arxiv0\color{RoyalBlue}\fi, for vessel segmentation in medical images and digit classification problems,}
PDE-G-CNNs — in addition to being inherently equivariant — require fewer parameters, achieve higher accuracy, and are more data-efficient, in comparison to CNNs and G-CNNs.
{\if\arxiv0\color{RoyalBlue}\fi
From this perspective, PDE-G-CNNs can be preferable over other architectures in the aforementioned image processing tasks.
}

{\if\arxiv0\color{RoyalBlue}\fi
The PDE-G-CNN architecture is general in the sense that the feature maps \(f : M \to \bbR\) are defined on an arbitrary \textit{homogeneous space} \(M\) on which a \textit{Lie group} \(G\) acts.
However, the existing literature \cite{pai2023functional,smets2023pde, bellaard2023analysis,bellaard2023geometric} mainly concerns itself with \(M = \bbM_2 = \bbR^2 \times S^1\), the space of two-dimensional positions and orientations, together with \(G = \text{SE}(2) = \bbR^2 \rtimes SO(2)\), the group of two-dimensional rotations and translations.
In this article we will \textit{not} consider the general setting, or \(M = \bbM_2\) for that matter, and restrict ourselves to \(M = \bbR^2\) and \(G = \text{SE}(2)\), i.e. standard two-dimensional Euclidean space with its roto-translation symmetries, for simplicity.
We call this specific instance a PDE-CNN. 

In \cite{smets2023pde} the evolution PDEs that are used in the PDE-G-CNN architecture are
\begin{subequations} \label{eq:pdes_pde_g_cnn}
\begin{align}
    \text{convection} \quad \pdv{f}{t} &= v \cdot \nabla f\\
    \text{\(\alpha\)-diffusion} \quad \pdv{f}{t} &= - \tfrac{1}{\alpha} (-\Delta)^{\alpha/2} f, \alpha > 0 \label{eq:intro_frac_diff_pde}\\
    \text{\(\alpha\)-dilation} \quad \pdv{f}{t} &= + \tfrac{1}{\alpha} \| \nabla f \|^\alpha, \alpha > 1 \label{eq:intro_dilation_pde}\\
    \text{\(\alpha\)-erosion} \quad \pdv{f}{t} &= - \tfrac{1}{\alpha} \| \nabla f \|^\alpha, \alpha > 1 \label{eq:intro_erosion_pde}.
\end{align}
\end{subequations} 
Here \(f : M \times \bbR_{\geq 0} \to \bbR\) is some scalar field on \(M\) evolving over time \(t \geq 0\), with \(f(\cdot,0)\) set to an initial condition.
In the convection \(v : M \to TM\) denotes a vector field, and in the diffusion \(-(-\Delta)^{\alpha/2}\) denotes a (fractional) power of the Laplacian.
Intuitively, the PDEs respectively correspond to shifting, blurring, max pooling, and min pooling.

Importantly, the PDEs \eqref{eq:pdes_pde_g_cnn} are (implicitly) dependent on the Riemannian metric tensor field \(\cG\) that is chosen on the homogeneous space \(M\).
When using different Riemannian metrics, concepts such as Laplacian \(\Delta\), gradient \(\nabla\), and norm \(\| \cdot \|\) change accordingly, consequently altering the effect of the PDEs.
The parameters that determine the Riemannian metrics \(\cG\) are learned during the training of a PDE-based neural network.
The metric tensor field \(\cG\) is designed to be invariant to the Lie group \(G\), resulting in \(G\)-equivariant processing of the signals \(f : M \to \bbR\) \cite{smets2023pde}.

The diffusion, dilation, and erosion PDEs \eqref{eq:pdes_pde_g_cnn} used in \cite{smets2023pde} were not chosen arbitrarily; they satisfy properties considered desirable from a machine learning perspective.
For example, the PDEs are \textit{quasilinear} and \textit{equivariant} meaning that they 1) can be solved using convolutional-like operations allowing for fast parallel computation, and 2) allow for the design of inherently equivariant networks, resulting in an architecture that is robust and data-efficient \cite{mohamed2020data,cohen2019gauge}.
In fact, the desirable properties that we want the PDEs in PDE-based neural networks to have are essentially the axioms of \textit{scale-space theory} \cite{pauwels1995extended,iijima1959basic,alvarez1993axioms,florack2001nonlinear,heijmans2002algebraic,duits2004axioms,felsberg2004monogenic}. 
Later, in \Cref{sec:axioms} we explore and motivate these properties in more detail.

In this article we will be deriving in an axiomatic way which PDEs should be used in PDE-based neural networks.
This framework includes the PDEs that are currently already used \eqref{eq:pdes_pde_g_cnn}, but also reveals previously unused PDEs, meaning that the accuracy of PDE-G-CNNs could possibly be improved by adding them.
Our approach is inspired by the axioms of scale-space theory, which we generalize by introducing \textit{semifield-valued} signals, and motivated from a machine learning perspective.

}

\begin{figure*}
    \centering
    \begin{subfigure}[t]{0.42\linewidth}
        \centering
        \includegraphics[width=\linewidth]{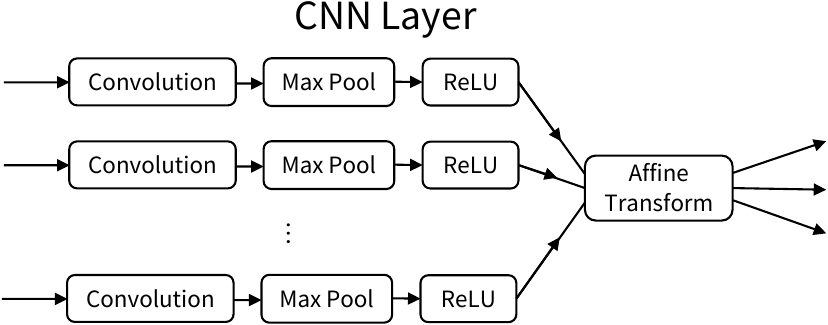}%
        \caption{CNN layer.}
        \label{fig:cnn_layer}
    \end{subfigure}
    \begin{subfigure}[t]{0.48\linewidth}
        \centering
        \includegraphics[width=\linewidth]{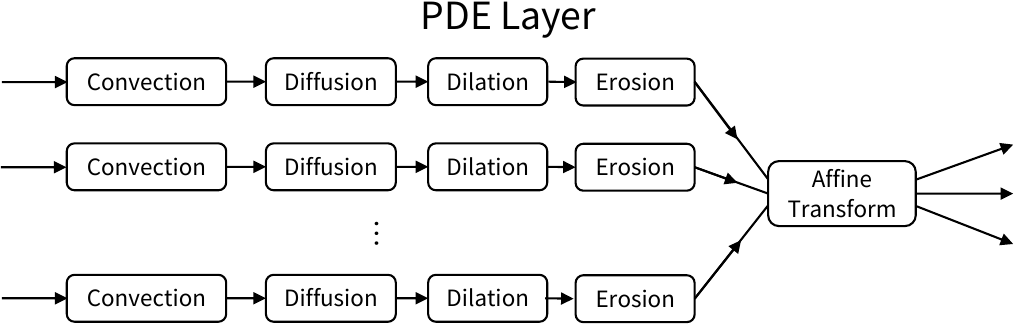}%
        \caption{PDE layer}
        \label{fig:pde_layer}
    \end{subfigure}
    \caption{Diagram of an example CNN layer and PDE layer. 
    {\if\arxiv0\color{RoyalBlue}\fi
    The vertical direction represents the channels.
    The arrows represent the ``flow'' of the feature maps through the parts that make up a layer.
    In machine learning terms, the affine transformation block is equivalent to a 2D convolution module with bias and 1x1 kernels.
    PDE based networks replace the usual components that make up a CNN layer, that being convolutions, max pooling, and non-linear activation functions, by solvers of evolution PDEs.
    The PDEs here are convection, diffusion, dilation, and erosion \eqref{eq:pdes_pde_g_cnn}.
    With ``solvers'' we mean the mapping from the initial condition \(f|_{t=0}\) to \(f|_{t=T}\).
    We can take \(T=1\) without loss of generality due to the scale-equivariance property of the PDEs (Axiom \ref{ax:r2_scaling}).}
    }
    \label{fig:layer}
\end{figure*}


\subsection{Contributions}

We list six axioms (\Cref{def:semifield_scale_space}) that a PDE used in a PDE-based neural network should satisfy.
The axioms are closely related to classical scale-space theory, but are more general in the sense that we permit semifield-valued signals.
The goal of this generalization is to allow for the discovery (or invention) of new PDEs that can be used in the design of PDE-based neural networks.

We will only consider semifields that are commutative and one-dimensional, and the domain of the PDEs will be the two-dimensional Euclidean space \(\bbR^2\).
To maintain a practical perspective, we will consistently connect the overarching theory using five example semifields: the linear, root, logarithmic, tropical min, and tropical max semifields.
    
From the axioms, we demonstrate in \Cref{res:explicit_form_reduced_kernel} that every semifield corresponds to a unique family of scale-spaces, this being the main theoretical contribution of the article.
This shows that PDE-based neural networks, in their current form, can be extended greatly by adding new PDEs that generate currently unused scale-spaces.

We experimentally assess how effective the incorporation of new semifields and their corresponding PDEs is in \Cref{sec:experiment_semifields}.

In \Cref{sec:data_efficiency} we verify that PDE-CNNs exhibit superior data efficiency, reduced parameter count, and competitive accuracy compared to traditional CNNs.

\subsection{Short Outline}

In \Cref{sec:background} we provide background on scale-spaces, semifields, and a non-exhaustive list of related literature.
In \Cref{sec:semifield_theory} we define semifields and all related structures and operations. 
In \Cref{sec:semifield_scale_space} we state the semifield scale-space axioms.
In \Cref{sec:consequences} we show that once a semifield is chosen a unique (one-parameter) family of scale-spaces arise (\Cref{res:explicit_form_reduced_kernel}).
In \Cref{sec:architecture} we briefly note on the architectural design of PDE-CNNs.
In \Cref{sec:experiments} we lay out two experiments and discuss their results.
In \Cref{sec:conclusion} we conclude the article.

{\if\arxiv0\color{RoyalBlue}\fi

\section{Background} \label{sec:background}

\subsection{Scale-Spaces}

The desired properties of PDEs in PDE-based neural networks are closely related to those of scale-space representations.
In fact, there is a one-to-one correspondence between scale-spaces and PDEs used in PDE-based neural networks.
In this section we will introduce and motivate the concept of scale-spaces, providing a few examples, and show to which PDE they correspond.
}

Real world scenes contain many different objects at different scales.
When a computer is tasked with analyzing an image of a scene there is no way for it to know beforehand at which scale(s) the interesting structures live.
One way to tackle this problem is to analyze the image of interest at \textit{all} scales.

In broad terms, a \textit{scale-space representation} of an image \(f_0\) is an ordered collection of images \(f_t\) where each successive image contains less and less detail; that is the smaller scales have been processed away.
The collection of images is usually indexed by the \textit{scale-parameter} \(t \geq 0\) with \(t=0\) being the original image.

Scale-spaces are a natural choice for computer vision solutions (either neural networks or classical methods) as they respect the inherent symmetries of images, that being translation, rotational, and scaling symmetries.
What we mean by this mathematically is that, for example, the scale-space \(g_t\) of a translated image \(g_0 = T_v f_0\), is equal to the translated scale-space of the original image: \(g_t = T_v f_t\).
Here \(T_v\) is the translation operator defined by \((T_v f)(x) = f(x - v)\).
Analogous statements hold for the rotation and scaling symmetries.
We say that creating the scale-space representation of an image is \textit{equivariant} with respect to translation, rotations and scalings. 


The prototypical, and most likely first \cite{iijima1959basic,weickert1999linear,koenderink1984structure}, example of a scale-space is the \textit{Gaussian scale-space} made by successive \textit{diffusing} (i.e.~\textit{blurring} or \textit{smoothing}) of the original image.
The Gaussian scale-space \(f_t\) of a two-dimensional image \(f_0 : \bbR^2 \to \bbR\) can be written as a \textit{linear convolution} \(*\) with a Gaussian \textit{kernel} \(k_t\):
\begin{equation}
\label{eq:intro_gaussian_scale_space_convolution}
\begin{split} 
    &\pdv{f}{t} = \frac{1}{2} \Delta f, \\
    &f_t = k_t * f_0, \quad
    k_t(x) = \frac{1}{2\pi t}\exp\Par{-\frac{\|x\|^2}{2t}},\\
    &(k_t * f_0)(x) = \int_{\bbR^2} k_t(x - y) f_0(y) {\rm d} y,
\end{split}
\end{equation}
where we used the notation \(f(x,t) = f_t(x)\).

Two other examples are the \textit{morphological scale-space} representations \cite{brockett1992evolution} made by successively \textit{dilating} or \textit{eroding} the original image.
The \(\alpha\)-dilation scale-space can be written as a non-linear \textit{dilating convolution} \(\boxplus\) with a kernel:
\begin{equation}
\label{eq:intro_dilation_scale_space_convolution}
\begin{split} 
    &\pdv{f}{t} = + \frac{1}{\a} \| \nabla f \|^\a, \\
    &f_t = k_t \boxplus f_0, \quad k_t(x) = -\frac{t}{\beta}\Par{\frac{\|x\|}{t}}^\beta, \\
    &(k_t \boxplus f_0)(x) = \sup_{y \in \bbR^2} k_t(x - y) + f_0(y), 
\end{split}
\end{equation}
where \(\beta\) is such that \(1/\a + 1/\beta = 1\).
The \(\alpha\)-erosion scale-space is created using an \textit{eroding convolution} \(\boxminus\):
\begin{equation}
\label{eq:intro_erosion_scale_space_convolution}
\begin{split} 
    &\pdv{f}{t} = - \frac{1}{\a} \| \nabla f \|^\a, \\
    &f_t = k_t \boxminus f_0, \quad k_t(x) = \frac{t}{\beta}\Par{\frac{\|x\|}{t}}^\beta, \\
    &(k_t \boxminus f_0)(x) = \inf_{y \in \bbR^2} k_t(x - y) + f_0(y). 
\end{split}
\end{equation}
The dilating and eroding convolutions are collected under the umbrella term \textit{morphological convolution}. 
This is because they are related by the identity \(-(-f\boxplus-g)=f\boxminus g\).

In \Cref{fig:scale_spaces_example_drive} the Gaussian, quadratic (\(\alpha=2\)) dilation, and quadratic erosion scale-spaces representations are visualized of a grayscale image of the fundus of the eye \cite{staal2004ridge}.

\begin{figure*}
    \centering
    \includegraphics[width=0.8\linewidth]{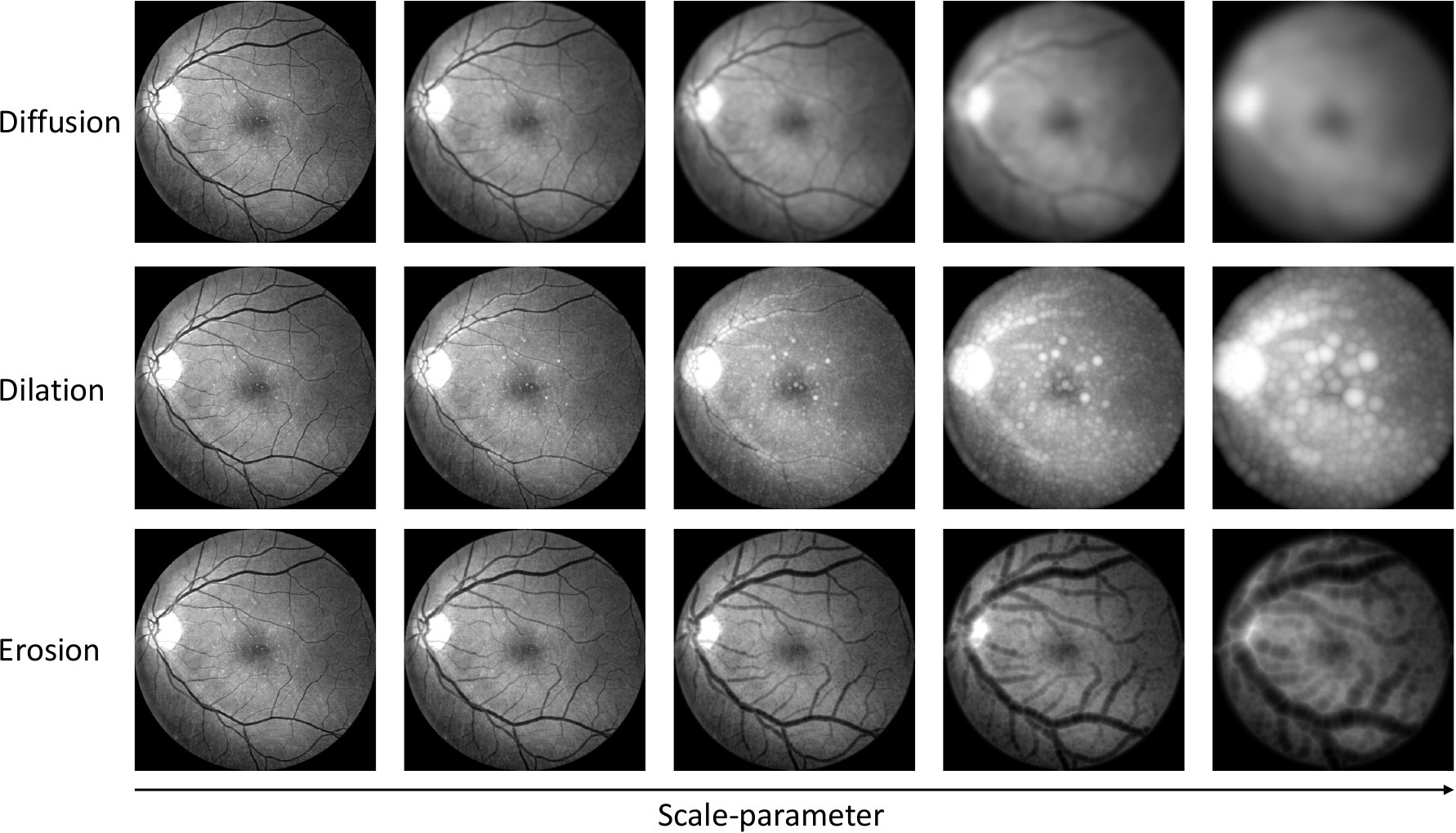}%
    \caption{The Gaussian \eqref{eq:intro_gaussian_scale_space_convolution}, quadratic (\(\alpha=2\)) dilation \eqref{eq:intro_dilation_scale_space_convolution}, and quadratic erosion \eqref{eq:intro_erosion_scale_space_convolution} scale-space representations of a grayscale image of the fundus of the eye at various scale-parameters. 
    In the Gaussian scale-space both white and black features fade away towards a uniform image.
    In the dilation scale-space the black details (low values), such as the vessels, vanish at bigger scales. 
    In the erosion scale-space the white details (high values), such as the space between vessels, are removed at higher scales. }
    \label{fig:scale_spaces_example_drive}
\end{figure*}



{\if\arxiv0\color{RoyalBlue}\fi

\subsection{Semifields \& Quasilinearity}

Every scale-space representation has a natural corresponding algebra called a \textit{semifield}.
In this section we will show which semifields correspond to the diffusion, dilation, and erosion scale-spaces.

}

The Gaussian scale-space representation \eqref{eq:intro_gaussian_scale_space_convolution} is linear in the sense that if one takes two images \(f_0, g_0 : \bbR^2 \to \bbR\) and two scalars \(a,b \in \bbR\), then the scale-space \(h_t\) of the image \(h_0 = a f_0 + b g_0\) is equal to \(h_t = a f_t + b g_t\).
But, in an analogous manner, the dilation scale-space \eqref{eq:intro_dilation_scale_space_convolution} is \textit{quasilinear} in the sense that the scale-space of the image \(h_0 = \max\{ a + f_0, b + g_0 \}\), where we interpret the maximum pointwise, is equal to \(h_t = \max\{a + f_t, b + g_t\}\).
In the same way, the erosion scale-space \eqref{eq:intro_erosion_scale_space_convolution} is quasilinear in the \(\min\) sense.
To define what we mean with quasilinear more precisely we need to introduce semifields.

A \textit{semifield} \((R,\rz,\ro,\rp,\rt)\) is an algebraic structure like a field but where we relax the requirement that the addition \(\rp\) has inverses.
The prototypical example of a semifield are the nonnegative real numbers \(L_{\geq 0} = (\bbR_{\geq 0},0,1,+,\times)\) with standard addition and multiplication.
We have already seen two other examples of semifields in the dilation and erosion scale-spaces. 
Namely, the so-called \textit{tropical max} semifield defined as \(\tropmax=(\bbR\cup\{-\infty\},-\infty,0,\max,+)\) and the \textit{tropical min} semifield \(\tropmin=(\bbR\cup\{\infty\},\infty,0,\min,+)\).
In the tropical semifields the minimum (or maximum) of two numbers becomes semifield addition, and normal addition becomes semifield multiplication.

With the definition of a semifield we can state the quasilinearity of a scale-space formally as \textit{semifield \(R\)-linearity}. 
So, like before, consider a semifield \(R\) and two semifield-valued images \(f_0, g_0 : \bbR^2 \to R\) and two elements \(a,b \in R\).
Then by a scale-space being \(R\)-linear we mean that the scale-space of the \(R\)-linear combination of images \(h_0 = (a \rt f_0) \rp (b \rt g_0)\) is equal to the \(R\)-linear combination of scale-spaces \(h_t = (a \rt f_t) \rp (b \rt g_t)\).
In other words, the operation that takes an image and returns its scale-space representation is a semifield linear operator.
For example, the Gaussian scale-space is \(L_{\geq 0}\)-linear, the quadratic dilation scale-space is \(\tropmax\)-linear, and the quadratic erosion scale-space is \(\tropmin\)-linear.

In \cite{pauwels1995extended,duits2004axioms} it is argued in an axiomatic way that the only linear scale-space representations correspond to solutions of the fractional diffusion (pseudo-)PDE system \eqref{eq:intro_frac_diff_pde}.
In a completely analogous manner, one can show \cite{martin2003families,schmidt2016morphological} that the only morphological scale-spaces, that being scale-spaces that are \(\tropmax\) or \(\tropmin\) linear, correspond to (viscosity) solutions of the \(\alpha\)-dilation \eqref{eq:intro_dilation_pde} and \(\alpha\)-erosion PDE \eqref{eq:intro_erosion_pde}.

{\if\arxiv0\color{RoyalBlue}\fi
These facts reveal something important: to discover new PDEs that can be used in PDE-based neural network we \textit{need} to generalize scale-space theory to semifields other than just \(L_{\geq 0}\), \(\tropmax\), and \(\tropmin\).
}

{\if\arxiv0\color{RoyalBlue}\fi

\subsection{Related Work} \label{sec:related_work}

    In this section we provide a nonexhaustive list of related scale-space literature.

    \textbf{Linear Scale-Spaces.}
    In \cite{iijima1959basic} the first \cite{weickert1999linear} axiomatic treatment of linear scale-space theory is presented.
    Axioms such as linearity, roto-translation equivariance, one-parameter semigroup property, and most notably, the scale equivariance, can all be found in Iijima's article, axioms we will also be using.
    Iijima shows that the Gaussian scale-space arises from his axioms, and that the Laplacian generates it.
    In \cite{pauwels1995extended,duits2004axioms} an extended class of linear scale-spaces is explored. 
    They derive that (fractional) powers of the Laplacian  \eqref{eq:intro_frac_diff_pde} are valid linear scale-space generators.
    Analysis of the scale-space axioms in the Fourier domain is extensively used, an approach we apply in the broader semifield setting.

    \textbf{Morphological Scale-Spaces.}
    In \cite{brockett1992evolution} it was shown for the first time \cite{heijmans2002algebraic} that morphological operators like dilations and erosions in image processing can be described in terms of PDEs.
    In \cite{dorst1995morphological} the slope transform is shown to be the morphological counterpart of the Fourier transform, and related to the Legendre-Fenchel transform. 
    The semifield Fourier transform we introduce reduces to the Legendre-Fenchel transform in the tropical semifield cases.
    In \cite{burgeth2005explanation,schmidt2016morphological} a connection between linear and morphological scale-spaces is described using the Cram\'{e}r transform.
    The Cram\'{e}r transform gives us a way to translate between the kernels of the linear and morphological scale-spaces.
    We will show that the kernels of all semifield scale-spaces have the same form in the Fourier domain (this being our main theorem), illuminating further the connection between the linear and morphological world.
    
    \textbf{Other Scale-Space Theory.}
    In \cite{alvarez1993axioms} an axiomatic approach to PDE-based scale-spaces is described. 
    The strength of this approach is that it also includes mean curvatures flows \cite{sapirobook} as highly powerful non-linear PDEs (also on Lie groups \cite{citti2016}). 
    Solutions of such non-linear PDEs may be solved with median filtering \cite{guichard1997}, however, they lack a semifield structure (taking mean/median are not associative binary operations), thus falling outside the scope of the theory presented here.
    In \cite{florack2001nonlinear} nonlinear scale-spaces are obtained by performing a monotonic transformation (known as a ``Cole-Hopf'' transform \cite[Ch.4.4]{evans2010partial}) on the grey-values of a standard linear scale-space and deducing what nonlinear PDE corresponds to the obtained evolution. 
    This transformation neatly bridges linear, logarithmic, and in the extreme cases, morphological scale-spaces, and we will also use this link.
    In \cite{heijmans2002algebraic} an algebraic framework for scale-spaces is given. 
    Importantly, their perspective is (initially) totally divorced from PDEs, convolutions, and kernels, and focuses solely on the evolution operator.
    We will define our semifield scale-spaces in the same manner.

    \textbf{Scale-Spaces in Machine Learning.}
    In \cite{jacobsen2016structured,pnitea2021resolution,tomen2021deep,saldanha2021frequency} the Gaussian scale-space and its spatial (fractional) derivatives are employed to design architectures that can learn filters at the appropriate scale by optimizing the scale parameter(s) during training.
    Architectures such as \cite{basting2023scale,luan2018gabor,romerobruintjes2021flexconv} also learn scale parameters.
    PDE-G-CNNs learn Riemannian metric tensor fields, which, due to the scale-equivariance of scale-space representations, is equivalent to learning scale-parameters.
    In this sense PDE-based neural networks are closely related to these ``scale learning'' architectures.
    In \cite{worall2019deep,sosnovik2020scale,lindeberg2022scale,sangalli2021scale} (discrete) Gaussian and morphological scale-space representations are used to create architectures that are scale equivariant.

}

\section{Semifield Theory} \label{sec:semifield_theory}

In this section we define semifields (\Cref{def:semifield}) and all mathematical structures and operations made from them.
This includes important concepts such as 
semimodules (\Cref{def:semimodule}), 
linearity (\Cref{def:semifield_linear}), 
measures (\Cref{def:semifield_measure}), 
integration (\Cref{def:semifield_integration}), 
convolution (\Cref{def:semifield_convolution}),
and Fourier transforms (\Cref{def:semifield_fourier_transform}).

\subsection{Semifield, Semimodules \& Linearity}

\begin{definition}[Semifield] \label{def:semifield}
    A (commutative) semifield \(R\) is a tuple $R=(R, \rz, \ro, \rp, \rt)$ where $\rp, \rt: R \times R \to R$ are two commutative and associative binary operations on \(R\) called semifield addition and multiplication, such that for all \(a,b,c \in R\):
    \begin{itemize}[label={}]
        \item \(a \rp \rz = a\),
        \item \(a \rt \ro = a\),   
        \item \(a \neq \rz : \exists a^{-1} : a \rt a^{-1} = \ro \),
        \item \(a \rt \rz = \rz\),
        \item \(a \rt (b \rp c)= (a \rt b) \rp (a \rt b)\).
    \end{itemize}
\end{definition}

In other words, a semifield is a field where we do not require to have ``negative elements'', that being additive inverses.

Throughout the article we will denote an arbitrary semifield-related operation with a circled version of the most closely related linear counterpart. Some example symbols are \(\rp\), \(\rt\), \(\rint\), and \(\rconv\), which respectively correspond to semifield addition, multiplication, integration, and convolution.

In this article we mainly consider the following semifields:
\begin{definition}[Semifields of Interest] 
    \label{def:semifields_interest}
    ~
    \begin{enumerate}[label={\alph*)}]
        \item The linear semifield \(L=(\bbR, 0, 1, +, \times)\) with the usual addition \(+\) and multiplication \(\times\). 
        We can restrict the set to \(\bbR_{\geq0}\) and we write \(L_{\geq 0}\) in that case. 

        \item The root semifields \(R_p = (\bbR_{\geq0}, 0, 1, \rp_p, \times)\) with $p\neq0$ where semifield addition is \(a \rp_p b := \sqrt[p]{a^p + b^p}\), and where semifield multiplication is normal multiplication.
        
        \item The logarithmic semifields \(L_\mu = (\bbR \cup \{\pm \infty\}, \pm \infty, 0, \rp_\mu, +)\) with $\mu\neq0$ where semifield addition is \(a \rp_\mu b := \tfrac{1}{\mu} \ln(e^{\mu a} + e^{\mu b})\), and where semifield multiplication is normal addition.
        If \(\mu>0\) we add \(-\infty\) to the ring to act as the additive identity, and if \(\mu<0\) we add \(+\infty\).

        \item The tropical
        max semifield \(\tropmax = (\bbR\cup\{-\infty\}, -\infty, 0, \max, +)\), where \(\max\) is semifield addition, and usual addition is semifield multiplication. 
        
        \item The tropical min semifield \(\tropmin = (\bbR\cup\{\infty\}, \infty, 0, \min, +)\), where \(\min\) is semifield addition, and usual addition is semifield multiplication. 
    \end{enumerate}
\end{definition}

The family of logarithmic semifields is interesting as in the limits one has:
\begin{equation}
\begin{split}
    \lim_{\mu \to +\infty} a \rp_\mu b &= \max(a, b), \\
    \lim_{\mu \to -\infty} a \rp_\mu b &= \min(a, b).
\end{split}
\end{equation}
Thereby, the family of logarithmic semifields \(L_\mu\) relate to the tropical semifields \(\tropmaxmin\) in the extreme cases of \(\mu\). 

\begin{definition}[Semifield Isomorphism] 
    \label{def:semifield_morphisms}
    Let \(R=(R, \rz, \ro, \rp, \rt)\) and \(\tilde R=(\tilde R, \tilde\rz, \tilde\ro, \tilde\rp, \tilde\rt)\) be two semifields. 
    A semifield isomorphism \(\varphi : R \to \tilde R\) is a bijective mapping that satisfies for all \(a,b \in R\):
    \begin{itemize}[label={}]
        \item \(\varphi(\rz) = \tilde\rz\),
        \item \(\varphi(\ro) = \tilde\ro\),
        \item \(\varphi(a \rp b) = \varphi(a) \mathbin{\tilde\rp} \varphi(b)\),
        \item \(\varphi(a \rt b) = \varphi(a) \mathbin{\tilde\rt} \varphi(b)\).
    \end{itemize}
    If there exists a semifield isomorphism between two semifields they are called isomorphic.
\end{definition}




\begin{proposition}[Some Semifields Isomorphism] 
    \label{res:semifield_isomorphism}
    ~
    \begin{itemize}
        \item The root semifields \(R_p\) are isomorphic to the nonnegative linear semifield \(L_{\geq 0}\), with the isomorphism \(\varphi_p : R_p \to L_{\geq 0}\) being \(\varphi_p(x) = x^p\).
        
        \item The logarithmic semifields \(L_\mu\) are isomorphic to the nonnegative linear semifield \(L_{\geq 0}\), with the isomorphism \(\varphi_\mu : L_\mu \to L_{\geq 0}\) being \(\varphi_\mu(x) = e^{\mu x}\).

        \item The tropical max semifield \(\tropmax\) is isomorphic to the tropical min semifield \(\tropmin\), with the isomorphism \(\varphi: \tropmax \to \tropmin\) being \(\varphi(x) = -x\).

       \item Informally, in the limit \(\mu \to \pm \infty\) the logarithmic semifields \(L_\mu\) ``converge'' to the tropical semifields \(\tropmaxmin\).
    \end{itemize}
\end{proposition}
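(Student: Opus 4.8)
The plan is to treat the first three bullets as direct verifications against \Cref{def:semifield_morphisms}: for each proposed map I would confirm that it is a bijection, that it sends \(\rz\) to \(\tilde\rz\) and \(\ro\) to \(\tilde\ro\), and that it intertwines the two additions and the two multiplications. The fourth bullet is informal and is handled by the pointwise limit computation already displayed just above the proposition, so I would reproduce that separately rather than as an isomorphism check.

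First I would dispatch the root case. The map \(\varphi_p(x) = x^p\) is a bijection of \(\bbR_{\geq 0}\) onto itself (for \(p>0\)) with inverse \(x \mapsto x^{1/p}\), and it sends \(0 \mapsto 0\) and \(1 \mapsto 1\). The two operation-preservation identities are then immediate substitutions: \(\varphi_p(a \rp_p b) = \Par{\sqrt[p]{a^p + b^p}}^p = a^p + b^p = \varphi_p(a) + \varphi_p(b)\), and \(\varphi_p(a \rt b) = (ab)^p = a^p b^p = \varphi_p(a) \rt \varphi_p(b)\), where on the target side \(\rp,\rt\) are ordinary \(+,\times\). The logarithmic case is structurally identical: \(\varphi_\mu(x) = e^{\mu x}\) is a bijection from \(\bbR \cup \{\pm\infty\}\) onto \(\bbR_{\geq 0}\) with inverse \(x \mapsto \tfrac{1}{\mu}\ln x\), sending the additive identity (\(-\infty\) when \(\mu>0\), \(+\infty\) when \(\mu<0\)) to \(0\) and \(0 \mapsto 1\); the addition law telescopes as \(\varphi_\mu(a \rp_\mu b) = \exp\!\Par{\ln\Par{e^{\mu a} + e^{\mu b}}} = e^{\mu a} + e^{\mu b}\), and multiplication becomes \(\varphi_\mu(a + b) = e^{\mu a} e^{\mu b}\).

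For the tropical bullet, \(\varphi(x) = -x\) is an order-reversing bijection of \(\bbR\) that exchanges \(-\infty\) and \(+\infty\); this gives \(\varphi(\max(a,b)) = \min(-a,-b) = \min(\varphi(a),\varphi(b))\) and \(\varphi(a+b) = (-a)+(-b)\), while \(-\infty \mapsto \infty\) and \(0 \mapsto 0\) handle the two identities. For the final, informal claim I would use the single identity \(\tfrac{1}{\mu}\ln\Par{e^{\mu a} + e^{\mu b}} = \max(a,b) + \tfrac{1}{\mu}\ln\Par{1 + e^{-\mu \Abs{a-b}}}\), from which \(\mu \to +\infty\) drives the correction term to \(0\) (recovering \(\max\)) and \(\mu \to -\infty\) drives it to \(-\Abs{a-b}\), so that \(\max(a,b) - \Abs{a-b} = \min(a,b)\); semifield multiplication is \(+\) throughout and is unchanged.

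I do not expect a genuine obstacle: every step is a substitution into the definition. The only points that deserve explicit mention rather than silent treatment are identifying the correct additive identity element in each extended-real semifield and tracking the boundary behaviour (how \(0\) and \(\pm\infty\) are exchanged) under \(\varphi_p\) and \(\varphi_\mu\), particularly for \(p<0\) and for the two sign regimes of \(\mu\).
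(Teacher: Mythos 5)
Your proposal is correct: the paper states this proposition without any proof, treating it as a routine verification against \Cref{def:semifield_morphisms}, and your direct check of bijectivity, preservation of \(\rz\) and \(\ro\), and preservation of both operations for each of the three maps --- together with the identity \(\tfrac{1}{\mu}\ln\left(e^{\mu a}+e^{\mu b}\right)=\max(a,b)+\tfrac{1}{\mu}\ln\left(1+e^{-\mu|a-b|}\right)\) for the informal limit --- is exactly that routine verification. The one corner case you flag but leave unresolved (the \(p<0\) regime, where \(0^p\) needs a convention for \(\varphi_p\) to fix \(0\)) is equally glossed over by the paper's own definitions, so nothing is missing relative to the paper.
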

The above proposition shows that although we defined five semifields of interest, as listed in \Cref{def:semifields_interest}, we are, in fact, only working with 2 non-isomorphic ones.


For the purpose of analysis we endow the semifields with a metric.
Normally, a linear structure \(X\) is endowed with a \textit{norm} \(\|\cdot\| : X \to \bbR_{\geq 0}\) and afterwards a metric \(d\) is defined through \(d(a,b) = \|a - b\|\).
This is not possible in our semifield setting as we do not necessarily have additive inverses (consider for example the tropical semifields).
\begin{definition}[Semifield Metric] \label{def:semifield_metric}
    Let \(R\) be a semifield. 
    A semifield metric \(\rho : R \times R \to \bbR_{\geq 0}\) is a metric such that for all \(a,b,c \in R\) we have:
    \begin{itemize}[label={}]
        \item \(\rho(c \rp a, c \rp b) \leq \rho(a,b)\),
        \item \(\rho(c \rt a, c \rt b) = \rho(c,\rz) \rho(a,b)\).
    \end{itemize}
\end{definition}
These properties are direct generalizations of the common notions of translation invariance and absolute homogeneity.
More importantly, they ensure that semifield addition \(\rp\) and multiplication \(\rt\) are continuous (w.r.t the metric).

\begin{definition}[Employed Semifield Metrics] 
    \label{def:employed_semifield_metrics}
    ~
    \begin{enumerate}[label={\alph*)}]
        \item In the linear semifield \(L\) case we use the metric \(\rho_L(a,b) = |a - b|\).
        \item In the root semifields \(R_p\) case we use the metric \(\rho_{R_p}(a,b) = |a^p - b^p|\).
        \item In the logarithmic semifields \(L_\mu\) case we use the metric \(\rho_{L_\mu}(a,b) = |e^{\mu a} - e^{\mu b}|\).
        \item In the tropical max semifield \(\tropmax\) case we use the metric \(\rho_{\tropmax}(a,b) = |e^a-e^b|\).
        \item In the tropical min semifield \(\tropmin\) case we use the metric \(\rho_{\tropmin}(a,b) = |e^{-a}-e^{-b}|\).
    \end{enumerate}
\end{definition}

The root and logarithmic semifield metrics are natural as they borrow the metric on the linear semifield \(L\) through the isomorphisms \(\varphi_p(x) = x^p\) and \(\varphi_\mu(x) = e^{\mu x}\), see \Cref{res:semifield_isomorphism}.
Similarly, the tropical min and max semifield metrics relate by their isomorphism \(\varphi(x) = -x\).

\begin{definition}[One-Dimensional Semifield] 
    \label{def:one_dimensional_semifield}
    Let \(R\) be a metric semifield. If \(R\) as a topological space (with the topology induced by the metric) is locally homeomorphic to one-dimensional Euclidean space we say it is one-dimensional.
\end{definition}



Just as mathematical rings and fields can be used to create modules and vector spaces, we define an analogous structure called a semimodule using semifields.

\begin{definition}[Semimodule] \label{def:semimodule}
    Let \(R=(R, \rz, \ro, \rp, \rt)\) be a semifield.
    An \(R\)-semimodule \(V=(V, \rp_V, \rt_V, \rz_V)\) over \(R\) is a set with a commutative and associative binary operation \(\rp_V : V \times V \to V\) called addition, and another binary operation \(\rt_V : R \times V \to V\) called (left) scalar multiplication, such that for all \(a,b \in R\) and \(u,v \in V\):
    \begin{itemize}[label={}]
        \item \(v \rp_V \rz_V = v\),
        \item \(\ro \rt_V v = v\),
        \item \(\rz \rt_V v = \rz_V\),
        \item \((a \rp b) \rt_V v = (a \rt_V v) \rp_V (b \rt_V v)  \),
        \item \(a \rt_V (u \rp_V v) = (a \rt_V u) \rp_V (a \rt_V v) \).
    \end{itemize}
\end{definition}
We do not write the subscript \(V\) on the operations of a semimodule \(V\) from here on out, as is usual.

Now that we have semimodules we can speak of semifield-linearity in its full generality.
The notion of semifield-linearity is totally analogous to the normal notion of linearity, therefore the name.

\begin{definition}[Semifield Linear] \label{def:semifield_linear}
    Let \(V_1, V_2\) be two semimodules over the same semifield \(R\). A mapping \(\varphi : V_1 \to V_2\) is called \(R\)-linear if for all $a,b \in R$ and $u,v \in V_1$ we have:
    \begin{equation}
        \varphi(a \rt u \rp b \rt v) = a \rt \varphi(u) \rp b \rt \varphi(v).
    \end{equation}
\end{definition}

\subsection{Functions, Measurability \& Integration}

The prototypical semimodule over a semifield is the space of all semifield-valued functions on a set.

\begin{definition}[Function Semimodule]
    Let \(R\) be a semifield.
    Consider the set \(F(\bbR^2,R)\) of all \(R\)-valued functions \(f :\bbR^2 \to R\).
    The set \(F(\bbR^2, R)\) forms an \(R\)-semimodule under point-wise semifield addition and multiplication.
    The semimodule \(F(\bbR^2, R)\) is called the function semimodule over \(\bbR^2\).
    More generally, any subsemimodule of \(F(\bbR^2, R)\) is also called a function semimodule over \(\bbR^2\).
    
\end{definition}

On the function semimodule \(F(\bbR^2, R)\) we define the following natural \(R\)-linear domain transformation operators:

\begin{definition}[Operators on Function Semimodule]
    ~
    \begin{itemize}
        \item \textbf{Translation Operator:}
        For all translation vectors \(v \in \bbR^2\) we define the translation operator $\TO_v$
        \begin{equation} \label{eq:translation_operator}
            (\TO_{v}f)(x) := f(-v+x).
        \end{equation}
            
        \item \textbf{Rotoreflection Operator:}
        For all orthonormal matrices \(Q \in \bbR^{2 \times 2}\) we define the rotoreflection operator $\RO_Q$
        \begin{equation} \label{eq:rotoreflection_operator}
            (\RO_{Q} f)(x) := f(Q^{-1}x).
        \end{equation}
        
        \item \textbf{Scaling Operator:}
        For all scalings \(s \in \bbR_{>0}\) we define the scaling operator $\SO_S$
        \begin{equation} \label{eq:scaling_operator}
            (\SO_{s} f)(x) := f \Par{ \frac{x}{s} }.
        \end{equation}

        \item \textbf{Pointwise Operator:}
        For all \(\varphi\) we define the pointwise operator \(\PO_\varphi\)
        \begin{equation} \label{eq:pointwise_operator}
            (\PO_\varphi(f))(x) := \varphi(f(x)).
        \end{equation}
    \end{itemize}
\end{definition}


To avoid pathological cases, we introduce standard measure theoretical concepts.

\begin{definition}[Measurable Space \& Set]
    Let $(X,d)$ be a complete metric space.
    We equip the space $X$ with the natural Borel sigma algebra $B$ induced by the metric $d$.
    This turns $X$ into a measurable space.
    A measurable set is any element of the Borel sigma algebra $B$.
\end{definition}

With the above definition we can turn both \(\bbR^2\) and any metric semifield \(R\) into a measurable space.

\begin{definition}[Measurable Function]
    Let $R$ be a metric semifield and $f : \bbR^2 \to R$ a function.
    The function $f$ is called a measurable function if the pre-image of any measurable set is a measurable set. 
\end{definition}

The set of measurable functions is broad enough to be well-behaved under pointwise limits, as the following lemma describes.

\begin{lemma} \label{res:pointwise_limit_measurable}
    Let $R$ be a metric semifield, and let $f(x) = lim_{n \to \infty} f_n(x)$ be the pointwise limit of measurable functions $f_n : \bbR^2 \to R$. 
    Then $f$ is also measurable.
\end{lemma}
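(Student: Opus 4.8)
The plan is to prove measurability of $f$ by verifying it only on a generating class of the Borel $\sigma$-algebra of $R$, namely the open sets. Since taking pre-images commutes with arbitrary unions, intersections, and complements, the collection $\{A \subseteq R : f^{-1}(A) \text{ is measurable}\}$ is a $\sigma$-algebra; if it contains every open set, then it contains the entire Borel $\sigma$-algebra, and $f$ is measurable. Thus it suffices to show that $f^{-1}(U)$ is a measurable subset of $\bbR^2$ for each open $U \subseteq R$.

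The core step is a set-theoretic identity that expresses $f^{-1}(U)$ through the $f_n$ using only the metric $\rho$ on $R$. I would introduce the inner-approximating open sets $U_k := \{y \in R : \rho(y, R \setminus U) > 1/k\}$ for $k \in \bbN$, where $\rho(y,A) := \inf_{a \in A} \rho(y,a)$ denotes the distance from $y$ to a set $A$. These sets increase to $U$, and the claim is
\[
    f^{-1}(U) = \bigcup_{k=1}^\infty \bigcup_{N=1}^\infty \bigcap_{n=N}^\infty f_n^{-1}(U_k).
\]
The inclusion $\supseteq$ relies on continuity of $y \mapsto \rho(y, R \setminus U)$: if $f_n(x) \in U_k$ for all $n \geq N$, then passing to the limit gives $\rho(f(x), R \setminus U) \geq 1/k > 0$, so $f(x) \in U$. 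The inclusion $\subseteq$ relies on pointwise convergence: if $f(x) \in U$, then $\rho(f(x), R \setminus U) > 2/k$ for some $k$, and for all sufficiently large $n$ we have $\rho(f_n(x), f(x)) < 1/k$, whence $\rho(f_n(x), R \setminus U) > 1/k$ by the triangle inequality, i.e. $f_n(x) \in U_k$ eventually.

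With the identity established the conclusion is routine: each $U_k$ is open and hence a Borel (measurable) set in $R$, so each $f_n^{-1}(U_k) \subseteq \bbR^2$ is measurable by the measurability of $f_n$. The right-hand side is then a countable union of countable intersections of measurable sets, hence measurable, and therefore $f^{-1}(U)$ is measurable, as required.

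I do not anticipate a genuine obstacle, as this is the standard argument for metric-space-valued functions. The only point demanding care is that the entire proof must be phrased \emph{intrinsically} in terms of the metric $\rho$, because a general semifield $R$ — for instance $\tropmax$ or $\tropmin$ — has no additive inverses and hence no subtraction to appeal to; in particular one cannot reduce to the scalar criterion $\{x : g(x) < a\}$ used for real-valued functions. Neither completeness nor one-dimensionality of $R$ is needed here; only its metric topology and the continuity of the distance-to-a-set function enter the argument.
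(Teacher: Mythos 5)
Your proof is correct and complete: the good-sets principle reduces the claim to open sets, and the identity $f^{-1}(U) = \bigcup_{k=1}^{\infty} \bigcup_{N=1}^{\infty} \bigcap_{n \geq N} f_n^{-1}(U_k)$ with $U_k = \{y \in R : \rho(y, R \setminus U) > 1/k\}$ is established exactly as you argue, using only the metric topology of $R$. The paper does not actually write out a proof of this lemma --- it defers to a cited reference for a generalization --- and your argument is precisely the standard one for metric-space-valued functions that such references use, so your self-contained treatment matches (and indeed supplies) the intended proof.
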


A proof of a generalization of this lemma can be found at \cite{saz2014measurability}.
But there exists an even stronger statement that describes measurable functions as pointwise limits of indicator functions simple functions, which are made from indicator functions.

\begin{definition}[Indicator Function]
    Let \(R\) be a semifield and \(A \subseteq \bbR^2\) any set.
    We define the indicator function \(\ro_A\) of \(A\) as:
    \begin{equation}
        \ro_A(x) = \begin{cases}
            \ro & \text{ if } x \in A\\
            \rz & \text{ otherwise }
        \end{cases}
    \end{equation}
\end{definition}

\begin{definition}[Simple Function]
    Let \(R\) be a semifield.
    A simple function \(s : \bbR^2 \to R\) is a finite \(R\)-linear combination of indicator functions of measurable sets $A_i$.
    \begin{equation}
        s = \bigoplus_{i=1}^n a_i \rt \ro_{A_i},
    \end{equation}
    where each \(a_i \in R\).
\end{definition}

The link between measurable and simple functions is a follows.

\begin{lemma}
    Let \(R\) be a metric semifield and consider semifield-valued functions on \(\bbR^2\).
    The pointwise limit of a sequence of simple functions is measurable.
    Every measurable function is the pointwise limit of a sequence of simple functions.
\end{lemma}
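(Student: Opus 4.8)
The plan is to prove the two assertions separately. The forward direction — that simple functions are measurable, so that pointwise limits of simple functions are measurable — reduces quickly to \Cref{res:pointwise_limit_measurable}, so the substance lies in the converse, for which I would give an explicit approximation by a ``nearest dense point'' construction.

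For the first assertion I would observe that a simple function $s = \bigoplus_{i=1}^n a_i \rt \ro_{A_i}$ takes only finitely many values. Indeed, for each subset $S \subseteq \{1,\dots,n\}$ the cell $E_S = \bigcap_{i \in S} A_i \cap \bigcap_{i \notin S} A_i^c$ is measurable, these cells partition $\bbR^2$, and on $E_S$ one has $s = \bigoplus_{i \in S} a_i \in R$ using $a \rt \ro = a$, $a \rt \rz = \rz$, and $a \rp \rz = a$. A function that is constant on each cell of a finite measurable partition has, as preimage of any Borel set, a finite union of these cells, and is therefore measurable. Since a pointwise limit of measurable functions is measurable by \Cref{res:pointwise_limit_measurable}, the pointwise limit of a sequence of simple functions is measurable.

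For the converse I would exploit that $R$ is a one-dimensional metric semifield and hence separable, so that I may fix a countable dense sequence $(r_k)_{k \ge 1}$ in $R$. Given a measurable $f : \bbR^2 \to R$, for each $n$ I would let $k(n,x)$ be the smallest index $k \le n$ minimizing $\rho(f(x), r_k)$, and set $s_n(x) = r_{k(n,x)}$. Each $s_n$ takes values in the finite set $\{r_1,\dots,r_n\}$, and its level set $A_k^{(n)} = s_n^{-1}(r_k)$ is cut out by finitely many conditions of the form $\rho(f(x),r_k) \le \rho(f(x),r_j)$ and $\rho(f(x),r_k) < \rho(f(x),r_j)$. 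Since each map $x \mapsto \rho(f(x),r_j)$ is a measurable real-valued function (the composition of the measurable $f$ with the continuous slice $\rho(\cdot, r_j)$), every $A_k^{(n)}$ is measurable, so $s_n = \bigoplus_{k=1}^n r_k \rt \ro_{A_k^{(n)}}$ is simple.

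It remains to check pointwise convergence, which is where density enters: for every $x$ one has $\rho(s_n(x), f(x)) = \min_{k \le n} \rho(r_k, f(x))$, a sequence nonincreasing in $n$ whose limit equals $\inf_k \rho(r_k, f(x)) = 0$ by density of $(r_k)$. Hence $s_n \to f$ pointwise. The main obstacle I anticipate is not the construction itself but justifying separability of $R$ from \Cref{def:one_dimensional_semifield}: being locally homeomorphic to $\bbR$ does not by itself force second countability (witness pathologies like the long line), so I would either invoke that each semifield of interest is, as a metric space, an interval of the (extended) real line and hence separable, or fold separability into the standing one-dimensionality assumption. A secondary point needing care is verifying that the argmin level sets are genuinely Borel, which is precisely why I order the dense points and break ties by least index.
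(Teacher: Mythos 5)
Your proposal is correct and follows essentially the paper's own route: the paper likewise settles the first assertion by citing \Cref{res:pointwise_limit_measurable} together with the (unelaborated) measurability of simple functions, and for the converse says only that it ``goes via a straightforward construction'' --- your nearest-dense-point approximation, with least-index tie-breaking to keep the level sets Borel, is exactly that construction made explicit, and your cell decomposition $E_S$ is a clean way to verify that simple functions are measurable. Your separability caveat is well taken and is in fact a gap in the lemma as stated, not in your argument: the lemma assumes only a metric semifield, yet the second assertion genuinely needs separability of $R$ (or at least of the range of $f$), since the closure of the union of the ranges of countably many simple functions is always separable; for the paper's semifields of interest, each of which is metrically an interval of the extended real line, this holds automatically, which is presumably why the paper does not remark on it.
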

\begin{proof}
    The pointwise limit of a sequence of simple functions being measurable follows immediately from \Cref{res:pointwise_limit_measurable}, as simple functions are measurable. 
    Showing that every measurable function is the pointwise limit of simple functions goes via a straightforward construction.
\end{proof}

For every semifield there is a natural associated class of functions.
We would like to specify this class in an axiomatic sense.
This is where the sum-approachable definition comes into play.
It is a restriction of the well-known statement that ``every measurable function is the limit of simple functions''.

\begin{definition}[Sum-Approachable]
    Let \(R\) be a metric semifield.
    A function \(f : \bbR^2 \to R\) is sum-approachable if there exists \(a_i \in R\) and \(A_i \subseteq \bbR^2\) open such that we have
    \begin{equation}
        f(x) = \lim_{n \to \infty} \bigoplus_{i=1}^n a_i \rt \ro_{A_i}(x).
    \end{equation}
    The semimodule of all sum-approachable functions $f : \bbR^2 \to R$ is denoted by $S(\bbR^2, R)$.
\end{definition}

There are two differences between sum-approachable and measurable: we only consider open sets, not measurable sets, and we have a limit of a semifield sum of indicator functions, not just a limit.

A function being sum-approachable is more restrictive than one might think at first sight.
The following lemma illustrates this by showing that in the tropical cases the sum-approachable functions enjoy the property of being semicontinuous, something that does \textit{not} happen in the linear case.

\begin{lemma}
    A sum-approachable function  \(f : \bbR^2 \to \tropmax\) is lower semicontinuous.
    A sum-approachable function  \(f : \bbR^2 \to \tropmin\) is upper semicontinuous.
\end{lemma}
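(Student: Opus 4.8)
The plan is to unpack the sum-approachable definition in each tropical semifield and then reduce lower (resp. upper) semicontinuity to the openness of superlevel (resp. sublevel) sets, the point being that the sum-approachable definition supplies \emph{open} building blocks $A_i$.

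First I would treat the tropical max case. Since in $\tropmax$ semifield addition $\rp$ is $\max$ and multiplication $\rt$ is $+$ with $\rz = -\infty$, the partial sums $\bigoplus_{i=1}^n a_i \rt \ro_{A_i}$ are nondecreasing in $n$, so the defining limit is in fact a supremum,
\begin{equation}
    f(x) = \sup_{i \in \bbN} \big( a_i + \ro_{A_i}(x) \big),
\end{equation}
where each summand $a_i + \ro_{A_i}$ equals $a_i$ on $A_i$ and $-\infty$ off it. Conceptually one could note that each such summand is lower semicontinuous (its superlevel set is $A_i$ or $\emptyset$) and invoke the standard fact that a pointwise supremum of lower semicontinuous functions is lower semicontinuous; but I would prefer to verify it directly.

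The direct verification: for any $c \in \bbR$ we have $f(x) > c$ exactly when some index $i$ satisfies both $x \in A_i$ and $a_i > c$, so
\begin{equation}
    \{x : f(x) > c\} = \bigcup_{i\,:\,a_i > c} A_i,
\end{equation}
a union of open sets and hence open. As openness of every superlevel set is equivalent to lower semicontinuity, this establishes the first claim.

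For the tropical min statement I would invoke the isomorphism $\varphi(x) = -x$ between $\tropmax$ and $\tropmin$ from \Cref{res:semifield_isomorphism}: it carries sum-approachable functions to sum-approachable functions and interchanges lower and upper semicontinuity, so the min case follows at once. Alternatively, the dual computation gives $f(x) = \inf_i (a_i + \ro_{A_i}(x))$ with $\{x : f(x) < c\} = \bigcup_{i : a_i < c} A_i$ open. I do not anticipate a serious obstacle here; the only care needed is bookkeeping the tropical operations (max $=$ addition, the indicator taking value $\rz = -\infty$ off $A$), and the genuine content of the lemma is simply the observation that requiring the $A_i$ to be \emph{open} is exactly what forces the superlevel (resp.\ sublevel) sets of the limit to be open.
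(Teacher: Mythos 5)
Your proof is correct and takes essentially the same approach as the paper: both identify the sum-approachable limit in \(\tropmax\) as a pointwise supremum of the elementary functions \(a_i \rt \ro_{A_i}\), which are lower semicontinuous precisely because the \(A_i\) are open, and both dispatch the \(\tropmin\) case by duality. The only cosmetic difference is that your superlevel-set computation \(\{x : f(x) > c\} = \bigcup_{i : a_i > c} A_i\) inlines the standard sup-of-l.s.c.\ fact, which the paper instead proves by an \(\varepsilon\)-\(\delta\) argument in a footnote.
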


\begin{proof}
    Consider the tropical max semifield case for the moment.
    Every indicator function \(\ro_A(x)\) with \(A \subseteq \bbR^2\) open is lower semicontinuous in this case.
    The limit-semifield-sum in the definition of sum-approachable turns into a pointwise supremum in this case.
    The pointwise supremum of lower semicontinuous functions is again lower semicontinuous\footnote{Let \(f(x) = \sup_n f_n(x)\). Let \(\varepsilon>0\) and \(x_0 \in \bbR^2\). Choose \(N\) such that \(f_N(x_0) > f(x_0) - \varepsilon/2\). Choose \(\delta > 0\) such that \(f_N(x) > f_N(x_0) - \varepsilon/2\) when \(|x - x_0| < \delta\). Then \(f(x) > f_N(x) > f_N(x_0) - \varepsilon/2 >  f(x_0) - \varepsilon\) \cite{matematleta2019semicontinuous}.}.
    Thus, every sum-approachable function \(f : \bbR^2 \to \tropmax\) is lower semicontinuous.
    \textit{Mutatis mutandis}, the exact same argument holds in the tropical min semifield case.
\end{proof}

\begin{definition}[Semifield Measure] 
    \label{def:semifield_measure}
    Let \(\Sigma\) be the Borel sigma algebra on \(\bbR^2\) and \(R=(R,\rz,\ro,\rp,\rt)\) a semifield.
    A semifield measure \(\mu : \Sigma \to R\) is a mapping that satisfies the following properties.
    \begin{itemize}
        \item \textbf{Nullity of Empty Set:} 
        \begin{equation}
            \mu(\varnothing)=\rz.
        \end{equation}
        \item \textbf{Disjoint Additivity:} 
        For all disjoint sets \(A, B \in \Sigma\):
        \begin{equation}
            \mu\Par{A \cup B}=\mu(A) \rp \mu(B),
        \end{equation}
        which we extend to countable collections of pairwise disjoint sets.
        \item \textbf{Unity of Unit Square:} 
        \begin{equation}
            \mu([0,1]^2) = \ro.
        \end{equation}
        \item \textbf{Translation Invariance:} 
        For all \(A \in \Sigma\) and \(v \in \bbR^2\):
        \begin{equation}
            \mu(A + v) = \mu(A).
        \end{equation}
        \item \textbf{Rotoreflection Invariance:} 
        For all \(A \in \Sigma\) and all orthonormal matrices \(Q \in \bbR^{2 \times 2}\):
        \begin{equation}
            \mu(QA) = \mu(A).
        \end{equation}
        \item \textbf{Scaling Equivariance:} 
        There exist a group homomorphism \(\chi : (\bbR_{>0},\times) \to (R\setminus\{\rz\},\rt)\) such that for all scalings \(s \in \bbR_{>0}\) and all \(A \in \Sigma\):
        \begin{equation}
            \mu(sA) = \chi(s) \rt \mu(A).
        \end{equation}
    \end{itemize}
\end{definition}

\begin{definition}[Employed Semifield Measure]
    \label{def:employed_measure}
    ~
    \begin{enumerate}[label={\alph*)}]
        \item In the linear semifield \(L\) case we use standard Lebesgue measure \(\lambda\). \(\mu_L(A) = \lambda(A)\).
        The scaling factor is \(\chi(s) = s^2\).
        
        \item In the root semifields \(R_p\) cases we use \(\mu_{R_p}(A) = \sqrt[p]{\lambda(A)}\).
        The scaling factor is \(\chi(s) = \sqrt[p]{s^2}\).
        
        \item In the logarithmic semifields \(L_\mu\) cases we use \(\mu_{L_\mu}(A) = \frac{1}{\mu} \ln \lambda(A)\).
        The scaling factor is \(\chi(s) = \frac{1}{\mu} \ln s^2\).
        
        \item In the tropical max semifield \(\tropmax\) case we use \(\mu_{\tropmax}(A) = 0\)\footnote{Remember that the semifield one \(\ro\) in the tropical max semifield \(\tropmax\) case is \(0\) (\Cref{def:semifields_interest}).}.
        The scaling factor is \(\chi(s) = 0\).
        
        \item In the tropical min semifield \(\tropmin\) case we use \(\mu_{\tropmin}(A) = 0\).
        The scaling factor is \(\chi(s) = 0\).
    \end{enumerate}
\end{definition}

\begin{definition}[Semifield Integration]   
    \label{def:semifield_integration}
    Let \(R\) be a metric semifield, \(S = S(\bbR^2,R)\) the space of sum-approachable functions, and \(\mu\) a semifield measure.
    Let \(\rint : (\dom(\rint) \subset S) \to R\) be a functional with the following properties.
    \begin{itemize}
        \item \textbf{Semifield Linearity:}
        For all \(a, b \in R\) and \(f, g \in \dom(\rint)\)
        \begin{equation} \label{eq:linearity_integration}
             \rint a \rt f \rp b \rt g = a \rt \left(\rint f \right) \rp b \rt \left(\rint g\right).
        \end{equation}

        \item \textbf{Indicator Function:}
        For all measurable sets \(A \subseteq \bbR^2\) we have
        \begin{equation} \label{eq:semifield_integration_indicator_function}
            \rint \ro_A = \mu(A).
        \end{equation}
        
        \item \textbf{Translation Invariance:}
        For all \(v \in \bbR^2\) and \(f \in \dom(\rint)\)
        \begin{equation} \label{eq:translation_integration}
            \rint \TO_v f =  \rint f.
        \end{equation}
        
        \item \textbf{Rotoreflection Invariance:}
        For all orthonormal matrices \(Q \in \bbR^{2 \times 2}\) and \(f \in \dom(\rint)\):
        \begin{equation} \label{eq:rotoreflection_integration}
            \rint \RO_Q f =  \rint f.
        \end{equation}

        \item \textbf{Scaling Equivariance:}
        For all scalings \(s \in \bbR_{>0}\) and \(f \in \dom(\rint)\):
        \begin{equation} \label{eq:scaling_integration}
            \rint \SO_s f = \chi(s) \rt \rint f,
        \end{equation}
        where \(\chi(s)\) is the scaling of the semifield measure \(\rint\) (\Cref{def:semifield_measure}).

        \item \textbf{Fubini:}
        For all \(f : \bbR^2 \times \bbR^2 \to R\) with both \(f(\cdot,y), f(x,\cdot) \in \dom(\oint)\), if one of the following integrals exists then they are equal:
        \begin{equation} \label{eq:fubini}
            \rint_y \rint_x f(x,y) = \rint_x \rint_y f(x,y).
        \end{equation}
    \end{itemize}
    We say that such a functional is a semifield integration.
    A function \(f\) that is in the domain of the semifield integration is called integrable.
\end{definition}

To emphasize over what slot we are integrating we may also write \(\rint_{x \in \bbR^2} f(x) = \rint f.\)
To emphasize over what semifield \(R\) the integration is taking place we may also write \(\rint = \rint^R\).

The first two properties of the semifield integration essentially nail down what the integration has to be.
That is, for every simple function \(s(x) = \bigoplus_{i=1}^n a_i \rt \ro_{A_i}(x)\) we have
\begin{equation}
    \oint s = \bigoplus_{i=1}^n a_i \rt \mu(A_i)
\end{equation}
by semifield linearity.
This then extends naturally to sum-approachable functions \(f(x) = \lim_{n \to \infty} \bigoplus_{i=1}^n a_i \rt \ro_{A_i}(x)\) by defining (with some caveats)
\begin{equation}
    \oint f = \lim_{n \to \infty} \bigoplus_{i=1}^n a_i \rt \mu(A_i).
\end{equation}
The caveats here being that we need requirements on the exact nature of the sequence of simple functions for the above to be well-defined.
To not get bogged down into the details we will just state what integration we will use for our relevant semifields, together with their domain of definition.
In the case of the tropical semifields we show in Appendix \ref{sec:tropical_integration_correct} that the upcoming semifield integration is indeed the correct one.

\begin{definition}[Employed Semifield Integration] 
    \label{def:employed_integration}
    ~
    \begin{enumerate}[label={\alph*)}]
        \item In the linear semifield \(L\) case we use standard Lebesgue integration
        \begin{equation}
            \rint^{L} f = \int_{x \in \bbR^2} f(x)\ {\rm d}x.
        \end{equation}
        The domain \(\dom(\rint^{L})\) is the space of Lebesgue integrable functions.
        
        \item In the root semifields \(R_p\) cases we use
        \begin{equation}
            \rint^{R_p} f 
            = \sqrt[p]{\int_{x \in \bbR^2} f(x)^p\ {\rm d}x}.
        \end{equation}
        The domain \(\dom(\rint^{R_p})\) consist of all functions \(f\) such that \(f^p\) is Lebesgue integrable.
        
        \item In the logarithmic semifields \(L_\mu\) cases we use
        \begin{equation}
            \rint^{L_\mu} f 
            = \frac{1}{\mu} \ln \int_{x \in \bbR^2} e^{\mu f(x)}\ {\rm d}x.
        \end{equation}
        The domain \(\dom(\rint^{L_\mu})\) consist of all functions \(f\) such that \(e^{\mu f}\) is Lebesgue integrable.
        
        \item In the tropical max semifield \(\tropmax\) case we use the supremum \(\sup\).
        \begin{equation}
            \rint^{\tropmax} f = \sup_{x \in \bbR^2} f(x) .
        \end{equation}
        The domain \(\dom(\rint^{\tropmax})\) consist of all functions \(f\) that are bounded from above.
        
        \item In the tropical min semifield \(\tropmin\) case we use the infimum \(\inf\).
        \begin{equation}
            \rint^{\tropmin} f = \inf_{x \in \bbR^2}  f(x).
        \end{equation}
        The domain \(\dom(\rint^{\tropmin})\) consists of all functions \(f\) that are bounded from below.
    \end{enumerate}
\end{definition}

The logarithmic and root semifield integration is natural as these semifields are isomorphic to the linear semifield, see \Cref{res:semifield_isomorphism}.
Additionally, the tropical max and min semifield integration are related through their isomorphism \(\varphi(x) = -x\). Indeed, one has \(\sup_{s \in S} s = - \inf_{s \in S}\{- s\}\).

\begin{definition}[Semifield Convolution] \label{def:semifield_convolution}
    Let \(R\) be a metric semifield.
    We define the semifield convolution \(\rconv\) of two integrable functions \(f, g \in \dom(\oint)\) as the new function \(f \rconv g \in \dom(\oint)\):
    \begin{equation}
        (f \rconv g)(x) := \rint_{y \in \bbR^2} f(x - y) \rt g(y).
    \end{equation}
\end{definition}

Showing that \(f \rconv g\) is indeed in \(\dom(\oint)\) is an immediate consequence of the Fubini property of semifield integration \eqref{eq:fubini}.
Moreover, the Fubini property gives us that the semifield convolution is associative:
\begin{equation}  
    \label{eq:semifield_convolution_associativity}
    f \rconv (g \rconv h) = (f \rconv g) \rconv h,
\end{equation}
and the translation invariance of semifield integration together with the commutativity of the semifield multiplication gives us that the semifield convolution is commutative.




We want to perform some analysis in our function spaces, so we need a (pseudo)metric \(\delta : S(\bbR^2,R) \times S(\bbR^2,R) \to \bbR_{\geq 0}\) (possibly with a restricted domain).
Similarly as before, when we introduced a metric on the semifields, we cannot make due with a norm on the function space as we have no additive inverses to turn the norm into a metric.

 \begin{definition}[Function Pseudometric] 
    \label{def:function_metric}
    Let \(R\) be a semifield with metric \(\rho\) and \(S = S(\bbR^2,R)\) the space of sum-approachable functions.
    A function (pseudo)metric \(\delta : S \times S \to \bbR_{\geq 0} \cup \{\infty\}\) is a (pseudo)metric such that for all \(f,g,h \in S\) and \(a \in R\) we have:
    \begin{itemize}[label={}]
        \item \(\delta(h \rp f, h \rp g)\leq\delta(f,g)\),
        \item \(\delta(a \rt f, a \rt g)=\rho(a,\rz)\delta(f,g)\).
    \end{itemize}
    We allow for the (pseudo)metric to return \(\infty\).
\end{definition}
Again, just as in \Cref{def:semifield_metric}, these properties are generalizations of the common notions of translation invariance and absolute homogeneity, and they ensure that both function addition \(\rp : S \times S \to S\) and function scalar multiplication \(\rt : R \times S \to S\) are continuous (in both slots).

\begin{definition}[Employed Function Pseudometric] 
    \label{def:employed_function_metric}
    ~
    \begin{enumerate}[label={\alph*)}]
        \item In the linear semifield \(L\) case we use
        \begin{equation}
            \delta_{L}(f,g) = \sqrt{\int_{\bbR^2} |f(x)-g(x)|^2 {\rm d}x}.
        \end{equation}
        \item In the root semifield \(R_p\) case we use
        \begin{equation}
             \delta_{R_p}(f,g) = \sqrt{\int_{\bbR^2} |f(x)^p-g(x)^p|^2 {\rm d}x}.
        \end{equation}
        \item In the logarithmic semifields \(L_\mu\) case we use
        \begin{equation}
             \delta_{L_\mu}(f,g) = \sqrt{\int_{\bbR^2} |e^{\mu f(x)}-e^{\mu g(x)}|^2 {\rm d}x}.
        \end{equation}
        \item In the tropical max semifield \(\tropmax\) case we use
        \begin{equation}
            \delta_{\tropmax}(f,g) = \sup_{x \in \bbR^2} |e^{f(x)}-e^{g(x)}|.
        \end{equation}
        \item In the tropical min semifield \(\tropmin\) case we use
        \begin{equation}
            \delta_{\tropmin}(f,g) = \sup_{x \in \bbR^2} |e^{-f(x)}-e^{-g(x)}|.
        \end{equation}
    \end{enumerate}
\end{definition}

Using the function (pseudo)metric we can make an appropriate function space:

\begin{definition}[Metric Function Space] 
    \label{def:normed_function_space}
    Let \(R\) be a metric semifield, \(S = S(\bbR^2,R)\) the space of sum-approachable functions, and \(\delta : S \times S \to \bbR_{\geq 0} \cup \{\infty\}\) a function (pseudo)metric.
    The function (pseudo)metric space \(H = H(\bbR^2, R, \delta)\) is defined as
    \begin{equation}
        H := \left\{ f \in S \mid \delta(\rz,f) < \infty \right\}.
    \end{equation}
    To turn it into an actual metric space we need to identify elements using the following natural equivalence relation \(\sim\).
    \begin{equation}
        f \sim g \iff \delta(f,g) = 0.
    \end{equation}
    This function space will be denoted with \(\cH = H/\sim\). 
\end{definition}


\begin{definition}[Employed Function Spaces]
    ~
    \begin{enumerate}[label={\alph*)}]
        \item In the linear semifield \(L\) case we have
        \begin{equation}
            \cH_L = \bbL^2(\bbR^2).
        \end{equation}
        \item In the root semifield \(R_p\) case we have
        \begin{equation}
            \cH_{R_p} = \{ f : \bbR^2 \to R_p \mid e^{\mu f} \in \bbL^2(\bbR^2) \}.
        \end{equation}
        \item In the logarithmic semifield \(L_\mu\) case we have
        \begin{equation}
            \cH_{L_\mu} = \{ f : \bbR^2 \to L_\mu \mid f^p \in \bbL^2(\bbR^2) \}.
        \end{equation}
        \item In the tropical max semifield \(\tropmax\) case we have
        \begin{equation}
            \cH_\tropmax = \{ f : \bbR^2 \to \tropmax \text{ l.s.c and b.f.a } \},
        \end{equation}
        where l.s.c means lower semicontinuous and b.f.a means bounded from above.
        \item In the tropical min semifield \(\tropmin\) case we have
        \begin{equation}
            \cH_\tropmin = \{ f : \bbR^2 \to \tropmin \text{ u.s.c. and b.f.b} \},
        \end{equation}
        where u.s.c means upper semicontinuous and b.f.b means bounded from below.
    \end{enumerate}
\end{definition}

\subsection{Fourier Transform}

We assume the existence of an injective Fourier transform that need only work on a very restricted class of semifield integrable functions.

\begin{definition}[Semifield Fourier Transform] \label{def:semifield_fourier_transform}
    Let \(R\) be a metric semifield.
    A semifield Fourier Transform $\cF_R : (\dom(\cF_R) \subseteq \dom(\oint)) \to \dom(\oint)$ is an operator satisfying (where we the drop the subscript \(R\) for conciseness):
    \begin{itemize}
        \item \textbf{Semifield Linearity:}
        For all \(a, b \in R\) and \(f, g \in \dom(\cF)\)
        \begin{equation}
            \cF \Par{a \rt f \rp b \rt g} = a \rt (\cF f) \rp b \rt (\cF g).
        \end{equation}
        
        \item \textbf{Convolution Property:}
        For all \(f, g \in \dom(\cF)\) with \(f \rconv g \in \dom(\cF)\)
        \begin{equation} 
            \label{eq:convolution_fourier}
            \cF(f \rconv g) = (\cF f) \otimes (\cF g).
        \end{equation} 

        \item \textbf{Rotoreflection Equivariance:}
        For all orthonormal matrices \(Q \in \bbR^{2 \times 2}\)
        \begin{equation} 
            \label{eq:rotoreflection_fourier}
            \cF \circ \RO_Q =  \RO_{Q^{-T}} \circ \cF.
        \end{equation}

        \item \textbf{Scaling Equivariance:}
        For all scalings \(s \in \bbR_{>0}\):
        \begin{equation} 
            \label{eq:scaling_fourier}
            \cF \circ \SO_s =  \chi(s) \rt \SO_{1/s} \circ \cF,
        \end{equation}
        where \(\chi(s)\) is the scaling of the semifield measure (\Cref{def:semifield_measure}).


        \item \textbf{Invertibility:}
        The domain \(\dom(\cF)\) is chosen such that the transform is injective and thus invertible on its image.
    \end{itemize}
\end{definition}

In the next definition we will specify the choice of semifield Fourier transform together with its appropriate choice of domain for all the semifields we consider (\Cref{def:semifields_interest}).
The choices we make here are sometimes more restrictive than strictly needed, but, as we will see in \Cref{sec:going_to_fourier_domain}, we only need to be able to take the semifield Fourier transform of a very ``small'' set of functions.

\begin{definition}[Employed Semifield Fourier Transform] 
    \label{def:employed_fourier}
    ~
    \begin{enumerate}[label={\alph*)}]
        \item In the linear semifield \(L\) case we use
        \begin{equation}
            (\cF_{L}f)(\omega) = \int_{\bbR^2} f(x) e^{-i \omega \cdot x} {\rm d} x.
        \end{equation}
        The domain \(\dom(\cF_{L})\) is chosen to be the space of even, continuous, and absolutely integrable functions, with absolutely integrable Fourier transforms.
        The inverse on its image is
        \begin{equation}
            (\cF_{L}^{-1} \hat f)(x) = \frac{1}{(2 \pi)^2} \int_{\bbR^2} \hat f(\omega) e^{i \omega \cdot x} {\rm d} \omega.
        \end{equation}
        
        \item In the root semifield \(R_p\) case we use
        \begin{equation}
            (\cF_{R_p}f)(\omega) = \sqrt[p]{\int_{ \bbR^2} f(x)^p e^{-i \omega \cdot x} {\rm d} x}.
        \end{equation}
        The domain  \(\dom(\cF_{R_p})\) is chosen such that \(f^p\) is in the domain of the linear Fourier transform \(\dom(\cF_{L})\), together with the restriction that the input of the \(p\)'th root is nonnegative.
        The inverse on its image is
        \begin{equation}
            (\cF^{-1}_{R_p} \hat{f})(x) = \sqrt[p]{ \frac{1}{(2 \pi)^{2}} \int_{\bbR^2} \hat{f}(\omega)^p  e^{i \omega \cdot x} {\rm d} \omega }.
        \end{equation}
        
        \item In the logarithmic semifield \(L_\mu\) case we use
        \begin{equation}
            (\cF_{L_\mu}f)(\omega) = \frac{1}{\mu} \ln \int_{x \in \bbR^2} e^{\mu f(x)} e^{-i \omega \cdot x} {\rm d} x.
        \end{equation}
        The domain  \(\dom(\cF_{L_\mu})\) is chosen such that \(e^{\mu f}\) is in the domain of the linear Fourier transform \(\dom(\cF_{L})\), together with the restriction that the input of the natural logarithm is positive.
        The inverse on its image is
        \begin{equation}
            (\cF^{-1}_{L_\mu}\hat{f})(x) = \frac{1}{\mu} \ln \Par{ \frac{1}{(2 \pi)^{2}} \int_{ \bbR^2} e^{\mu \hat{f}(\omega)} e^{i \omega \cdot x} {\rm d} \omega}.
        \end{equation}
        
        \item In the tropical max semifield \(\tropmax\) case we use
        \begin{equation} 
            (\cF_{\tropmax}f)(\omega) = \sup_{x \in \bbR^2} f(x) - \omega \cdot x.
        \end{equation}
        The domain \(\dom(\cF_{\tropmax})\) is chosen to be the space of even, continuous, concave, superlinear functions.
        The inverse on its image is
        \begin{equation}
            (\cF_{\tropmax}^{-1} \hat f)(x) = \inf_{\omega \in \bbR^2} \hat f(\omega) + \omega \cdot x.
        \end{equation}
        
        \item In the tropical min semifield \(\tropmin\) case we use
        \begin{equation}
            (\cF_{\tropmin}f)(\omega) = \inf_{x \in \bbR^2} f(x) - \omega \cdot x.
        \end{equation}
        The domain \(\dom(\cF_{\tropmin})\) is chosen to be the space of even, continuous, convex, superlinear functions.
        The inverse on its image is
        \begin{equation}
            (\cF_{\tropmin}^{-1} \hat f)(x) = \sup_{\omega \in \bbR^2} \hat f(\omega) + \omega \cdot x.
        \end{equation}
    \end{enumerate}
\end{definition}

A proof that these transforms satisfy the definition can be found in Appendix \ref{sec:employed_fourier_satisfy_definition}.

\begin{remark}
    \small
    The Laplace-like transform
    \begin{equation}
        (\cL f)(\omega) = \int_{\bbR^2} f(x) e^{-\omega \cdot x} {\rm d}x
    \end{equation}
    also satisfies \Cref{def:semifield_fourier_transform}.
    However, and this is also mentioned in \cite{schmidt2016morphological}, this transform is limited in its applicability because it is only finitely-valued for functions with super-exponential decay\footnote{This Laplace transform is two-sided thus resulting in this extreme condition. Also, we do not regard the transform as a conditionally convergent improper integral.}.
    Given this limitation of this transform, we instead use the normal Fourier transform.
\end{remark}

\begin{remark}
    \small 
    The above semifield Fourier transforms typically relate to transforms of the type
    \[
    (\mathcal{F} f)(\omega)=
    \rint f(x) \otimes \chi_{\omega}(x),
    \]
    where $\chi_{\omega}$ is an irreducible semifield-linear representation of $\bbR^2$, but we choose to express them in common Fourier/Fenchel transforms to keep a clear track of function space restrictions. 
\end{remark}

Even though we have used complex numbers in the Fourier transforms, the resulting transformed functions are always of the proper form \(\bbR^2 \to R\) due the domain consisting of even functions.
This means that we could have freely replaced the \(e^{-i \omega \cdot x}\) with \(\cos(\omega \cdot x)\). 
In other words, we could have instead used the \textit{Fourier cosine transform}.

\section{Semifield Scale-space} \label{sec:semifield_scale_space}

In this section we will state and motivate the semifield scale-space axioms, consider some examples semifield scale-spaces, and define what we mean with isomorphic scale-spaces. 

\subsection{Axioms} \label{sec:axioms}

In \cite{pauwels1995extended} it is stated that ``The only really nontrivial (and possibly too restrictive) assumption imposed on the scale-space operators, is that of linearity.''.
By generalizing to semifield linearity we sidestep this restrictive assumption, without making the theory too abstract to be practically useful.

Let us shortly motivate the semifield scale-space axioms from a machine-learning perspective (building upon similar findings in mathematical deep learning \cite{worall2019deep,sosnovik2020scale,sangalli2021scale,lindeberg2022scale}).
The semifield linearity (Axiom \ref{ax:r2_linearity}) together with the translation equivariance (Axiom \ref{ax:r2_equivariance}) will induce semifield convolutions that allow for fast parallel computations. 
The one-parameter semigroup property (Axiom \ref{ax:r2_one_parameter_semigroup}) together with the strong continuity (Axiom \ref{ax:r2_strong_continuity}) provides consistency and stability over evolution time.
The one-parameter semigroup property (Axiom \ref{ax:r2_one_parameter_semigroup}) together with the scaling equivariance (Axiom \ref{ax:r2_scaling}) allows us to constrain ourselves to a fixed end-time in a PDE sublayer, say \(t=1\), without loss of generality, further reducing the total parameter count.
The scaling, translation, and roto-reflection equivariance (Axioms \ref{ax:r2_scaling}, \ref{ax:r2_equivariance} and \ref{ax:r2_rotoreflection_equivariance}) of the scale-space allows for the design of inherently equivariant  networks, resulting in an architecture that is robust and data-efficient \cite{mohamed2020data,cohen2019gauge}.

\begin{definition}[Semifield Scale-space] \label{def:semifield_scale_space}
    Let \(R\) be a one-dimensional metric semifield (\Cref{def:one_dimensional_semifield}), \(\cH = \cH(\bbR^2,R,\delta)\) a corresponding metric function space (\Cref{def:normed_function_space}), and \(\Phi_t : \cH \to \cH\) be a family of operators, indexed by \(t \geq 0\). We call \(\Phi_t\) a semifield scale-space if it satisfies the following axioms:
    
    \begin{enumerate}
        \item \textbf{Semifield Linearity and Integral Operator:} \label{ax:r2_linearity}
        We require that \(\Phi\) is \(R\)-linear, that is for all \(f,g \in \cH\) and \(a,b \in R\):
        \begin{equation}
            \Phi_t(a \rt f \rp b \rt g) 
            = a \rt (\Phi_tf) \rp b \rt (\Phi_tg). 
        \end{equation}
        More specifically, for positive time \(t > 0\) we will assume that $\Phi_t$ can be written as an integral operator:
        \begin{equation}
            (\Phi_t f)(x)= \rint_{y \in \bbR^2} \kappa_t(x,y) \rt f(y),
        \end{equation}
        for some continuous kernel $\kappa_{t} : \bbR^2 \times \bbR^2 \to R$ with \(\kappa_t(x, \cdot)\) within the domain of the semifield Fourier transform \(\cF_R\) (\Cref{def:semifield_fourier_transform}).
    
        \item \textbf{One-Parameter Semigroup:} \label{ax:r2_one_parameter_semigroup}
        We require that \(\Phi_t\) forms a one-parameter semigroup
        , that is for all \(t,s \geq 0\) :
        \begin{equation}
            \Phi_t \circ \Phi_s = \Phi_{t+s} \text{ and } \Phi_0 = \text{id},
        \end{equation}
        where \(\text{id}\) is the identity map on \(\cH\).

        \item \textbf{Strong Continuity:} \label{ax:r2_strong_continuity}
        We require that \(\Phi_t f\) is continuous w.r.t. time \(t\) at any \(t_0>0\) for all \(f \in \cH\):
        \begin{equation}
            \lim_{t \to t_0} (\Phi_t f) = \Phi_{t_0} f,
        \end{equation}
        where the limit is taken in the metric function space \(\cH\)  (\Cref{def:normed_function_space}).
        
        \item \textbf{Scaling Equivariance:} \label{ax:r2_scaling}
        There exists a \textit{scaling power} \(\alpha > 0\) such that for all scalings \(s > 0\) and all times \(t \geq 0\):
        \begin{equation} 
            \Phi_{t} \circ \SO_{s} = \SO_{s} \circ \Phi_{t/s^\alpha}, 
        \end{equation}
        where $\SO_{s}$ is the scaling operator \eqref{eq:scaling_operator}.

        \item \textbf{Translation Equivariance:} \label{ax:r2_equivariance}
        We require that \(\Phi\) commutes with all translations \(v \in \bbR^2\):
        \begin{equation}
            \Phi_t \circ \TO_{v} = \TO_{v} \circ \Phi_t,
        \end{equation}
        where \(\TO_{v}\) is the translation operator \eqref{eq:translation_operator}.

        \item \textbf{Rotoreflection Equivariance:} \label{ax:r2_rotoreflection_equivariance}
        We require that \(\Phi\) commutes with all orthonormal matrices $Q \in \bbR^{2 \times 2}$:
        \begin{equation}
            \Phi_t \circ \RO_{Q} = \RO_{Q} \circ \Phi_t,
        \end{equation}
        where \(\RO_{Q}\) is the rotoreflection operator \eqref{eq:rotoreflection_operator}.

    \end{enumerate}
\end{definition}


{\if\arxiv0\color{RoyalBlue}\fi
Note that in our axioms we do not impose any restriction on the creation of new structures when transitioning from finer to coarser scales. 
This is in contrast to the requirement of \textit{causality} or \textit{non-enhancement of local extrema} in \cite{koenderink1984structure,lindeberg1997axiomatic}. 
}

In the linear semifield \(L\) case the linearity axiom in some sense already implies the integral operator axiom. 
The precise statement is known as the Schwartz kernel theorem, a main result in the theory of generalized functions/distributions. 
In the tropical semifield case a similar statement can be made, as demonstrated in \cite[Thm.2.1]{kolokoltsov1997idempotent}.
But for other semifields such a statement cannot be made just yet.
For simplicity, and to be on the safe side, we therefore assume the integral operator axiom.

The one-parameter semigroup property is a natural axiom in the sense that it implies that the (infinitesimal) evolution ``looks the same'' at all times \(t\). 
More precisely, the (strongly continuous) one-parameter semigroup property relates to the existence of a single generator that encapsulates the whole operator family. 
To understand, consider the linear semifield case, some initial \(f_0 \in \cH\), and its evolution \(f_t := \Phi_t(f_0)\). 
From the one-parameter semigroup axiom we have:
\begin{equation}
\begin{split} 
    f_{t+h} - f_t
    &= \Phi_{h+t}f_0 - \Phi_t f_0\\
    &= \Phi_h \Phi_t f_0 - \Phi_t f_0\\
    &= \left(\Phi_h - \Phi_0 \right)f_t.
\end{split}
\end{equation}
dividing by \(h\) and taking the limit \(h \downarrow 0\) 
in conjunction with strong continuity, 
we get the time-invariant evolution equation:
\begin{equation} 
    \label{eq:no_details_generator}
    \frac{df_t}{dt} = \Psi f_t, \text{ where } \Psi := \lim_{h \downarrow 0} \frac{\Phi_h - \Phi_0}{h}.
\end{equation}
The operator \(\Psi : D(\Psi) \to \cH\) is called the generator of the operator family \(\Phi_t\) and its natural domain $D(\Psi)$ consists of all functions $f \in \cH$ for which the above limit makes sense.
Typically, this domain will be dense in $\cH$. 

Thus, we can interpret \(\Phi_t\) as the solution operator of an evolution equation.
Given that the generator exists, it is possible through various means, for example the spectral theorem \cite{Rudinbook}, to give meaning to the expression:
\begin{equation}
    \Phi_t = e^{t\Psi},
\end{equation}
which can be used to quickly confirm (at least formally) that:
\begin{equation}
   \frac{d\Phi_t}{dt} = \frac{d}{dt}(e^{t\Psi}) = \Psi e^{t\Psi} = \Psi \Phi_t,
\end{equation}
which corresponds what we already saw in \eqref{eq:no_details_generator}.

\begin{remark}
    \small Given the existence of a generator \(\Psi\), the scaling equivariance axiom can be equivalently written as \(\Psi \circ \SO_{s} = \frac{1}{s^\alpha} \SO_{s} \circ \Psi\), revealing that the scaling equivariance can also be understood as a sort of \(\alpha\)-homogeneity of the generator.
\end{remark}

An easy and illustrative example of a generator together with its operator family is the derivative operator and the family of translation operators in one-dimensional space:
\begin{equation}
    (\Phi_t f)(x) = f(x + t), \quad \Psi = \frac{d}{dx}.
\end{equation}

A well-known related theorem in functional analysis is Stone's Theorem.
This theorem shows that there is a one-to-one correspondence between strongly continuous unitary one-parameter semigroups and (possible unbounded) densely defined self-adjoint operators on a Hilbert space.

In our case getting everything precise is made difficult by the fact that we want to generalize to semifields other than the linear semifield.
For example, we cannot even directly make sense of \eqref{eq:no_details_generator} for general semifields as there is not necessarily a \(-\) operation: we only have \(\rp\).
Given these obstacles, we will not attempt to rigorously prove that every semifield scale-spaces corresponds to a PDE, but will state the related PDEs in our primary cases of interest (\Cref{def:scale_spaces_interest}).

The scaling equivariance says that the scale-space representation of a scaled image should be a scaled version of the scale-space representation of the original image. 
In a sense we want a scale-space that does not ``care'' about absolute scale: it should qualitatively looks the same no matter the starting scale of the input. 
The translation and rotoreflection equivariance requirements are also not surprising: the Euclidean plane has its natural translation and rotoreflection symmetries, and demanding the scale-space to respect these is commonplace.

{\if\arxiv0\color{RoyalBlue}\fi
In PDE-based neural networks the trainable parameters take the form of Riemannian metrics \(\cG\) on a homogeneous space \(M\).
In the case of PDE-CNNs, that being \(M=\bbR^2\), this reduces to an inner product \(\cG : \bbR^2 \times \bbR^2 \to \bbR\) which we can always write as \(\cG(x,y) = x^\top G y\), where \(G\) is the corresponding \textit{Gram matrix}.
In the axioms we implicitly make, without loss of generality, the assumption that we use the ``standard'' inner product on \(\bbR^2\), namely \(\cG(x,y)=x^\top y\).
Later in \Cref{sec:architecture} we explain how we bring back general inner products in the PDE-CNN architecture.
}

\subsection{Examples}

\begin{definition}[Scale-spaces of Interest] 
    \label{def:scale_spaces_interest}
    ~
    \begin{enumerate}[label={\alph*)}]
        \item The Gaussian scale-space over the linear semifield \(L\):
        \begin{align*}
            (\Phi_t f)(x) &= \int_{y \in \bbR^2} \kappa_t(x,y) \times f(y)\ { \rm d}y, \\\kappa_t(x,y) &= \frac{1}{2\pi t} \exp \Par{-\frac{1}{2}\frac{\|x-y\|^2}{t}},
        \end{align*}
        which correspond to solutions of
        \begin{equation}
            \pdv{f}{t} = \frac{1}{2} \Delta f.
        \end{equation}
        The scaling power is \(\alpha=2\).

        \item The (quadratic) root scale-spaces over the root semifields \(R_p\):
        \begin{align*}
            (\Phi_t f)(x) &= \rint_{y \in \bbR^2}^{R_p} \kappa_t(x,y) \times f(y), \\
            \kappa_t(x,y) &= \frac{1}{\sqrt[p]{2\pi t}} \exp \Par{-\frac{1}{2p}\frac{\|x-y\|^2}{t}},
        \end{align*}
        which correspond to solutions of
        \begin{equation} \label{eq:quadratic_root_scale_space_pde}
            \pdv{f}{t} = \frac{p-1}{f}\ \frac{1}{2} \|\nabla f \|^2 + \frac{1}{2} \Delta f.
        \end{equation}
        The scaling power is \(\alpha=2\).
        
        \item The (quadratic) logarithmic scale-spaces over the logarithmic semifields \(L_\mu\):
        \begin{align*}
            (\Phi_t f)(x) &= \rint_{y \in \bbR^2}^{L_\mu} \kappa_t(x,y) + f(y), \\
            \kappa_t(x,y) &= - \frac{1}{\mu} \ln(2\pi t) -\frac{1}{2\mu}\frac{\|x-y\|^2}{t},
        \end{align*}
        which correspond to solutions of
        \begin{equation} \label{eq:quadratic_log_scale_space_pde}
            \pdv{f}{t} = \mu\frac{1}{2} \|\nabla f \|^2 + \frac{1}{2} \Delta f.
        \end{equation}
        The scaling power is \(\alpha=2\).

        \item The \(\alpha\)-dilation scale-space over the tropical max semifield \(\tropmax\):
        \begin{align*}
            (\Phi_t f)(x) &= \sup_{y \in \bbR^2} \kappa_t(x,y) + f(y), \\
            \kappa_t(x,y) &= -\frac{t}{\beta}\Par{\frac{\|x-y\|}{t}}^\beta,
        \end{align*}
        with \(1/\alpha + 1/\beta = 1\), which correspond to (viscosity) solutions of
        \begin{equation}
            \pdv{f}{t} = \frac{1}{\alpha} \|\nabla f\|^\alpha.
        \end{equation}
        The scaling power is \(\alpha\).
        
        \item The \(\alpha\)-erosion scale-space over the tropical min semifield \(\tropmin\):
        \begin{align*}
            (\Phi_t f)(x) &= \inf_{y \in \bbR^2} \kappa_t(x,y) + f(y), \\
            \kappa_t(x,y) &= \frac{t}{\beta}\Par{\frac{\|x-y\|}{t}}^\beta,
        \end{align*}
        with \(1/\alpha + 1/\beta = 1\), which correspond to (viscosity) solutions of
        \begin{equation}
            \pdv{f}{t} = - \frac{1}{\alpha}\|\nabla f\|^\alpha.
        \end{equation}
        The scaling power is \(\alpha\).

    \end{enumerate}
\end{definition}

 
The operators \(\Phi_t\) above solve the corresponding PDEs and one readily checks that the kernels \(\kappa_t(\cdot, y)\) satisfy the PDE for all \(y \in \bbR^2\). 
For example, consider the quadratic (\(\alpha=2\)) dilation scale-space and \(k_t(x) := \kappa_t(x,0) = -\tfrac{1}{2}\tfrac{\|x\|^2}{t}\):
\begin{equation}
    \pdv{k_t}{t} = \frac{1}{2}\frac{\|x\|}{t^2}, \quad \| \nabla k_t\|^2 = \frac{\|x\|^2}{t^2}.
\end{equation}
So, indeed, \(k_t\) satisfies the dilation PDE.
The same check can be done for the other scale-spaces.

\begin{remark}
    \small The Schr\"{o}dinger equation also generates a scale-space representation in the space \(\bbL^2(\bbR^2;\bbC)\), in the sense that it satisfies the linearity axiom and axioms 2-6.
    However, it does not fit in the theory here as the complex numbers do not form a one-dimensional semifield, and the corresponding kernel \(\kappa_t(x,\cdot)\) is not square integrable. 
    In T. Kraakman's master's thesis \cite{kraakman2023construction} PDE-based neural networks using the Schr\"{o}dinger equation are investigated and implemented. 
    They require more memory than our classical PDE-Based CNNs for only a small accuracy gain in practice so far. 
\end{remark}

\subsection{Isomorphic Scale-spaces}

In \Cref{res:semifield_isomorphism} we saw that the nonnegative linear, root, and logarithmic semifields are isomorphic, with the same being true for the tropical ones.
It seems natural then that there also exist isomorphisms between the corresponding semifield scale-spaces.
Let us start by clarifying what we mean by two semifield scale-spaces being isomorphic.

\begin{definition}[Semifield Scale-space Isomorphism]
    \label{def:scale_space_isomorphism}
    Let \(R\) and \(\tilde R\) be two semifields, and 
    let \(\Phi_t\) and \(\tilde\Phi_t\) be two semifield scale-spaces over \(R\) and \(\tilde R\) respectively.
    We say the two scale-spaces are isomorphic if there exists a semifield isomorphism  \(\varphi : \tilde R \to R\) such that
    \begin{equation}
        \Phi_t \circ \PO_\varphi = \PO_{\varphi} \circ \tilde\Phi_t,
    \end{equation}
    where \(\PO_\varphi : F(\bbR^2, \tilde R) \to F(\bbR^2, R)\) is the pointwise operator \eqref{eq:pointwise_operator}.
\end{definition}

Indeed, one can check that in this sense the scale-spaces of interest are isomorphic in the following way, akin to \Cref{res:semifield_isomorphism}.
\begin{proposition}[Scale-Space Isomorphisms] 
    \label{res:scale_space_isomorphisms} 
    ~
    \begin{itemize}
        \item The quadratic root scale-spaces over the root semifields \(R_p\) are isomorphic to the Gaussian scale-space over the nonnegative linear semifield \(L_{\geq 0}\).
    
        \item The quadratic logarithmic scale-spaces over the logarithmic semifields \(L_\mu\) are isomorphic to the Gaussian scale-space over the nonnegative linear semifield \(L_{\geq 0}\).

        \item The \(\alpha\)-dilation scale-space over the tropical max semifield \(\tropmax\) is isomorphic to the \(\alpha\)-erosion scale-space over the tropical min semifield \(\tropmin\).

        \item Informally, in the limit \(\mu \to \pm \infty\) the quadratic logarithmic scale-spaces over the logarithmic semifields \(L_\mu\) ``converge'' to the quadratic (\(\alpha=2\)) dilation and erosion scale-spaces of the tropical semifields \(\tropmaxmin\).
    \end{itemize}
\end{proposition}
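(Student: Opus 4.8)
The plan is to treat the first three bullets uniformly and the fourth (the limiting statement) separately, since only the latter requires genuine analysis. For each of the three honest isomorphisms I would take the conjugating map in \Cref{def:scale_space_isomorphism} to be exactly the semifield isomorphism already produced in \Cref{res:semifield_isomorphism}: $\varphi_p(x)=x^p$ for the root case, $\varphi_\mu(x)=e^{\mu x}$ for the logarithmic case, and $\varphi(x)=-x$ for the tropical case. With the roles fixed so that $\tilde R$ is the ``exotic'' semifield and $R$ its model ($L_{\geq 0}$, resp. $\tropmin$), the entire claim reduces to verifying the intertwining identity $\Phi_t \circ \PO_\varphi = \PO_\varphi \circ \tilde\Phi_t$ for the operators of \Cref{def:scale_spaces_interest}. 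I would first note that $\PO_\varphi$ carries the relevant function space into the target one bijectively; this is routine because the employed metrics and function spaces were defined precisely by transporting the linear (resp. tropical) structure through these very isomorphisms.

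The engine of all three proofs is the single observation that $\varphi$ intertwines the two semifield integrations, i.e. $\varphi\big(\tilde\rint_y g(y)\big)=\rint_y \varphi(g(y))$. This is immediate from \Cref{def:employed_integration}: for example $\varphi_p\big(\sqrt[p]{\int g^p}\big)=\int g^p=\int \varphi_p(g)$, and in the tropical case it is just $-\sup_y g(y)=\inf_y(-g(y))$. Combining this with the fact that $\varphi$ is a semifield homomorphism for $\rt$, the computation collapses to
\begin{align*}
    \varphi\big((\tilde\Phi_t f)(x)\big)
    &= \varphi\Big(\rint_y \tilde\kappa_t(x,y) \rt f(y)\Big)\\
    &= \rint_y \varphi\big(\tilde\kappa_t(x,y)\big) \rt \varphi\big(f(y)\big)\\
    &= \rint_y \kappa_t(x,y) \rt (\PO_\varphi f)(y)
    = (\Phi_t\, \PO_\varphi f)(x),
\end{align*}
where the only case-specific input is the kernel identity $\varphi(\tilde\kappa_t)=\kappa_t$. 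That identity is a one-line check in each case: $\kappa_t^{R_p}(x,y)^p$ and $e^{\mu\,\kappa_t^{L_\mu}(x,y)}$ both equal the Gaussian kernel $\tfrac{1}{2\pi t}\exp(-\tfrac12\Norm{x-y}^2/t)$, while for the tropical pair $-\kappa_t^{\tropmax}=\kappa_t^{\tropmin}$ turns the supremum into an infimum. So bullets one through three are essentially bookkeeping once the integration-intertwining lemma is in place.

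The fourth bullet is the real obstacle, and I expect it to be genuinely delicate rather than routine, which is why the statement is hedged as ``informal.'' The naive reading fails: in the operator $(\tilde\Phi_t f)(x)=\tfrac1\mu\ln\int \tfrac{1}{2\pi t}e^{-\Norm{x-y}^2/2t}\,e^{\mu f(y)}\,\mathrm{d}y$ the logarithmic kernel $\kappa_t^{L_\mu}=-\tfrac1\mu\ln(2\pi t)-\tfrac{1}{2\mu}\Norm{x-y}^2/t$ tends to $0$ as $\mu\to\infty$, so a bare limit via $\rp_\mu\to\max$ yields only $\sup_y f(y)$ and loses the dilation kernel. The correct statement requires a simultaneous reparametrization of scale, $t\mapsto t/\mu$ (equivalently fixing $\mu t$): with the Gaussian exponent then of order $\mu$, Laplace's method / Varadhan's lemma gives $\tfrac1\mu\ln\int e^{\mu[f(y)-\Norm{x-y}^2/2t]}\,\mathrm{d}y \to \sup_y\big[f(y)-\tfrac12\Norm{x-y}^2/t\big]$, exactly the quadratic dilation, and the min case follows by the $\varphi(x)=-x$ symmetry. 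I would present the clean heuristic at the level of generators — the log-scale-space PDE \eqref{eq:quadratic_log_scale_space_pde} after the rescaling reads $\partial_t f=\tfrac12\Norm{\nabla f}^2+\tfrac{1}{2\mu}\Delta f$, whose $\mu\to\infty$ limit is the dilation Hamilton--Jacobi equation $\partial_t f=\tfrac12\Norm{\nabla f}^2$ — and remark that the diffusion term acts as a vanishing viscosity, so rigor comes from vanishing-viscosity/viscosity-solution theory. The main difficulty, and the reason I would keep this bullet informal, is specifying the precise mode of convergence (pointwise in $x$, uniform on compacta) and the function-class hypotheses under which the Laplace/viscosity argument is valid.
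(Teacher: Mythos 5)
Your proof is correct, but it takes a genuinely different route from the paper's. The paper never verifies the operator identity of \Cref{def:scale_space_isomorphism} directly; instead it argues at the level of generators via the Florack/Cole--Hopf construction: setting \(g = \varphi^{-1}(f)\) with \(f\) a solution of \(\pdv{f}{t} = \tfrac12\Delta f\), it derives
\begin{equation*}
    \pdv{g}{t} = \frac{\varphi''(g)}{\varphi'(g)}\,\frac{1}{2}\Norm{\nabla g}^2 + \frac{1}{2}\Delta g,
\end{equation*}
and observes that \(\varphi(x)=e^{\mu x}\) reproduces \eqref{eq:quadratic_log_scale_space_pde} while \(\varphi(x)=x^p\) reproduces \eqref{eq:quadratic_root_scale_space_pde}, with the tropical cases arising informally as \(\mu\to\pm\infty\). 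Your verification is instead kernel-level: you check \(\varphi(\tilde\kappa_t)=\kappa_t\) and that \(\PO_\varphi\) intertwines the employed integrations, and conclude \(\Phi_t\circ\PO_\varphi=\PO_\varphi\circ\tilde\Phi_t\) by a one-line computation. Yours is arguably the more faithful proof of the proposition \emph{as stated}, since \Cref{def:scale_space_isomorphism} is phrased in terms of solution operators rather than PDEs, and the paper explicitly declines to establish a rigorous scale-space-to-PDE correspondence for general semifields; the paper's generator-level route, in exchange, explains where these PDEs come from and applies to any monotone \(C^2\) grey-value transformation, not only to semifield isomorphisms.

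On the fourth bullet you go beyond the paper, and your refinement is right: at fixed \(t\) the kernel \(\kappa^{L_\mu}_t\to 0\) as \(\mu\to\infty\), so the naive operator limit degenerates to \(f\mapsto\sup f\), and convergence to the quadratic dilation/erosion holds only after the reparametrization \(t\mapsto t/\mu\) (equivalently, modulo the up-to-time-scaling ambiguity the paper acknowledges after \Cref{res:explicit_form_reduced_kernel}), justified by Laplace asymptotics or vanishing-viscosity arguments. The paper offers only the informal remark that the diffusion term becomes negligible relative to \(\mu\tfrac12\Norm{\nabla g}^2\); your rescaled formulation is the precise content behind that remark and behind the scare quotes in the statement.
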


\Cref{def:scale_space_isomorphism} also gives us a way to create new (isomorphic) semifield scale-spaces from existing ones in the following way.
Take any existing semifield scale-space \(\Phi_t\) over a semifield \(R\), and let \(\varphi: \tilde R \to R\) be any semifield isomorphism. 
We then simply define the new scale-space \(\tilde \Phi_t =  \PO_{\varphi^{-1}} \circ \Phi_t \circ \PO_\varphi\) over the semifield \(\tilde R\).

In \cite{florack2001nonlinear} Florack creates nonlinear scale-spaces in exactly this way by performing a pointwise transformation on the Gaussian scale-space and deducing what nonlinear PDE corresponds to the obtained evolution.

More specifically, let \(\varphi : R \to L\) be a monotonic twice continuously differentiable transformation, where \(R \subseteq \bbR\) is some subset of the reals.
We start with the isotropic diffusion PDE on \(\bbR^2\): 
\begin{equation}
        \pdv{f}{t} = \frac{1}{2} \Delta f.
\end{equation}
We then define a new evolution \(g(x, t) := \varphi^{-1}(f(x,t))\). 
Let us derive what PDE \(g\) obeys.
Using the shorthand \(\varphi(g) := \PO_\varphi(g)\) for a moment, it follows from the equalities:
\begin{align*}
    \pdv{g}{t} &= \pdv{}{t} \varphi^{-1}(f) = \frac{1}{\varphi'(g)} \ \pdv{f}{t}, \\
    \Delta f   &= \Delta(\varphi(g)) = \varphi''(g) \| \nabla g \|^2 + \varphi'(g) \Delta g,
\end{align*}
that the PDE that describes the evolution of \(g\) is:
\begin{equation}
    \pdv{g}{t} = \frac{\varphi''(g)}{\varphi'(g)} \frac{1}{2}\| \nabla g \|^2 +  \frac{1}{2} \Delta g.
\end{equation}
Florack suggests setting \(\mu := \varphi''/\varphi' = (\ln \varphi')'\) as a constant, as the class of non-trivial (that being non-affine) \(\varphi\)'s is then:
\begin{equation} 
    \varphi(x) = e^{\mu x} \text{ with } \mu \neq 0, \mu \in \bbR,
\end{equation}
up to affine transformations. 
This transformation of the PDE is known as the \textit{Cole-Hopf transformation} \cite[p.195]{evans2010partial}.
This is exactly the isomorphism between the logarithmic and linear semifield as seen in \Cref{res:semifield_isomorphism}, and indeed, with this \(\varphi\) we get the quadratic logarithmic scales spaces:
\begin{equation}
    \pdv{g}{t} = \mu\frac{1}{2} \|\nabla g \|^2 + \frac{1}{2} \Delta g.
\end{equation}
In the extreme cases of the transformation, that being \(\mu = \pm \infty\), the diffusion part becomes negligible in comparison to the erosion/dilation part, and one can say that the morphological scale-spaces arise.

If we instead choose $\varphi(x) = x^p$ the quadratic root scale-spaces arise:
\begin{equation}
    \pdv{g}{t} = \frac{p-1}{g}\ \frac{1}{2} \|\nabla g \|^2 + \frac{1}{2} \Delta g.
\end{equation}


\section{Consequences} \label{sec:consequences}

In this section we explore the consequences of the semifield scale-spaces axioms (\Cref{def:semifield_scale_space}).

We start by showing that the equivariance axioms of the scale-space representation lead to invariance properties of the kernel \(\kappa_t : \bbR^2 \times \bbR^2 \to R\).
For example, the translation equivariance of \(\Phi_t\) (Axiom \ref{ax:r2_equivariance}) implies that the kernel \(\kappa_t\) (Axiom \ref{ax:r2_linearity}) is translation invariant in the sense that \(\kappa_t(v + x, v + y) = \kappa_t(x,y)\) for all \(x,y,v \in \bbR^2\).

We then show that, due to the translation equivariance (Axiom \ref{ax:r2_equivariance}), the semifield scale-space can be written as a semifield convolution with a reduced kernel \(k_t : \bbR^2 \to R\).
From there on out we show \Cref{res:explicit_form_reduced_kernel} which gives an explicit form of the reduced kernel \(k_t\) in the semifield Fourier domain, this being the main theorem of the article.

\subsection{Equivariance of Operator becomes Invariance of Kernel}

In this subsection we show how the translation, rotoreflection, and scaling equivariance axioms on the operator family \(\Phi_t\) translate to corresponding invariances on the kernel \(\kappa_t\). 
The upcoming three lemmas are straightforward, generally known, and basically identical in proof.

\begin{lemma}[Translation Invariance] \label{res:translation_invariance}
    From the integral operator (Axiom \ref{ax:r2_linearity}) and translation equivariance (Axiom \ref{ax:r2_equivariance}) it follows that the kernel is translation invariant, that is:
    \begin{equation}
        \kappa_t(v + x,v + y) = \kappa_t(x,y),
    \end{equation}
    for all for all $x,y,v \in \mathbb{R}^2$ and $t>0$.
\end{lemma}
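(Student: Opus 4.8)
The plan is to exploit the defining equivariance and integral-operator axioms together, translating the operator identity into a pointwise identity on the kernel, and then using injectivity/arbitrariness of the input function to strip away the integral.

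First I would start from the translation equivariance axiom (Axiom~\ref{ax:r2_equivariance}), namely $\Phi_t \circ \TO_v = \TO_v \circ \Phi_t$, and apply both sides to an arbitrary $f \in \cH$, then evaluate at a point $x \in \bbR^2$. Using the integral-operator form of $\Phi_t$ (Axiom~\ref{ax:r2_linearity}) together with the definition of the translation operator \eqref{eq:translation_operator}, I would expand each side into a semifield integral against the kernel $\kappa_t$. For the right-hand side, $(\TO_v \Phi_t f)(x) = (\Phi_t f)(-v + x) = \rint_y \kappa_t(-v+x, y) \rt f(y)$. For the left-hand side, $(\Phi_t \TO_v f)(x) = \rint_y \kappa_t(x, y) \rt f(-v+y)$, and then I would perform the change of variables $y \mapsto v + y$ in this semifield integral, which is justified by the translation invariance of semifield integration \eqref{eq:translation_integration}, to rewrite it as $\rint_y \kappa_t(x, v+y) \rt f(y)$.

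Equating the two expressions gives, for all $f$ and all $x$,
\begin{equation*}
    \rint_{y} \kappa_t(x, v+y) \rt f(y) = \rint_{y} \kappa_t(-v+x, y) \rt f(y).
\end{equation*}
Since this holds for all admissible $f$ (in particular for a rich enough class, e.g.\ approximating indicator functions of small neighborhoods), I would conclude that the two kernels agree as functions of $y$: $\kappa_t(x, v+y) = \kappa_t(-v+x, y)$ for all $x,y,v$. Finally, a relabeling — replacing $x$ by $v+x$ — yields the claimed identity $\kappa_t(v+x, v+y) = \kappa_t(x,y)$.

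The main obstacle I anticipate is the final \emph{de-integration} step: passing from equality of two semifield integral operators to pointwise equality of their kernels. In the classical linear case this is standard (test against a delta-approximating sequence and use continuity of $\kappa_t$), but in the general semifield setting one must be careful that the relevant test functions lie in $\cH$ and in the domain of $\rint$, and that semifield integration separates kernels. Since the kernel is assumed continuous (Axiom~\ref{ax:r2_linearity}) and $\rint$ reproduces indicator-function measures \eqref{eq:semifield_integration_indicator_function}, I would argue that testing against indicators of shrinking open sets around a fixed $y_0$, combined with continuity of $\kappa_t(x,\cdot)$, forces equality at every point. The change-of-variables justification via \eqref{eq:translation_integration} is routine given the axioms, so I expect the separation/injectivity argument to be the only genuinely delicate point.
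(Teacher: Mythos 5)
Your proposal is correct and follows essentially the same route as the paper: expand both sides of the translation-equivariance identity via the integral-operator axiom, use translation invariance of the semifield integration \eqref{eq:translation_integration} to shift the integration variable, and then strip the integral by testing against arbitrary $f \in \cH$ (the paper justifies this final step exactly as you do, via continuity of $\kappa_t$ and a concentration argument with indicator functions, which it frames as a semifield fundamental lemma of the calculus of variations). The only cosmetic difference is that the paper first rewrites the axiom as $\TO_{-v} \circ \Phi_t \circ \TO_v = \Phi_t$ rather than changing variables after expansion, which is an equivalent manipulation.
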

\begin{proof}
    We rewrite the translation equivariance (Axiom \ref{ax:r2_equivariance}) as
    \begin{equation}
        \TO_{-v} \circ \Phi_t \circ \TO_{v} = \Phi_t.
    \end{equation}
    We apply some dummy function \(f \in \cH\) and evaluate it at some dummy position \(x \in \bbR^2\):
    \begin{equation}
    \begin{split}
        ((\TO_{-v} \circ \Phi_t \circ \TO_{v})(f))(x) = (\Phi_t(f))(x).
    \end{split}
    \end{equation}
    Using the definition of the translation operator \(\TO_{v}\) \eqref{eq:translation_operator} and the integral operator axiom we expand this into:
    \begin{equation}
        \rint_{y \in \bbR^2} \kappa_t(v + x,y) \rt f(-v + y) = \rint_{y \in \bbR^2} \kappa_t(x,y) \rt f(y).
    \end{equation}
    Using the translation invariance property \eqref{eq:translation_integration} of the semifield integration gives:
    \begin{equation} \label{eq:before_fun_lemma}
        \rint_{y \in \bbR^2} \kappa_t(v + x,v + y) \rt f(y) = \rint_{y \in \bbR^2} \kappa_t(x,y) \rt f(y).
    \end{equation}
    Given that this should hold for all \(f \in \cH\) we can conclude:
    \begin{equation} \label{eq:after_fun_lemma}
        \kappa_t(v + x,v + y) = \kappa_t(x,y).
    \end{equation}
\end{proof}

{\if\arxiv0\color{RoyalBlue}\fi
The last step of the proof can be understood as a semifield version of \textit{Fundamental lemma of the calculus of variations}.
There are several versions of this lemma but generally they are of the form
\begin{equation}
    \forall g \ \int f g = 0 \Rightarrow f = 0,
\end{equation}
with some assumptions on the nature of \(f\) and \(g\).
Using the substitution \(f = f_1 - f_2\) we can equivalently write
\begin{equation} \label{eq:fun_lemma_variations}
    \forall g \ \int f_1 g = \int f_2 g \Rightarrow f_1 = f_2.
\end{equation}
The lemma in its second form \eqref{eq:fun_lemma_variations} is used in the last step of the proof of \Cref{res:translation_invariance}, specifically when transitioning from \eqref{eq:before_fun_lemma} to \eqref{eq:after_fun_lemma}. 
We will also apply it in the upcoming proofs of \Cref{res:rotoreflection_invariance} and \Cref{res:time_scale_invariance}.
The proof of the lemma is straightforward in our setting; the kernel \(\kappa_t\) is assumed to be continuous (Axiom \ref{ax:r2_linearity}), and the space \(\cH\) contains all indicator functions, meaning that a standard ``concentration'' argument works.
}

\begin{lemma}[Rotoreflection Invariance] \label{res:rotoreflection_invariance}
    From the integral operator (Axiom \ref{ax:r2_linearity}) and the rotoreflection equivariance (Axiom \ref{ax:r2_rotoreflection_equivariance}) it follows that the kernel is rotoreflection invariant, that is:
    \begin{equation}
        \kappa_t(Qx, Qy) = \kappa_t(x, y),
    \end{equation}
    for all orthonormal $Q \in \bbR^{2 \times 2}$, $x,y \in \mathbb{R}^2$, and $t>0$.
\end{lemma}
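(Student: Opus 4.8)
The plan is to mirror the proof of \Cref{res:translation_invariance} essentially verbatim, replacing the translation operator by the rotoreflection operator and invoking the rotoreflection invariance of the semifield integration \eqref{eq:rotoreflection_integration} in place of its translation invariance. First I would rewrite the rotoreflection equivariance (Axiom \ref{ax:r2_rotoreflection_equivariance}) in the conjugated form $\RO_{Q^{-1}} \circ \Phi_t \circ \RO_{Q} = \Phi_t$, which is legitimate since $\RO_Q$ is invertible with $\RO_Q^{-1} = \RO_{Q^{-1}}$ (because $Q^{-1}$ is again orthonormal).

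Next I would apply both sides to an arbitrary $f \in \cH$ and evaluate at an arbitrary $x \in \bbR^2$. Using the definition of the rotoreflection operator \eqref{eq:rotoreflection_operator}, the outer $\RO_{Q^{-1}}$ turns the evaluation point $x$ into $Qx$, and the integral operator axiom (Axiom \ref{ax:r2_linearity}) then expands the identity into
\begin{equation}
    \rint_{y \in \bbR^2} \kappa_t(Qx, y) \rt f(Q^{-1} y) = \rint_{y \in \bbR^2} \kappa_t(x, y) \rt f(y).
\end{equation}

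The key step is then a change of variables $y \mapsto Qy$ inside the left-hand integral, which is exactly the rotoreflection invariance of the semifield integration \eqref{eq:rotoreflection_integration} applied to the integrand $y \mapsto \kappa_t(Qx, y) \rt f(Q^{-1}y)$; this rewrites the left-hand side as $\rint_{y} \kappa_t(Qx, Qy) \rt f(y)$. Equating with the right-hand side yields $\rint_{y} \kappa_t(Qx, Qy) \rt f(y) = \rint_{y} \kappa_t(x, y) \rt f(y)$ for all $f \in \cH$, and the semifield fundamental-lemma argument noted after \Cref{res:translation_invariance} (continuity of $\kappa_t$ together with $\cH$ containing all indicator functions) concludes $\kappa_t(Qx, Qy) = \kappa_t(x, y)$.

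The only genuine obstacle is bookkeeping: one must track carefully that conjugating by $\RO_Q$ and evaluating at $x$ produces an evaluation at $Qx$ (and not $Q^{-1}x$), and that the correct orthonormal matrix lands in each slot so that the integration-invariance substitution produces precisely $\kappa_t(Qx, Qy)$. No analytic input beyond what the translation lemma already used is required, which is why the author can honestly call the three invariance lemmas ``basically identical in proof''.
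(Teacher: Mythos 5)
Your proposal is correct and follows essentially the same route as the paper's proof: conjugate the equivariance axiom to $\RO_{Q^{-1}} \circ \Phi_t \circ \RO_Q = \Phi_t$, expand via the integral operator axiom, apply the rotoreflection invariance of semifield integration \eqref{eq:rotoreflection_integration} to substitute $y \mapsto Qy$, and conclude with the fundamental-lemma argument. Your bookkeeping is in fact slightly more careful than the paper's, whose displayed intermediate equation contains a stray minus sign ($\kappa_t(Qx,-y)$ where it should read $\kappa_t(Qx,y)$), evidently a remnant of copying the translation proof.
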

\begin{proof}
    We rewrite the rotoreflection equivariance (Axiom \ref{ax:r2_rotoreflection_equivariance}) as
    \begin{equation}
        \RO_{Q^{-1}} \circ \Phi_t \circ \RO_{Q} = \Phi_t.
    \end{equation}
    We apply some dummy function \(f \in \cH\) and evaluate it at some dummy position \(x \in \bbR^2\):
    \begin{equation}
    \begin{split}
        ((\RO_{Q^{-1}} \circ \Phi_t \circ \RO_{Q})(f))(x) = (\Phi_t(f))(x).
    \end{split}
    \end{equation}
    Using the definition of the rotoreflection operator \(\RO_{Q}\) \eqref{eq:rotoreflection_operator} and integral operator axiom we expand this to:
    \begin{equation}
        \rint_{y \in \bbR^2} \kappa_t(Qx, - y) \rt f(Q^{-1}y) = \rint_{y \in \bbR^2} \kappa_t(x,y) \rt f(y).
    \end{equation}
    Using the rotoreflection invariance property \eqref{eq:rotoreflection_integration} of the semifield integration gives:
    \begin{equation}
        \rint_{y \in \bbR^2} \kappa_t(Qx, Qy) \rt f(y) = \rint_{y \in \bbR^2} \kappa_t(x, y) \rt f(y).
    \end{equation}
    Given that this should hold for all \(f \in \cH\) we can conclude:
    \begin{equation}
        \kappa_t(Qx, Qy) = \kappa_t(x, y).
    \end{equation}
\end{proof}

\begin{lemma}[Scale Invariance] \label{res:time_scale_invariance}
    From the integral operator (Axiom \ref{ax:r2_linearity}) and scaling equivariance (Axiom \ref{ax:r2_scaling}) it follows that the kernel is scale invariant, that is:
    \begin{equation}
        \chi(s) 
        \rt \kappa_{s^\alpha t}(sx,sy) = \kappa_t(x,y).
    \end{equation}
    for all $x,y \in \mathbb{R}^2$ and $s,t>0$.
\end{lemma}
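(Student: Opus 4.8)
The plan is to mirror the proofs of \Cref{res:translation_invariance} and \Cref{res:rotoreflection_invariance}: rewrite the scaling equivariance axiom as a conjugation equal to \(\Phi_t\), expand both sides with the integral-operator axiom, push the scaling through the semifield integration, and finish with the semifield version of the fundamental lemma of the calculus of variations.

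First I would rearrange Axiom \ref{ax:r2_scaling}, \(\Phi_t \circ \SO_s = \SO_s \circ \Phi_{t/s^\alpha}\), into the form
\begin{equation}
    \SO_{1/s} \circ \Phi_{s^\alpha t} \circ \SO_s = \Phi_t,
\end{equation}
where I have also substituted \(t \mapsto s^\alpha t\) so that the right-hand side is exactly \(\Phi_t\). Applying both sides to an arbitrary \(f \in \cH\), evaluating at a point \(x \in \bbR^2\), and using \((\SO_s f)(x) = f(x/s)\) together with the integral-operator axiom (Axiom \ref{ax:r2_linearity}), the left-hand side becomes \(\rint_{y} \kappa_{s^\alpha t}(sx, y) \rt f(y/s)\), so the identity to exploit is
\begin{equation}
    \rint_{y \in \bbR^2} \kappa_{s^\alpha t}(sx, y) \rt f(y/s) = \rint_{y \in \bbR^2} \kappa_t(x, y) \rt f(y).
\end{equation}

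The key step — the one that replaces the usual Jacobian-based change of variables, which has no direct meaning in a general semifield — is to recognize the left integrand as a scaled function. Writing it as \(h(y/s)\) with \(h(z) := \kappa_{s^\alpha t}(sx, sz) \rt f(z)\), the scaling equivariance \eqref{eq:scaling_integration} of the semifield integration yields
\begin{equation}
    \rint_{y} h(y/s) = \rint_{y} (\SO_s h)(y) = \chi(s) \rt \rint_{z} h(z),
\end{equation}
and then semifield linearity \eqref{eq:linearity_integration} lets me pull the constant \(\chi(s)\) inside, giving \(\rint_{z} \bigl( \chi(s) \rt \kappa_{s^\alpha t}(sx, sz) \bigr) \rt f(z)\). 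Equating this with the right-hand side and invoking the fundamental lemma (valid here because \(\kappa_t\) is continuous and \(\cH\) contains all indicator functions, exactly as noted after \Cref{res:translation_invariance}), I conclude \(\chi(s) \rt \kappa_{s^\alpha t}(sx, sz) = \kappa_t(x, z)\) for all \(x, z\), which is the claim after renaming \(z \to y\). I expect the only genuine obstacle to be the bookkeeping around this scaling step: correctly identifying the integrand as \(h(y/s)\) and confirming \(h \in \dom(\rint)\) so that \eqref{eq:scaling_integration} legitimately applies; everything else is a routine repetition of the two preceding lemmas.
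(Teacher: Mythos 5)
Your proposal is correct and follows essentially the same route as the paper's proof: rewrite Axiom \ref{ax:r2_scaling} as \(\SO_{1/s} \circ \Phi_{s^\alpha t} \circ \SO_s = \Phi_t\), expand both sides via the integral-operator axiom, apply the scaling property \eqref{eq:scaling_integration} of the semifield integration, and conclude by the for-all-\(f\) argument. Your extra bookkeeping (identifying the integrand as \(h(y/s)\) with \(h(z) = \kappa_{s^\alpha t}(sx,sz) \rt f(z)\)) merely makes explicit a step the paper states in compressed form.
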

\begin{proof}
    We rewrite the scaling equivariance (Axiom \ref{ax:r2_scaling}) as
    \begin{equation}
        \SO_{1/s} \circ \Phi_{s^\alpha  t} \circ \SO_{s} = \Phi_{t}.
    \end{equation}
    We apply some dummy function \(f \in \cH\) and evaluate it at some dummy position \(x \in \bbR^2\):
    \begin{equation}
        ((\SO_{1/s} \circ \Phi_{s^\alpha  t} \circ \SO_{s})(f))(x) = (\Phi_t(f))(x).
    \end{equation}
    Using the definition of the scaling operator \(\SO_{s}\) \eqref{eq:scaling_operator} and integral operator axiom we expand this to:
    \begin{equation}
        \rint_{y \in \bbR^2} \kappa_{s^\alpha  t}(sx,y) \rt f(y/s) = \rint_{y \in \bbR^2} \kappa_t(x,y) \rt f(y).
    \end{equation}
    Using the scaling property \eqref{eq:scaling_integration} of the semifield integration gives:
    \begin{equation}
        \chi(s) \rt \rint_{y} \kappa_{s^\alpha  t}(sx,sy) \rt f(y) = \rint_{y} \kappa_t(x,y) \rt f(y).
    \end{equation}
    Given that this should hold for all \(f \in \cH\) we conclude:
    \begin{equation}
        \chi(s) \rt \kappa_{s^\alpha  t}(sx,sy) = \kappa_t(x,y).
    \end{equation}
\end{proof}


Consider now \Cref{res:translation_invariance}.
Because we can freely choose the translation \(v\), we can also choose \(v = -y\):
\begin{equation}
    \kappa_t(x,y) = \kappa_t(x - y,0) =: k_t(x-y).
\end{equation}
We thus see that \(\kappa_t\) is completely characterized by its behaviour on \(\kappa_t(\cdot, 0)\), which we define as the reduced kernel \(k_t : \bbR^2 \to R\). Plugging this newfound knowledge back into the integral operator axiom we get the following result.
    
\begin{lemma}[Translation Equivariance implies Semifield Convolution] \label{res:translation_invariance_convolution}
    Consider the integral operator (Axiom \ref{ax:r2_linearity}) and translation equivariance (Axiom \ref{ax:r2_equivariance}).
    Define the reduced kernel \(k_t(x) = \kappa_t(x,0)\). 
    We can write the scale-space operator \(\Phi_t\) as a semifield convolution:
    \begin{equation}
        \Phi_t f = k_t \rconv f.
    \end{equation}
\end{lemma}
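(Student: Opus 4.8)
The plan is to leverage \Cref{res:translation_invariance}, which has already established that the full kernel is translation invariant: \(\kappa_t(v+x, v+y) = \kappa_t(x,y)\) for all \(x,y,v \in \bbR^2\). Since this holds for every translation vector \(v\), I would simply make the convenient choice \(v = -y\), which collapses the two-variable kernel into a function of the single variable \(x-y\):
\begin{equation}
    \kappa_t(x,y) = \kappa_t(x - y, 0) = k_t(x - y),
\end{equation}
using the definition \(k_t(x) := \kappa_t(x,0)\) of the reduced kernel. This is the only genuinely substantive step, and it is entirely inherited from the preceding lemma.

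Next I would substitute this expression directly into the integral operator representation guaranteed by \Cref{ax:r2_linearity}. Evaluating \(\Phi_t f\) at an arbitrary point \(x \in \bbR^2\) gives
\begin{equation}
    (\Phi_t f)(x) = \rint_{y \in \bbR^2} \kappa_t(x,y) \rt f(y) = \rint_{y \in \bbR^2} k_t(x - y) \rt f(y),
\end{equation}
and the right-hand side is, by \Cref{def:semifield_convolution}, precisely \((k_t \rconv f)(x)\). Since \(x\) was arbitrary, this yields the claimed identity \(\Phi_t f = k_t \rconv f\) as an equality of functions.

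There is essentially no hard obstacle in this lemma; it is a direct bookkeeping argument once \Cref{res:translation_invariance} is in hand. The only point requiring a word of care is well-definedness: one should note that \(k_t = \kappa_t(\cdot, 0)\) lies in the domain of the semifield integration (indeed, \Cref{ax:r2_linearity} stipulates that \(\kappa_t(x,\cdot)\) lies in the domain of the semifield Fourier transform, hence in \(\dom(\rint)\)), so that the semifield convolution \(k_t \rconv f\) is itself a legitimate element of \(\dom(\rint)\) by the Fubini property \eqref{eq:fubini}, exactly as remarked just after \Cref{def:semifield_convolution}. With that observation the proof is complete.
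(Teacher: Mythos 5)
Your proposal is correct and follows essentially the same route as the paper: invoke the translation invariance of the kernel (Lemma \ref{res:translation_invariance}) with the choice \(v=-y\) to collapse \(\kappa_t(x,y)\) to \(k_t(x-y)\), then substitute into the integral operator representation and recognize the semifield convolution. The extra well-definedness remark is a harmless (and accurate) addition that the paper omits.
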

\begin{proof}
    This follows immediately from \Cref{res:translation_invariance}.
    \begin{equation}
    \begin{split}
        (\Phi_t f)(x) 
        &= \rint_{y \in \bbR^2} \kappa_t(x,y) \rt f(y)\\ 
        &= \rint_{y \in \bbR^2} \kappa_t(x-y,0) \rt f(y) \\
        &= \rint_{y \in \bbR^2} k_t(- y + x) \rt f(y) \\
        &= (k_t \rconv f)(x).
    \end{split}
    \end{equation}
\end{proof}

\begin{lemma}[Convolution Property of Reduced Kernel] \label{res:convolution_property_kernel}
    From the integral operator (Axiom \ref{ax:r2_linearity}), the one-parameter semigroup (Axiom \ref{ax:r2_one_parameter_semigroup}), the strong continuity (Axiom \ref{ax:r2_strong_continuity}), and the translation equivariance (Axiom \ref{ax:r2_equivariance}), it follows that the reduced kernel $k_t(x) = \kappa_t(x,0)$ satisfies:
    \begin{equation}
        k_s \rconv k_t = k_{s+t} \text{ for all } t,s > 0.
    \end{equation}
\end{lemma}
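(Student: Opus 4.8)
The plan is to push the one-parameter semigroup property (Axiom \ref{ax:r2_one_parameter_semigroup}) of the operators \(\Phi_t\) through the convolution representation of \Cref{res:translation_invariance_convolution}, and then cancel the arbitrary test function to arrive at the kernel identity.

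First I would fix \(s,t>0\) and apply an arbitrary \(f \in \cH\) to both sides of the semigroup identity \(\Phi_{s+t} = \Phi_s \circ \Phi_t\). Rewriting each operator as a convolution with its reduced kernel via \Cref{res:translation_invariance_convolution}, the left-hand side becomes \(k_{s+t} \rconv f\) while the right-hand side becomes \(k_s \rconv (k_t \rconv f)\). Since \(s+t>0\) and \(s,t>0\), all three operators are genuinely integral operators, so the convolution form of Axiom \ref{ax:r2_linearity} applies throughout, and the Fubini property of semifield integration guarantees \(k_t \rconv f \in \dom(\oint)\) so that the outer convolution is well-defined.

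Next I would invoke the associativity of semifield convolution \eqref{eq:semifield_convolution_associativity} to rewrite \(k_s \rconv (k_t \rconv f) = (k_s \rconv k_t) \rconv f\), which yields
\[
  k_{s+t} \rconv f = (k_s \rconv k_t) \rconv f \quad \text{for all } f \in \cH.
\]
It then remains to cancel \(f\). Writing out both convolutions at a fixed \(x\), we have \(\rint_y k_{s+t}(x-y) \rt f(y) = \rint_y (k_s \rconv k_t)(x-y) \rt f(y)\) for every \(f\). Treating the two kernels (as functions of \(y\)) as the test objects, this is exactly the situation handled by the semifield fundamental lemma of the calculus of variations already used in \Cref{res:translation_invariance}: since \(\cH\) contains all indicator functions and both kernels are continuous, the concentration argument forces \(k_{s+t}(x-y) = (k_s \rconv k_t)(x-y)\) for all \(y\), and evaluating at \(y=0\) gives \(k_{s+t} = k_s \rconv k_t\).

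I expect the main obstacle to be precisely this final cancellation, where one must verify that \(k_s \rconv k_t\) is continuous and lies in the relevant function space so that the fundamental lemma is applicable. An equivalent and arguably cleaner route is to pass to the semifield Fourier domain: the convolution property \eqref{eq:convolution_fourier} turns the displayed identity into \(\cF(k_{s+t}) \rt \cF f = \cF(k_s) \rt \cF(k_t) \rt \cF f\), after which choosing an \(f\) whose transform is nowhere \(\rz\) lets one cancel \(\cF f\) using multiplicative inverses, and the injectivity of \(\cF_R\) (\Cref{def:semifield_fourier_transform}) yields the claim. This last route is appealing because it exposes the identity as the multiplicative semigroup structure in the Fourier domain that the subsequent main theorem \Cref{res:explicit_form_reduced_kernel} goes on to exploit.
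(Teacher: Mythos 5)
Your first half coincides exactly with the paper's proof: apply the semigroup identity, rewrite via \Cref{res:translation_invariance_convolution}, and use associativity \eqref{eq:semifield_convolution_associativity} to get \((k_s \rconv k_t) \rconv f = k_{s+t} \rconv f\) for all \(f \in \cH\). The genuine gap is in your cancellation step, and it is the one you yourself flag but do not close. The semifield fundamental lemma (the concentration argument) needs \emph{both} functions to be continuous at the point of concentration: \(k_{s+t}\) is continuous by Axiom \ref{ax:r2_linearity}, but \(k_s \rconv k_t\) is not known to be continuous, and there is no noncircular way to establish this — the natural route to its continuity is precisely the identity \(k_s \rconv k_t = k_{s+t}\) you are trying to prove. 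The failure is not cosmetic: in the tropical max case, for instance, \(k_s \rconv k_t\) is a priori only lower semicontinuous (a supremum of continuous functions), whereas concentration over shrinking balls recovers an upper semicontinuous envelope value, so the argument genuinely does not force pointwise equality. Your Fourier alternative suffers the analogous defect: the convolution property \eqref{eq:convolution_fourier} is only stated for functions in \(\dom(\cF)\) whose convolution is also in \(\dom(\cF)\); an arbitrary \(f \in \cH\) is not in \(\dom(\cF)\), and even for a well-chosen \(f\) you would need \(k_s \rconv k_t \in \dom(\cF)\) and \((k_s \rconv k_t) \rconv f \in \dom(\cF)\) — again facts that are unavailable before the lemma is proved, and nontrivial given how restrictive \(\dom(\cF)\) is (even, continuous, concave/convex, superlinear in the tropical cases).

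The telltale sign is that your proof never invokes strong continuity (Axiom \ref{ax:r2_strong_continuity}), even though the lemma explicitly lists it as a hypothesis. The paper's cancellation is designed exactly to sidestep the regularity issues above: choose the test function \(f = k_\varepsilon\) with \(\varepsilon > 0\), giving \((k_s \rconv k_t) \rconv k_\varepsilon = k_{s+t} \rconv k_\varepsilon\), and then let \(\varepsilon \downarrow 0\); strong continuity gives \(\lim_{\varepsilon \to 0} k_\varepsilon \rconv g = g\) in \(\cH\), so the identity \(k_s \rconv k_t = k_{s+t}\) is obtained as a limit in the function space, with no pointwise concentration, no continuity of \(k_s \rconv k_t\), and no Fourier-domain membership required. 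Replacing your cancellation step with this limiting argument repairs the proof.
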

\begin{proof}
    We have already seen in \Cref{res:translation_invariance_convolution} that the integral operator axiom and the translation equivariance axiom imply that
    \begin{equation}
        \Phi_t f = k_t \rconv f.
    \end{equation}
    If we use this formula together with the one-parameter semigroup property axiom we get
    \begin{equation}
        k_s \rconv (k_t \rconv f) = k_{s+t} \rconv f.
    \end{equation}
    Using associativity of semifield convolution \eqref{eq:semifield_convolution_associativity} on the l.h.s.:
    \begin{equation}
        (k_s \rconv k_t) \rconv f = k_{s+t} \rconv f.
    \end{equation}
    We are free to choose \(f = k_\varepsilon\) for \(\varepsilon>0\):
    \begin{equation}
        (k_s \rconv k_t) \rconv k_\varepsilon = k_{s+t} \rconv k_\varepsilon.
    \end{equation}
    From the strong continuity axiom we know that \(\lim_{\varepsilon \to 0} k_\varepsilon \rconv k_t = k_t\), thus, after taking this limit, we can conclude:
    \begin{equation}
        k_s \rconv k_t = k_{s+t}.
    \end{equation}
\end{proof}

\subsection{Towards the Semifield Fourier Domain} \label{sec:going_to_fourier_domain}

We see that to move forward we need a way to efficiently work with semifield convolutions.
The semifield Fourier transform (\Cref{def:semifield_fourier_transform}) has the important property of turning convolutions into much more wieldy pointwise multiplication.
This is why we translate all previous lemmas to the semifield Fourier domain using the semifield Fourier transform.

\begin{lemma} \label{res:properties_fourier_kernel}
    Consider all axioms of a semifield scale-space (\Cref{def:semifield_scale_space}). 
    The reduced kernel $k_t(x) := \kappa_t(\cdot,0)$ in the semifield Fourier domain $\hat{k}_t = \cF_R(k_t)$ satisfies
    \begin{align}
        \label{eq:rotoreflection_fourier_kernel}
        &\hat{k}_t(Q\omega) = \hat{k}_{t}(\omega),\\
        \label{eq:time_scale_fourier_kernel}
        &\hat{k}_{s^\alpha  t}(\omega/s) = \hat{k}_t(\omega),\\
        \label{eq:semifield_fourier_kernel}
        &\hat{k}_s(\omega) \rt \hat{k}_t(\omega) = \hat{k}_{s+t}(\omega),
    \end{align}
    for all orthonormal $Q\in \bbR^{2 \times 2}$, $\omega \in \bbR^2$, and $s,t>0$.
\end{lemma}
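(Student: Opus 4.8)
The plan is to push each of the three kernel-level invariances of the reduced kernel $k_t$ through the semifield Fourier transform $\cF_R$, using in each case the matching property of $\cF_R$ from \Cref{def:semifield_fourier_transform}. The strategy is uniform: first specialize the relevant two-argument statement about $\kappa_t$ to a one-argument statement about $k_t(x)=\kappa_t(x,0)$ by setting the second slot to $0$, then apply $\cF_R$. Throughout, $k_t$ lies in $\dom(\cF_R)$ by the integral operator axiom (Axiom \ref{ax:r2_linearity}).

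The semigroup identity \eqref{eq:semifield_fourier_kernel} is the most direct: applying $\cF_R$ to the convolution property $k_s \rconv k_t = k_{s+t}$ of \Cref{res:convolution_property_kernel} and invoking the convolution property \eqref{eq:convolution_fourier} of the transform immediately yields $\hat k_s(\omega)\rt\hat k_t(\omega)=\hat k_{s+t}(\omega)$ pointwise. For the rotoreflection identity \eqref{eq:rotoreflection_fourier_kernel}, I would first read off from \Cref{res:rotoreflection_invariance}, with $y=0$, that $k_t(Qx)=\kappa_t(Qx,Q0)=\kappa_t(x,0)=k_t(x)$, i.e.\ $\RO_Q k_t = k_t$ for every orthonormal $Q$. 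Applying $\cF_R$ and using the rotoreflection equivariance \eqref{eq:rotoreflection_fourier} gives $\hat k_t = \RO_{Q^{-T}}\hat k_t$; since $Q^{-T}=Q$ for orthonormal $Q$, this is $\hat k_t(\omega)=\hat k_t(Q^{-1}\omega)$, and letting $Q$ range over all orthonormal matrices produces \eqref{eq:rotoreflection_fourier_kernel}.

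For the scale identity \eqref{eq:time_scale_fourier_kernel}, I would specialize \Cref{res:time_scale_invariance} at $y=0$ to get $\chi(s)\rt k_{s^\alpha t}(sx)=k_t(x)$, which in operator form reads $k_t = \chi(s)\rt\SO_{1/s}k_{s^\alpha t}$. Applying $\cF_R$, pulling the scalar $\chi(s)$ out by semifield linearity, and then using the scaling equivariance \eqref{eq:scaling_fourier} with scale $1/s$, I obtain $\hat k_t = \chi(s)\rt\chi(1/s)\rt\SO_s\hat k_{s^\alpha t}$. The decisive step is that $\chi$ is a group homomorphism (\Cref{def:semifield_measure}), so $\chi(s)\rt\chi(1/s)=\chi(1)=\ro$ cancels, leaving $\hat k_t = \SO_s\hat k_{s^\alpha t}$, i.e.\ $\hat k_t(\omega)=\hat k_{s^\alpha t}(\omega/s)$.

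None of the three steps is hard; the whole proof is careful bookkeeping of how each transform property reparametrizes the frequency argument. The only points demanding attention are the identity $Q^{-T}=Q$ for orthonormal matrices in the rotoreflection case and the $\chi$-cancellation via the homomorphism property in the scaling case — both of which rely on the precise normalizations fixed in \Cref{def:semifield_fourier_transform} and \Cref{def:semifield_measure}.
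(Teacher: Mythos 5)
Your proposal is correct and follows essentially the same route as the paper's proof: each identity is obtained by specializing the kernel invariances of \Cref{res:rotoreflection_invariance}, \Cref{res:time_scale_invariance}, and \Cref{res:convolution_property_kernel} to the reduced kernel and then applying the corresponding property of $\cF_R$ (rotoreflection equivariance, scaling equivariance with the homomorphism cancellation $\chi(s)\rt\chi(1/s)=\ro$, and the convolution property). Your extra attention to $Q^{-T}=Q$ for orthonormal $Q$ and to the operator form $k_t=\chi(s)\rt\SO_{1/s}k_{s^\alpha t}$ merely makes explicit steps the paper leaves implicit.
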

\begin{proof}
    \Cref{res:rotoreflection_invariance} tells us that
    \begin{equation}
        \kappa_t(Qx, Qy) = \kappa_t(x, y).
    \end{equation}
    Translating this to the reduced kernel \(k_t\) gives
    \begin{equation}
        k_t(Qx) = k_t(x).
    \end{equation}
    Taking the semifield Fourier transform on both sides, and using the rotoreflection equivariance property of the Fourier transform \eqref{eq:rotoreflection_fourier}, we get:
    \begin{equation}
        \hat{k}_t(Q \omega) = \hat{k}_t(\omega).
    \end{equation}
    \Cref{res:time_scale_invariance} also tells us that
    \begin{equation}
        \chi(s) \rt \kappa_{s^\alpha  t}(sx,sy) = \kappa_t(x,y).
    \end{equation}
    Translating this to the reduced kernel \(k_t\) gives
    \begin{equation}
        \chi(s) \rt k_{s^\alpha  t}(sx) = k_t(x).
    \end{equation}
    Taking the semifield Fourier transform on both sides, and using the scaling equivariance property of the Fourier transform \eqref{eq:scaling_fourier}, we get:
    \begin{equation}
        \chi(s) \rt \chi(1/s) \rt \hat{k}_{s^\alpha  t}(\omega/s) = \hat{k}_t(\omega).
    \end{equation}
    Using that \(\chi\) is a homomorphism we know that \(\chi(s) \rt \chi(1/s) = \chi(s/s) = \chi(1) = \ro\), so we can simplify this to 
    \begin{equation}
        \hat{k}_{s^\alpha  t}(\omega/s) = \hat{k}_t(\omega).
    \end{equation}
    \Cref{res:convolution_property_kernel} tells us that
    \begin{equation}
        k_s \rconv k_t = k_{s+t}.
    \end{equation}
    Taking the semifield Fourier transform on both sides, and using the convolution property of the Fourier transform \eqref{eq:convolution_fourier}, we get:
    \begin{equation}
        \hat{k}_s \rt \hat{k}_t = \hat{k}_{s+t}.
    \end{equation}
\end{proof}

Let us check if the reduced kernels of the scale-spaces of interest indeed satisfy the properties listed in the previous lemma by inspecting their Fourier transforms. 

\begin{proposition}[Semifield Fourier Transform of Kernels of Interest] \label{res:fourier_kernels}
    Consider the employed semifield Fourier transforms (\Cref{def:employed_fourier}) and the semifield scale-spaces of interest (\Cref{def:scale_spaces_interest}). 
    Let $k_t(x) := \kappa_t(\cdot,0)$ be the reduced kernel and \(\hat{k}_t = \cF_R(k_t)\) its semifield Fourier transform.
    \begin{enumerate}[label=\alph*)]
        \item For the Gaussian scale-space over the linear semifield \(L\) we have:
        \begin{equation}
            \hat{k}^{L}_t(\omega) = \exp \Par{-\frac{1}{2} t \|\omega\|^2}.
        \end{equation}

        \item For the quadratic root scale-spaces over the root semifields \(R_p\) we have:
        \begin{equation}
            \hat{k}^{R_p}_t(\omega) = \exp \Par{-\frac{1}{2p} t \|\omega\|^2}.
        \end{equation}

        \item For the quadratic logarithmic scale-spaces over the logarithm semifields \(L_\mu\) we have:
        \begin{equation}
            \hat{k}^{L_\mu}_t(\omega) = -\frac{1}{2 \mu } t \|\omega\|^2.
        \end{equation}
        
        \item For the \(\alpha\)-dilation scale-space over the tropical max semifield \(\tropmax\) we have:
        \begin{equation}
            \hat{k}^{\tropmax}_t(\omega) = \frac{1}{\alpha} t \|\omega\|^\alpha.
        \end{equation}
        
        \item For the \(\alpha\)-erosion scale-space over the tropical min semifield \(\tropmin\) we have:
        \begin{equation}
            \hat{k}^{\tropmin}_t(\omega) = -\frac{1}{\alpha} t \|\omega\|^\alpha.
        \end{equation}
    \end{enumerate}
\end{proposition}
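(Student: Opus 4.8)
The proposition is a direct verification: for each of the five semifields one substitutes the reduced kernel from \Cref{def:scale_spaces_interest} into the corresponding employed semifield Fourier transform from \Cref{def:employed_fourier} and computes. The plan is to treat the five cases in two groups, depending on whether the transform reduces to the ordinary (linear) Fourier transform or to a Legendre--Fenchel transform. A preliminary check in each case is that the kernel lies in the chosen domain $\dom(\cF_R)$, which is immediate: the Gaussian-type kernels are even, continuous and (after the defining pointwise map) absolutely integrable, while the tropical kernel $-\tfrac{t}{\beta}(\|x\|/t)^\beta$ is even, continuous, concave and superlinear since $\beta>1$.

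For the linear semifield the reduced kernel is the normalized Gaussian $k^L_t(x)=\tfrac{1}{2\pi t}\exp(-\tfrac12\|x\|^2/t)$, and I would obtain $\hat k^L_t(\omega)=\exp(-\tfrac12 t\|\omega\|^2)$ by the standard completing-the-square computation for a variance-$t$ Gaussian. The root and logarithmic cases then follow with essentially no extra work by exploiting the isomorphisms of \Cref{res:semifield_isomorphism}: the transforms $\cF_{R_p}$ and $\cF_{L_\mu}$ are built by pre- and post-composing the linear transform with the pointwise maps $x\mapsto x^p$ and $x\mapsto e^{\mu x}$. First I would observe that $(k^{R_p}_t)^p$ and $e^{\mu k^{L_\mu}_t}$ are both \emph{exactly} the linear Gaussian $\tfrac{1}{2\pi t}\exp(-\tfrac12\|x\|^2/t)$; hence the inner linear transform produces $\exp(-\tfrac12 t\|\omega\|^2)$, and taking the $p$-th root, respectively $\tfrac1\mu\ln(\cdot)$, yields $\exp(-\tfrac{1}{2p}t\|\omega\|^2)$ and $-\tfrac{1}{2\mu}t\|\omega\|^2$.

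The tropical max case is the only genuine computation. Here $\hat k^{\tropmax}_t(\omega)=\sup_{x}\big[-\tfrac{t}{\beta}(\|x\|/t)^\beta-\omega\cdot x\big]$ is a Legendre--Fenchel transform. Using the rotoreflection symmetry I would reduce the supremum to a one-dimensional maximization over $r=\|x\|\ge 0$ with $x$ taken anti-parallel to $\omega$, so that $-\omega\cdot x=r\|\omega\|$. The first-order condition gives the maximizer $r^\ast=t\|\omega\|^{\alpha-1}$ after invoking the conjugate-exponent identity $1/(\beta-1)=\alpha-1$; substituting back and using $(\alpha-1)\beta=\alpha$ together with $1-1/\beta=1/\alpha$ collapses the two surviving terms to $\tfrac{1}{\alpha}t\|\omega\|^\alpha$. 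The tropical min case is then immediate from the isomorphism $\varphi(x)=-x$ between $\tropmax$ and $\tropmin$ (\Cref{res:semifield_isomorphism}, \Cref{res:scale_space_isomorphisms}), which turns the $\sup$ into an $\inf$ and flips the overall sign, giving $-\tfrac{1}{\alpha}t\|\omega\|^\alpha$; alternatively one repeats the computation directly with an infimum.

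The main obstacle is purely the exponent bookkeeping in the tropical Legendre transform and the attainment of the supremum (guaranteed by concavity and superlinearity of the kernel). As a final consistency check I would verify that each computed $\hat k_t$ satisfies the three identities of \Cref{res:properties_fourier_kernel}. Rotational invariance is manifest, since every result depends on $\omega$ only through $\|\omega\|$. The scaling identity $\hat k_{s^\alpha t}(\omega/s)=\hat k_t(\omega)$ holds because in all five cases $\hat k_t$ depends on $(t,\omega)$ only through $t\|\omega\|^\alpha$ (with $\alpha=2$ in the Gaussian-type cases), so the factors $s^\alpha$ and $s^{-\alpha}$ cancel. Finally the semigroup identity $\hat k_s\rt\hat k_t=\hat k_{s+t}$ holds because semifield multiplication $\rt$ is ordinary multiplication in the linear and root cases, under which $\hat k_t$ is exponential in $t$, and ordinary addition in the logarithmic and tropical cases, under which $\hat k_t$ is linear in $t$.
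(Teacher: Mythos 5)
Your proposal is correct, and it supplies exactly the verification the paper leaves implicit: the paper states \Cref{res:fourier_kernels} without proof, treating it as a routine check obtained by substituting the kernels of \Cref{def:scale_spaces_interest} into the transforms of \Cref{def:employed_fourier}. Your execution — the standard Gaussian integral for the linear case, transfer through the isomorphisms $\varphi_p(x)=x^p$ and $\varphi_\mu(x)=e^{\mu x}$ for the root and logarithmic cases, the Legendre--Fenchel maximization with the conjugate-exponent bookkeeping for $\tropmax$, and the sign flip via $\varphi(x)=-x$ for $\tropmin$ — is the intended argument, and your closing consistency check against \Cref{res:properties_fourier_kernel} matches the paper's stated purpose for the proposition.
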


\subsection{Explicit Form of the Reduced Kernel}

With the results derived above we are now ready to derive the explicit form of the reduced kernel in the Fourier domain $\hat{k}_t$, and, in turn, an expression for the reduced kernel \(k_t\). 
But to succinctly state this explicit form we need one extra ingredient: semifield exponentiation, i.e. a generalization of repeated semifield multiplication.

\begin{definition}[Semifield Exponentiation] \label{def:semifield_exponentiation}
    Let \(R=(R,\rz,\ro,\rp,\rt)\) be a one-dimensional metric semifield.
    The semifield exponentiation \(\exp_R: \bbR \to R\) is defined as the (up to time scaling unique\footnote{The connected part of \((R,\rt)\) that contains \(\ro\) is a one-dimensional Lie group. Semifield exponentiation is the Lie group exponential and is determined by a one-dimensional tangent vector at the identity, thus giving us the up to scaling uniqueness.}) mapping that satisfies:
    \begin{itemize}[label={}]
        \item \(\exp_R(s) \otimes \exp_R(t) = \exp_R(s+t)\) for all \(s,t \in \bbR\),
        \item \(\exp_R(0) = \ro\),
        \item \(\lim_{t \to \infty} \exp_R(t) = \rz\).
    \end{itemize}
\end{definition}

To distinguish the semifield exponentiation from regular exponentiation we always indicate the former with the semifield in the subscript, and the latter without any subscript.

\begin{definition}[Employed Semifield Exponentiation]
    ~
    \begin{enumerate}[label=\alph*)]
        \item In the (nonnegative) linear semifield \(L\) case we have \(\exp_{L}(t) = \exp(-t)\).
        \item In the root semifields \(R_p\) case we have \(\exp_{R_p}(t) = \exp(-t)\).
        \item In the logarithmic semifields \(L_\mu\) case we have \(\exp_{L_\mu}(t) = - \sign(\mu) t\).
        \item In the tropical max semifield \(\tropmax\) case we have \(\exp_{\tropmax}(t) = -t\).
        \item In the tropical min semifield \(\tropmin\) case we have \(\exp_{\tropmin}(t) = t\).
    \end{enumerate}
\end{definition}

\begin{theorem}[Explicit form Reduced Kernel] \label{res:explicit_form_reduced_kernel}
    Let \(R\) be a one-dimensional metric semifield, \(\cF_R\) the semifield Fourier transform (\Cref{def:semifield_fourier_transform}), and \(\exp_R\) the semifield exponentiation (\Cref{def:semifield_exponentiation}).
    Consider all axioms of a semifield scale-space (\Cref{def:semifield_scale_space}).
    We have that the reduced kernel \(k_t\) is (up to a time scaling) equal to:
    \begin{equation}
    \begin{split}
        \hat{k}_t(\omega) &= \exp_R (\|\omega\|^\alpha t),\\
        k_t(x) &= (\cF_R^{-1} \hat{k}_t)(x),
    \end{split}
    \end{equation}
    where \(\alpha\) is the scaling power (Axiom \ref{ax:r2_scaling}).
\end{theorem}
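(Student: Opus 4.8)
The plan is to work entirely in the semifield Fourier domain, where \Cref{res:properties_fourier_kernel} has already distilled all the structural axioms into three clean identities for $\hat{k}_t$: rotoreflection invariance \eqref{eq:rotoreflection_fourier_kernel}, scale invariance \eqref{eq:time_scale_fourier_kernel}, and the multiplicative semigroup identity \eqref{eq:semifield_fourier_kernel}. The strategy is to squeeze $\hat{k}_t(\omega)$ down to a function of a single real variable and then recognise the resulting functional equation as the one defining semifield exponentiation (\Cref{def:semifield_exponentiation}).

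First I would use the rotoreflection invariance \eqref{eq:rotoreflection_fourier_kernel}: since the orthogonal group acts transitively on each circle $\{\|\omega\| = r\}$, the identity $\hat{k}_t(Q\omega) = \hat{k}_t(\omega)$ forces $\hat{k}_t$ to be radial, so $\hat{k}_t(\omega) = g(\|\omega\|, t)$ for some $g : \bbR_{\geq 0} \times \bbR_{>0} \to R$. Feeding this into the scale invariance \eqref{eq:time_scale_fourier_kernel} gives $g(\|\omega\|/s, s^\alpha t) = g(\|\omega\|, t)$ for all $s > 0$. The quantity $\|\omega\|^\alpha t$ is invariant under $(\|\omega\|, t) \mapsto (\|\omega\|/s, s^\alpha t)$, and the choice $s = t^{-1/\alpha}$ collapses the two variables onto it, so there is a single-variable function $\phi : \bbR_{\geq 0} \to R$ with $\hat{k}_t(\omega) = \phi(\|\omega\|^\alpha t)$.

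Then I would substitute this ansatz into the semigroup identity \eqref{eq:semifield_fourier_kernel}. Fixing $\omega \neq 0$ and writing $a = \|\omega\|^\alpha s$, $b = \|\omega\|^\alpha t$, it reads $\phi(a) \rt \phi(b) = \phi(a+b)$ for all $a, b > 0$; that is, $\phi$ is a semifield-multiplicative homomorphism of $(\bbR_{>0}, +)$ into $(R, \rt)$. The boundary data come from the remaining axioms: the semigroup axiom (Axiom \ref{ax:r2_one_parameter_semigroup}) gives $\Phi_0 = \mathrm{id}$, so $k_t \rconv f \to f$ as $t \downarrow 0$ by strong continuity (Axiom \ref{ax:r2_strong_continuity}); transforming, $\hat{k}_t \to \ro$ pointwise, which yields $\phi(0) = \ro$. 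A short argument using that $\rz$ is absorbing and that nonzero semifield elements are invertible shows $\phi(u) \neq \rz$ for every finite $u$, so I may set $\phi(-u) := \phi(u)^{-1}$ and extend $\phi$ to a continuous homomorphism $(\bbR, +) \to (R \setminus \{\rz\}, \rt)$ whose image is connected and contains $\ro$, hence lies in the connected one-dimensional Lie subgroup through the identity described in the footnote of \Cref{def:semifield_exponentiation}. Continuous one-parameter subgroups of a one-dimensional Lie group are precisely the reparametrised exponentials, so $\phi(u) = \exp_R(\lambda u)$ for a single constant $\lambda$; absorbing $\lambda$ into a rescaling of $t$ gives $\hat{k}_t(\omega) = \exp_R(\|\omega\|^\alpha t)$, and applying $\cF_R^{-1}$ delivers $k_t$.

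The main obstacle is the analytic hygiene of this last step in a setting without subtraction. Establishing continuity of $\phi$, which is needed to exclude pathological solutions of the Cauchy-type equation $\phi(a) \rt \phi(b) = \phi(a+b)$, must be extracted from strong continuity (Axiom \ref{ax:r2_strong_continuity}) and the continuity of $\cF_R$, using the semifield metric rather than a norm; and the normalisation $\lim_{u \to \infty} \phi(u) = \rz$ that distinguishes $\exp_R$ from its inverse corresponds to the scale-space attenuating rather than amplifying high frequencies, so this is where the sign of $\lambda$, and hence the precise meaning of ``up to a time scaling'', is pinned down. Checking that the five scale-spaces of interest reproduce the formula, as in \Cref{res:fourier_kernels}, then serves as a consistency test on the normalisation.
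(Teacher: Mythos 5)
Your proposal follows essentially the same route as the paper's proof: radiality from rotoreflection invariance, collapse to a single-variable function of \(\|\omega\|^\alpha t\) via the scaling identity, the multiplicative Cauchy equation from the semigroup property, and identification with \(\exp_R\) through the one-dimensional Lie-group uniqueness that the paper builds into \Cref{def:semifield_exponentiation}. Two minor caveats, neither of which affects the argument: the paper retains (and then discards as a non-relevant scale-space) the branch \(\hat{k}_t \equiv \rz\), which algebra alone cannot exclude outright since semifields have no zero divisors but do admit this degenerate solution, and your appeal to strong continuity at \(t=0\) is not covered by Axiom \ref{ax:r2_strong_continuity} (which only assumes continuity at \(t_0>0\)), though the anchor \(\phi(0)=\ro\) is not actually needed once the functional equation is solved on \((0,\infty)\).
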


Before we continue with the proof, we can check that, indeed, all semifield Fourier transforms of the reduced kernels, as listed in \Cref{res:fourier_kernels}, have this stated form.
    
\begin{proof}
    Consider the reduced kernel \(\hat{k}_t\) in the Fourier domain and all its properties as listed in \Cref{res:properties_fourier_kernel}.
    Due to the rotoreflectional symmetry \eqref{eq:rotoreflection_fourier_kernel} we will abuse notation slightly and write
    \begin{equation}
        \hat{k}_t(\omega) = \hat{k}_t(\|\omega\|) = \hat{k}_t(r),
    \end{equation}
    where \(r = \|\omega\|\). 
    Taking \(s = r  > 0\) in the scaling invariance \eqref{eq:time_scale_fourier_kernel} 
    we get
    \begin{equation}
        \hat{k}_t(r) = 
        \hat{k}_{r^\alpha  t}(1) \text{ for } r > 0.
    \end{equation}
    Due to the one-parameter semigroup property \eqref{eq:semifield_fourier_kernel} and $t \mapsto \hat{k}_t(\omega)$ being continuous, we have (up to a time scaling)
    \begin{equation}
        \hat{k}_t(1) = \exp_R(t) \text{ or } \hat{k}_t(1) = \rz.
    \end{equation}
    We are not interested in the \(\hat{k}_t(1)=\rz\) case as it would imply, together with the previous equation, that \(\hat{k}_t\) is identically zero, corresponding to a non-relevant scale-space.
    Combining the equations found so far we get
    \begin{equation}
        \hat{k}_t(r) = \exp_R (r^\alpha  t) \text{ for } r > 0.
    \end{equation}
    which we can extend to \(r=0\) by continuity.
    Taking the inverse semifield Fourier transform concludes the proof.
\end{proof}

The up-to-a-time-scaling non-uniqueness is something we can \textit{not} avoid as every semifield scale-space \(\Phi_t\) corresponds to an infinite family of scale-spaces \(\tilde{\Phi}_t = \Phi_{s t}\) for every \(s > 0\).

The above theorem shows that every (one dimensional metric) semifield \(R\) corresponds to a unique one-parameter family of semifield scale-spaces, where the scaling power \(\alpha\) acts as the parameter.

There is one caveat here though, and that is that not every \(\alpha\) necessarily results in a \(\tilde k_t\) which is in the domain of the used inverse semifield Fourier transform.
For example, in the tropical semifields the case \(\alpha=1\) results in a \(\tilde k_t\) which we cannot insert into the inverse transforms listed in \Cref{def:employed_fourier}.

{\if\arxiv0\color{RoyalBlue}\fi
\section{Architecture} \label{sec:architecture}
}

In this section, we will briefly discuss the PDE-CNN architecture by defining the PDE sublayers corresponding to semifield scale-spaces \eqref{eq:pde_sublayer}, the PDE sublayer corresponding to convection \eqref{eq:convection_pde_sublayer}, and the affine sublayer \eqref{eq:affine_sublayer}. 
We also illustrate how these sublayers combine to form a PDE layer, and how multiple PDE layers come together to construct a PDE-CNN in \Cref{fig:pde_cnn_architecture_crop}.
 
Consider any one-dimensional metric semifield \(R\) and a corresponding semifield scale-space \(\Phi_t\). 
As \Cref{res:translation_invariance_convolution} shows, we can write the scale-space \(\Phi_t f\) of a (appropriate) function \(f : \bbR^2 \to R\) as \(\Phi_t f = k_t \rconv f\) where \(k_t : \bbR^2 \to R \) is the reduced kernel, and \(\rconv\) is the semifield convolution.
We want to implement these semifield scale-spaces to use them within the design of our PDE-CNNs.
However, in practice, we cannot work with general signals defined on the continuum of \(\bbR^2\), and we need to discretize our setting.


As is usual in machine learning we imagine images \(f : \bbR^2 \to R\) as sampled images \(\tilde f : Z \to R\) on a grid \(Z \subset \bbR^2\) such that \(\tilde f(z) = f(z)\) for all grid points \(z \in Z\).
We idealize the grid as the infinite integer grid \(\bbZ^2\) here for the sake of simplicity (we have no boundary concerns).
Let \(\tilde f, \tilde k : \bbZ^2 \to R\) be the discretized versions of an image \(f\) and any kernel \(k\).
The discrete semifield convolution is defined as
\begin{equation}
    (\tilde k \rconv \tilde f)\Bra{i,j} = \bigoplus_{{m,n} \in \bbZ^2} \tilde k \Bra{-m + i,- n + j} \rt \tilde f\Bra{m,n},
\end{equation}
where we have used the notation \(\Bra{\cdot,\cdot}\) to emphasize the discrete nature.

With the discrete semifield convolution we can write down the formula for a PDE sublayer in the PDE-CNN.
The input of a PDE sublayer consists of signals \(\tilde f_i : \bbZ^2 \to R\) and matrices \(H_i \in \bbR^{2 \times 2}\) with \(i = 1, \dots, C\), and \(C\) being the amount of channels.
The matrices \(H_i\) act as the learnable parameters of the layer.
We consider the continuous scale-space kernel \(k_t : \bbR^2 \to R\) and create the discretized kernels \(\tilde k_i : \bbZ^2 \to R\) by defining 
\begin{equation} 
    \tilde k_i(x) = k_1(H_i x).
\end{equation}
Without loss of generality we may take \(t=1\) in our scale-space kernel \(k_t\) as a scaling in \(t\) can be captured in \(H_i\).
We then perform the discrete semifield convolutions to acquire our outputs \(\tilde g_i : \bbZ^2 \to R\):
\begin{equation} 
\label{eq:pde_sublayer}
\begin{split}
    &\text{PDE sublayer: } \\
    &\tilde g_i = \tilde k_i \rconv \tilde f_i.
\end{split}
\end{equation}
Note that every input channel is only convolved with a single kernel, this is also known as a \textit{depthwise convolution}.

{\if\arxiv0\color{RoyalBlue}\fi
The matrices \(H_i\) require some explanation.
As already touched upon in \Cref{sec:axioms}, in the semifield scale-space axioms we implicitly assumed the standard inner product \(\cG(x,y) = x^\top y\) on \(\bbR^2\), but this is not the only one we can choose. 
By choosing the inner product to be \[\cG(x,y) = x^\top G y,\quad G = H^\top H\] for any matrix \(H\) we get a scale-space representation that is complete identical, albeit ``stretched'' with respect to the coordinates.
The Gram matrix \(G=H^\top H\) is by construction symmetric positive definite (SPD) as required, and relieves us from coding a SPD constraint.
By considering kernels of the form \(k_t(Hx)\) we effectively include the possibility of processing the image with a different inner product.
The ``stretching'' induced by the matrix \(H\) in the Gaussian scale-space case corresponds to the \textit{affine Gaussian scale-space} studied in \cite{lindeberg1997shape}.
}

The PDE sublayer, as described in formula \eqref{eq:pde_sublayer}, seems to have nothing to do with a PDE at first glance.
However, remember that every semifield scale-space \(\Phi_t\) over a semifield \(R\) can be associated with a PDE, and the PDE sublayer is effectively a solver for the corresponding initial value problem.
The examples in \Cref{def:scale_spaces_interest} clarify this.

Alongside the PDE sublayers that correspond to semifield scale-spaces we also have the convection PDE sublayer.
The convection sublayer effectively solves the convection PDE by translating images.
The input of a convection PDE sublayer consists of images \(\tilde f_i : \bbZ^2 \to R\) and vectors \(v_i \in \bbR^{2}\) with \(i = 1, \dots, C\), and \(C\) being the amount of channels.
The vectors \(v_i\) act as the learnable parameters of the layer.
The output signals \(\tilde g_i : \bbZ^2 \to R\) are obtained through bilinear interpolating (Interp) the inputs at the appropriate positions:
\begin{equation} 
\label{eq:convection_pde_sublayer}
\begin{split}
    &\text{Convection sublayer: } \\
    &\tilde g_i[m,n] = \text{Interp}(\tilde f_i, m - (v_i)_1, n - (v_i)_2).
\end{split}
\end{equation}

The final ingredient we need is the affine layer which is defined as follows.
The input consists of channels \(\tilde f_i : \bbZ^2 \to \bbR\), weights \(w_{ij} \in \bbR\), and biases \(b_j \in \bbR\), with \(i = 1, \dots, C_i\), \(j = 1, \dots, C_o\), and \(C_i\), \(C_o\) being the amount of input and output channels respectively.
The weights \(w_{ij}\) and biases \(b_j\) act as the learnable parameters of the layer.
We then perform the following pointwise computation to get the \(C_o\) outputs \(\tilde g_j : \bbZ^2 \to R\)
\begin{equation} \label{eq:affine_sublayer}
\begin{split}
    &\text{Affine layer: } \\
    &\tilde g_j = b_j + \sum_{i = 1}^{C_i} w_{ij} \tilde f_i.
\end{split}
\end{equation}

By concatenating various PDE sublayers, that being either a sublayer that corresponds to a scale-space or a convection sublayer, with an affine combination layer at the end we form a PDE layer.
Multiple PDE layers after each other with an affine layer at the start and end creates a PDE-CNN.
The architecture is illustrated in \Cref{fig:pde_cnn_architecture_crop}.

\begin{figure}
    \centering
    \includegraphics[width=\linewidth]{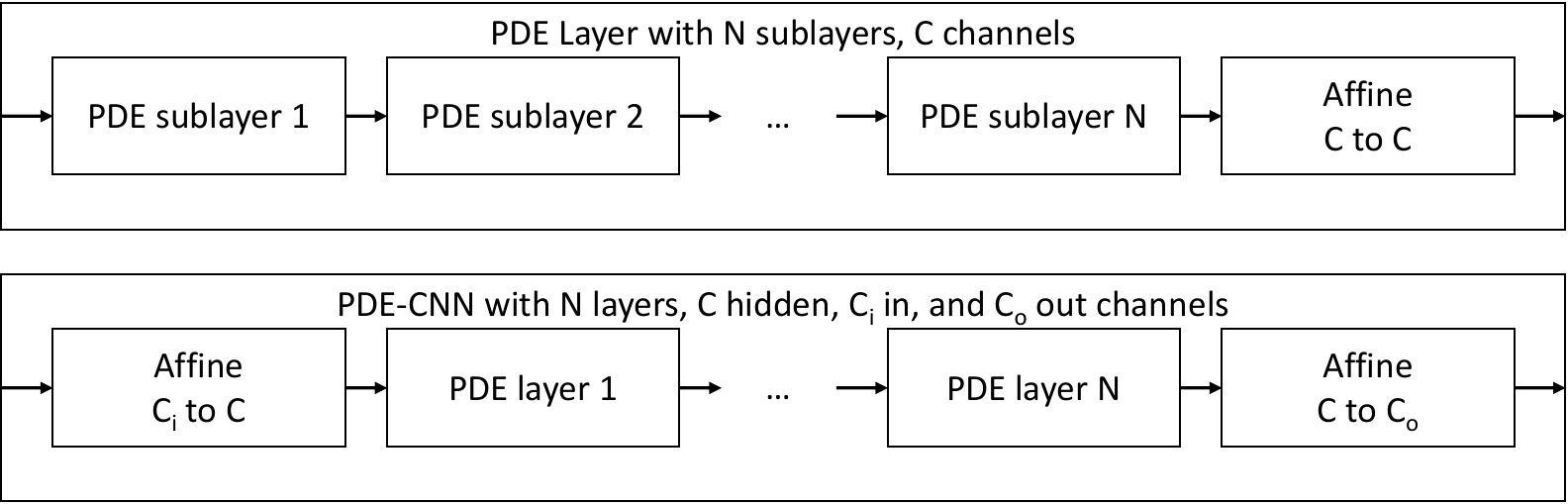}%
    \caption{The architecture of a \(N\)-layer PDE-CNN with \(C\) channels. A PDE sublayer is either of the form \eqref{eq:pde_sublayer} or \eqref{eq:convection_pde_sublayer}. }
    \label{fig:pde_cnn_architecture_crop}
\end{figure}

\section{Experiments} \label{sec:experiments}

\subsection{Including New PDEs in PDE-CNNs} \label{sec:experiment_semifields}

{\if\arxiv0\color{RoyalBlue}\fi
Current PDE-based neural networks employ three PDEs that generate scale-spaces: diffusion, dilation, and erosion \eqref{eq:pdes_pde_g_cnn}, which correspond respectively to the linear semifield \(L\), the tropical min semifield \(\tropmin\), and the tropical max semifield \(\tropmax\).
Our theory reveals at least two PDEs that have not yet been used in PDE-based neural networks: the PDE that generates the root scale-space \eqref{eq:quadratic_root_scale_space_pde} and the PDE that generates the logarithmic scale-space \eqref{eq:quadratic_log_scale_space_pde}, which arise naturally from the root semifield \(R_p\) and logarithm semifield \(L_\mu\).
Our first experiment will examine how the inclusion of these PDEs affects the accuracy of the PDE-CNN architecture, see \Cref{fig:pde_cnn_architecture_crop}.
}

The networks that we consider always include the convection PDE sublayer at the start of the PDE layer.
The PDE-CNNs will consist of 6 PDE layers, 32 channels, and have an average parameter count of approximately \(9\,500\), which changes with the amount of PDEs we add to its PDE layers.

We will test the networks on the DRIVE dataset \cite{staal2004ridge}.
The dataset consists of fundus images, with the goal being vessel segmentation.
In \Cref{fig:drive_example} one can see an example of such an image and its segmentation.
{\if\arxiv0\color{RoyalBlue}\fi
We selected the DRIVE dataset because its images contain vessels at varying scales, making it well-suited for applying scale-space techniques, including PDE-based neural networks.
It also allows for comparison with the results in \cite{smets2023pde,bellaard2023analysis,bellaard2023geometric,pai2023functional}, which also use the DRIVE dataset.
}

The dataset consist of a training set of \(20\) images and a test set of \(20\) images.
All images are \(584\times565\) pixels in 8-bit RGB color, which we rescale to the \([0,1]\) range by dividing by \(255\).
We divided the training set into \(2\,880\) overlapping patches of \(64 \times 64\).
Patches that contain no annotation, i.e. patches that are essentially completely within the black mask (\Cref{fig:drive_example}), are removed, leaving us with \(2\,409\) patches.

\begin{figure}
    \centering
    \begin{subfigure}[t]{0.45\linewidth}
        \centering
        \includegraphics[width=0.8\linewidth]{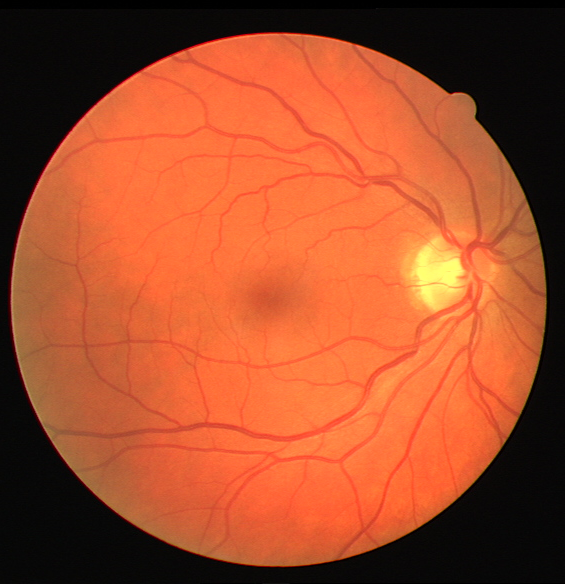}%
        \caption{Input image of the fundus of the eye.}
    \end{subfigure}\hspace{1em}
    \begin{subfigure}[t]{0.45\linewidth}
        \centering
        \includegraphics[width=0.8\linewidth]{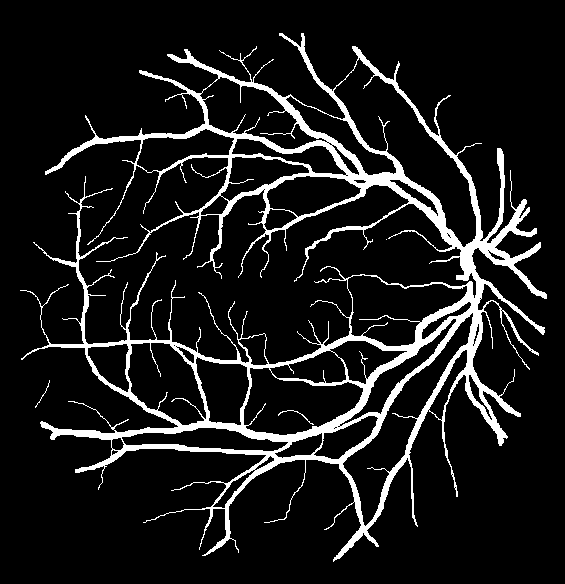}%
        \caption{Target vessel segmentation.}
    \end{subfigure}\\
    \begin{subfigure}[t]{0.45\linewidth}
        \centering
        \includegraphics[width=0.8\linewidth]{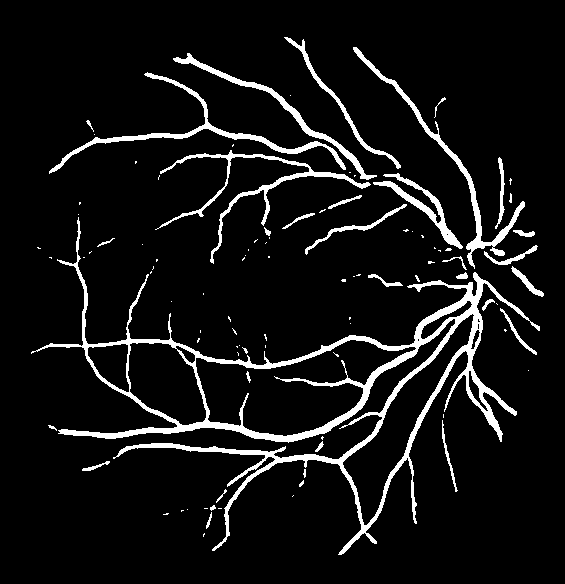}%
        \caption{Example output of a network with a dice coefficient of \(\approx 0.78\). Notice how small vessels are missed.}
    \end{subfigure}\hspace{1em}
    \begin{subfigure}[t]{0.45\linewidth}
        \centering
        \includegraphics[width=0.8\linewidth]{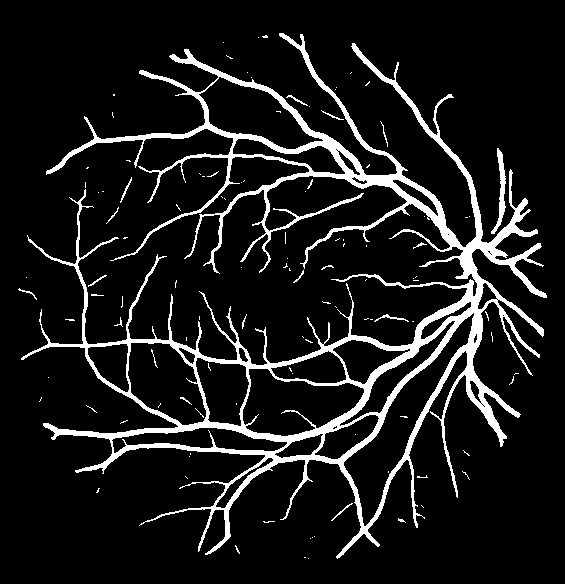}%
        \caption{Example output of a network with a dice coefficient of \(\approx 0.81\). Notice how more small vessels are captured.}
    \end{subfigure}
    \caption{One instance of an input and its corresponding target segmentation from the DRIVE dataset, together with two example outputs of networks with different dice coefficients.}
    \label{fig:drive_example}
\end{figure}

We train on batches consisting of \(8\) patches.
We empirically found that a higher batch size results in a worse test set accuracy.
We use the AdamW optimizer with an initial learning rate of \(0.01\) that decays linearly to \(0.001\) over the first \(1\,000\) batches.
The beta, epsilon and weight decay parameters of AdamW are kept at their (PyTorch) default values of \((0.9, 0.999)\), \(10^{-8}\), and \(0.01\).
During training we keep track of the Dice coefficient on the test set and the best one is stored.
We train until the Dice coefficient on the test set no longer increases, which happens within \(20\,000\) batches.
We then repeat this 5 times for every possible situation.

We empirically found that adding the linear semifield \(L\), that is we add a PDE sublayer corresponding to the Gaussian scale-space, does \textit{not} affect the accuracy of any the networks.
This can be explained by noticing that such a PDE sublayer can be emulated completely and effectively by the convection sublayer \eqref{eq:convection_pde_sublayer} together with the affine sublayer \eqref{eq:affine_sublayer} , which are always components of the PDE-CNNs we consider here. 
For this reason we have omitted the linear semifield PDE sublayers altogether.
This is in agreement with the results found in \cite[p.28]{castella2021introduction}.

The result can be found in \Cref{fig:different_semifields}.
We observe multiple things:
\begin{itemize}
    \item Adding semifields to the existing PDE-CNN architecture, which only employ the tropical semifields and convection, may enhance accuracy, albeit not significantly, as observed in the case of going from \(\{\tropmax,\tropmin\}\) to \(\{\tropmax,\tropmin,L_\mu\}\).
    \item The inclusion of the tropical min semifield \(\tropmin\) \textit{always} increases accuracy, most starkly seen when going from \(\{L_\mu\}\) to \(\{\tropmin,L_\mu\}\).
    \item Adding semifields does \textit{not} necessarily improve accuracy, as is evident from the last row when compared to the two-semifield models in the middle rows.
    \item The inclusion of the root semifield \(R_p\) seems to make the training less stable, as indicated by the increase in spread within the scatter plot at the respective rows.
\end{itemize}
It is worth mentioning however that these results might be specific to the DRIVE dataset.


\definecolor{mygreen}{RGB}{54, 176, 60}
\definecolor{myred}{RGB}{176, 74, 110}
\newcommand{\cmark}{{\color{mygreen} \ding{51}}}%
\newcommand{\xmark}{{\color{myred} \ding{55}}}%

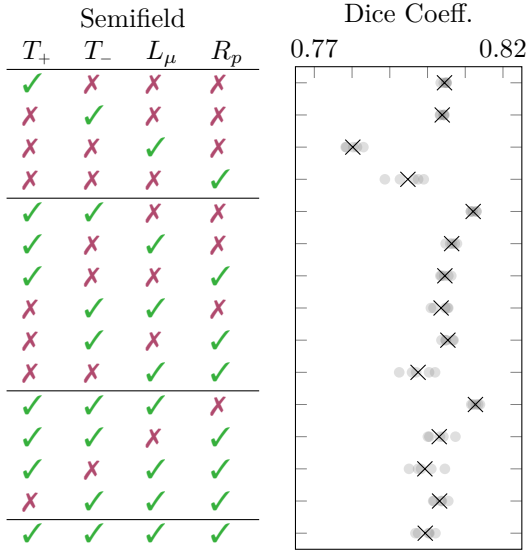
\begin{figure}
    \centering
    \begin{tabular}{llll}
        \multicolumn{4}{c}{Semifield} \\
        \(\tropmax\) & \(\tropmin\) & \(L_\mu\) & \(R_p\)  \\
        \hline
        \cmark & \xmark & \xmark & \xmark \\
        \xmark & \cmark & \xmark & \xmark \\
        \xmark & \xmark & \cmark & \xmark \\
        \xmark & \xmark & \xmark & \cmark \\
        \hline
        \cmark & \cmark & \xmark & \xmark \\
        \cmark & \xmark & \cmark & \xmark \\
        \cmark & \xmark & \xmark & \cmark \\
        \xmark & \cmark & \cmark & \xmark \\
        \xmark & \cmark & \xmark & \cmark \\
        \xmark & \xmark & \cmark & \cmark \\
        \hline
        \cmark & \cmark & \cmark & \xmark \\
        \cmark & \cmark & \xmark & \cmark \\
        \cmark & \xmark & \cmark & \cmark \\
        \xmark & \cmark & \cmark & \cmark \\
        \hline
        \cmark & \cmark & \cmark & \cmark \\
    \end{tabular}
    \raisebox{-0.477\height}{
    \begin{tikzpicture}
        \begin{axis}[
        width = 0.6\linewidth,
        height = 1.05\linewidth,
        ytick = {1,2,3,4,5,6,7,8,9,10,11,12,13,14,15},
        yticklabels = {,,,,,,,,,,,,,,},
        ymin = 0.5,
        ymax = 15.5,
        xmin = 0.765,
        xmax = 0.825,
        xtick = {0.77,0.78,0.79,0.80,0.81,0.82},
        xticklabels = {0.77,,,,,0.82},
        xlabel = {Dice Coeff.},
        xtick pos=upper,
        xticklabel pos=upper
    ]

        \addplot[
            only marks, 
            fill=gray, 
            fill opacity=0.25, 
            draw opacity=0
        ] coordinates {
            (0.8051, 15)
            (0.8049, 15)
            (0.8050, 15)
            (0.8035, 15)
            (0.8043, 15)
            (0.8034, 14)
            (0.8045, 14)
            (0.8033, 14)
            (0.8043, 14)
            (0.8043, 14)
            (0.7830, 13)
            (0.7813, 13)
            (0.7803, 13)
            (0.7785, 13)
            (0.7783, 13)
            (0.7927, 12)
            (0.7990, 12)
            (0.7975, 12)
            (0.7887, 12)
            (0.7964, 12)
            
            (0.8130, 11)
            (0.8129, 11)
            (0.8120, 11)
            (0.8115, 11)
            (0.8115, 11)
            (0.8078, 10)
            (0.8072, 10)
            (0.8065, 10)
            (0.8060, 10)
            (0.8048, 10)
            (0.8063, 9)
            (0.8055, 9)
            (0.8044, 9)
            (0.8037, 9)
            (0.8034, 9)
            (0.8055, 8)
            (0.8053, 8)
            (0.8047, 8)
            (0.8016, 8)
            (0.8009, 8)
            (0.8069, 7)
            (0.8067, 7)
            (0.8049, 7)
            (0.8048, 7)
            (0.8037, 7)
            (0.8020, 6)
            (0.8004, 6)
            (0.7970, 6)
            (0.7959, 6)
            (0.7925, 6)
            
            (0.8139, 5)
            (0.8134, 5)
            (0.8128, 5)
            (0.8120, 5)
            (0.8116, 5)
            (0.8074, 4)
            (0.8042, 4)
            (0.8033, 4)
            (0.8005, 4)
            (0.8001, 4)
            (0.8046, 3)
            (0.8010, 3)
            (0.7983, 3)
            (0.7975, 3)
            (0.7951, 3)
            (0.8055, 2)
            (0.8038, 2)
            (0.8036, 2)
            (0.8017, 2)
            (0.8015, 2)
            
            (0.8021, 1)
            (0.8005, 1)
            (0.8003, 1)
            (0.7975, 1)
            (0.7968, 1)
        };
        
        \addplot[
            only marks,
            color = black,
            mark = x, 
            dashed,
            mark options={solid},
            mark size=4
        ] coordinates {
            (0.8045, 15)
            (0.8039, 14)
            (0.7802, 13)
            (0.7948, 12)
            
            (0.8121, 11)
            (0.8064, 10)
            (0.8046, 9)
            (0.8036, 8)
            (0.8054, 7)
            (0.7975, 6)
            
            (0.8127, 5)
            (0.8031, 4)
            (0.7993, 3)
            (0.8032, 2)
            
            (0.7994, 1)
        };
    \end{axis}
    \end{tikzpicture}}
    \caption{A scatterplot of the accuracy of a 6-layer PDE-CNN on the DRIVE dataset, with various designs of the PDE layer as indicated in the table on the left. The crosses indicate the mean. The rows are organized according to the amount of semifields included in the model.}
    \label{fig:different_semifields}
\end{figure}

\subsection{Data Efficiency of PDE-CNNs} \label{sec:data_efficiency}

The data efficiency of PDE-G-CNNs on \(\bbM_2\) is already verified in \cite{pai2023functional}, but whether this desirable property holds in the PDE-CNN case is still left untested.
Our second experiment is therefore testing the data efficiency of a PDE-CNN on the DRIVE dataset.
The PDE-CNN we consider here employs three PDEs within its PDE layers: convection, dilation, and erosion, just as in the papers \cite{smets2023pde,bellaard2023analysis,bellaard2023geometric,pai2023functional}.

{\if\arxiv0\color{RoyalBlue}\fi
As a baseline we consider a CNN with \(31\, 488\) parameters, and compare this against a PDE-CNN with \(5\,280\) parameters.
To make the comparison fair both networks have \(6\) layers and \(24\) channels.
The only difference between the CNN and PDE-CNN is the kind of layer that is used.
In the CNN we use a standard 2D convolutional module with \(3\times3\) kernels together with a nonlinear activation function.
In the PDE-CNN we use a PDE layer as described in \Cref{fig:pde_cnn_architecture_crop}.
The size of the networks has been chosen this way such that both give a satisfactory Dice coefficient of \(\gtrapprox 0.80\) on the test set when trained on the complete training set.
}


Following the method in \cite{pai2023functional}, we randomly take \(1\%\) to \(100\%\) of the training data.
Other than that our methodology is identical to \Cref{sec:experiment_semifields}.

The result can be found in \Cref{fig:data_efficiency}.
We see that on the DRIVE dataset, in comparison with a standard CNN, the PDE-CNN not only features fewer parameters but also showcases competitive accuracy and increased data efficiency.
This mirrors the results found in \cite{pai2023functional}, but this time for a PDE-CNN instead of the \(M=\bbM_2\) PDE-G-CNN considered there.

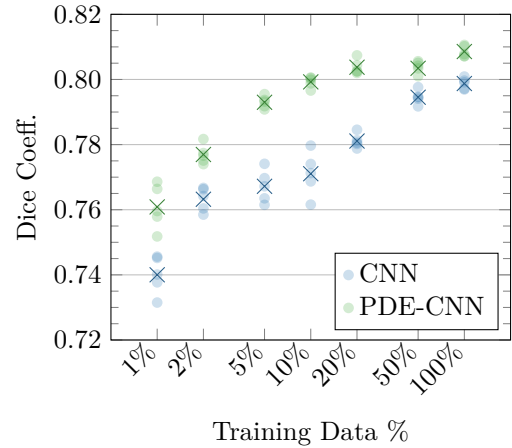
\begin{figure}
\centering
\begin{tikzpicture}
    \begin{axis}[
        width = 0.9\linewidth,
        xticklabel style = {rotate = 45, anchor = east},
        xmode = log,
        xmin  = 0.005,
        xmax  = 2,
        xtick = {0.01, 0.02, 0.05, 0.10, 0.20, 0.50, 1},
        xticklabels = {1\%,2\%,5\%,10\%,20\%,50\%,100\%},
        xlabel = {Training Data \%},
        ymin = 0.72,
        ymax = 0.82,
        ytick       = {0.72,0.74,0.76,0.78,0.80,0.82},
        yticklabels = {0.72,0.74,0.76,0.78,0.80,0.82},
        ylabel = {Dice Coeff.},
        minor y tick num=3, 
        ymajorgrids = true,
        legend pos = south east,
        legend cell align={left},
    ]

        \addplot[
            only marks, 
            fill opacity = 0.25, 
            draw opacity = 0,
            fill = Set1-B
        ] coordinates { 
            (0.01, 0.7456)
            (0.01, 0.7452)
            (0.01, 0.7401)
            (0.01, 0.7377)
            (0.01, 0.7315)
            
            (0.02, 0.7667)
            (0.02, 0.7663)
            (0.02, 0.7642)
            (0.02, 0.7604)
            (0.02, 0.7585)

            (0.05, 0.7741)
            (0.05, 0.7697)
            (0.05, 0.7672)
            (0.05, 0.7615)
            (0.05, 0.7635)
        
            (0.10, 0.77125219)
            (0.10, 0.76157054)
            (0.10, 0.77969627)
            (0.10, 0.76869858)
            (0.10, 0.77404570)

            (0.20, 0.77879498)
            (0.20, 0.78058162)
            (0.20, 0.78021627)
            (0.20, 0.78132745)
            (0.20, 0.78456308)
            
            
            
            (0.50, 0.79764211)
            (0.50, 0.79174041)
            (0.50, 0.79461733)
            (0.50, 0.79463660)
            (0.50, 0.79409555)


            
            

            (1.00, 0.79911892)
            (1.00, 0.79964832)
            (1.00, 0.79715814)
            (1.00, 0.80090074)
            (1.00, 0.79692841)
        };
        \addlegendentry{CNN}
        
        \addplot[
            only marks, 
            fill opacity = 0.25, 
            draw opacity = 0,
            fill = Set1-C
        ] coordinates {
            (0.01, 0.7686)
            (0.01, 0.7664)
            (0.01, 0.7596)
            (0.01, 0.7579)
            (0.01, 0.7518)

            (0.02, 0.7817)
            (0.02, 0.7774)
            (0.02, 0.7766)
            (0.02, 0.7740)
            (0.02, 0.7751)
            
            (0.05, 0.7955)
            (0.05, 0.7937)
            (0.05, 0.7932)
            (0.05, 0.7918)
            (0.05, 0.7908)
            
            (0.10, 0.8006)
            (0.10, 0.8004)
            (0.10, 0.7999)
            (0.10, 0.7988)
            (0.10, 0.7966)
            
            (0.20, 0.8074)
            (0.20, 0.8044)
            (0.20, 0.8025)
            (0.20, 0.8025)
            (0.20, 0.8020)
            
            
            
            (0.50, 0.8056)
            (0.50, 0.8051)
            (0.50, 0.8042)
            (0.50, 0.8034)
            (0.50, 0.8010)
            
            
            
            
            
            (1.00, 0.8106)
            (1.00, 0.8102)
            (1.00, 0.8078)
            (1.00, 0.8072)
            (1.00, 0.8071)
        };
        \addlegendentry{PDE-CNN}
        
        \addplot[
            only marks, 
            color = Set1-B!75!black,
            mark = x, 
            dashed,
            mark options={solid},
            mark size=4
        ] coordinates { 
            (0.01, 0.74002)
            (0.02, 0.76322)
            (0.05, 0.76720)
            (0.10, 0.77105265)
            (0.20, 0.78109668)
            (0.50, 0.79454640)
            (1.00, 0.79875091)
        };

        \addplot[
            only marks, 
            color = Set1-C!75!black,
            mark = x, 
            dashed,
            mark options={solid},
            mark size=4
        ] coordinates {
            (0.01, 0.76086)
            (0.02, 0.77696)
            (0.05, 0.79300)
            (0.10, 0.79926)
            (0.20, 0.80376)
            (0.50, 0.80343)
            (1.00, 0.80858)
        };
    \end{axis}
\end{tikzpicture}
\caption{A scatterplot of the accuracy of a 6-layer 24-channel CNN (\(31\, 488\) parameters) and PDE-CNN (\(5\,280\) parameters) on the DRIVE dataset, when trained multiple times on varying amounts of training data. The crosses indicate the mean.}
\label{fig:data_efficiency}
\end{figure}

\section{Conclusion} \label{sec:conclusion}

PDE-CNNs are an interesting alternative to CNNs in the sense that their constituents, this being solvers of PDEs that generate scale-spaces, are geometrically meaningful and interpretable.

The existing PDE-CNN framework uses four PDEs: convection, diffusion, dilation, and erosion.
Through the introduction of semifield scale-spaces (\Cref{def:semifield_scale_space}) we demonstrate the presence of a broad class of PDEs that remain unused within the PDE-CNN paradigm.

The theory of semifields scale-spaces is expressive and encapsulates a large class of known scale-spaces.
\Cref{res:explicit_form_reduced_kernel} shows that every semifield gives rise to a one-parameter family of semifield scale-spaces.
This indicates that the generalization to semifields is one that is not too general and definitely fruitful.

In \Cref{sec:data_efficiency} we empirically verified that on the DRIVE dataset that PDE-CNNs, just like PDE-G-CNNs, when compared to traditional CNNs, require less training data, have fewer parameters, and increased accuracy.

In \Cref{sec:experiment_semifields} we experimented on the inclusion of various semifields and their corresponding scale-spaces within PDE layers of a PDE-CNN.
We see that the thought ``more semifields means better accuracy'' is incorrect, and that it is not clear if the addition of more semifields into the already existing PDE-CNN framework is worth the effort. 
However, in all cases inclusion of the tropical semifield improved the result, advocating for tropical algebras in PDE-based neural networks.

\subsection*{Further Research}

When comparing the results of PDE-CNNs on the DRIVE dataset here to the \(G=\text{SE}(2)\) PDE-G-CNNs results in \cite{pai2023functional}, the accuracy is essentially the same (Dice \(\approx 0.81\)), but there is a trade-off between memory usage and parameter reduction. 

The \(\text{SE}(2)\) variant has less parameters (\(2\,560\)) but, due to the feature maps being scalar fields on \(\text{SE}(2)\), uses more memory, that being O\texttimes H\texttimes W scalars per feature map.
Here O refers to the amount of orientations (typically 8), H to the height of the images, and W to the width.

Conversely, the \(\bbR^2\) variant has more parameters (\(5\,280\)) but uses much less memory; H\texttimes W scalars per feature map.
This means that in some applications the PDE-CNN architecture might be preferable.
However, the goal of the work here was not to compare PDE-CNNs to PDE-G-CNNs and the observations here only apply to the DRIVE dataset.
Further research is needed to properly compare both architectures.

\backmatter

\section*{Declarations}

\subsection*{Acknowledgements}

We thank Adrien Castella \cite{castella2021introduction} for the initial Python implementation of the \(\bbR^2\) convection, diffusion, dilation and erosion PDE sublayers 
\url{https://github.com/adrien-castella/PDE-based-CNNs}.

\if\arxiv0
    \subsection*{Author Contributions}
    
    G. Bellaard is the first author and main writer of the manuscript.
    G. Bellaard, S. Sakata, B. Smets, and R. Duits contributed to the semifield theory.
    G. Bellaard, S. Sakata, and R. Duits contributed to the semifield scale-space theory.
    S. Sakata performed the initial exploratory experiments.
    G. Bellaard prepared, conducted, and analyzed the final experiments.
    B. Smets provided important practical advice on the experimental aspects.
    All authors collaborated closely and reviewed the manuscript carefully.
    
    Author contributions per section. 
    \Cref{sec:introduction}: G. Bellaard, R. Duits; 
    \Cref{sec:background}: G. Bellaard, R. Duits; 
    \Cref{sec:semifield_theory}: G. Bellaard, S. Sakata, B. Smets, R. Duits; 
    \Cref{sec:semifield_scale_space}: G. Bellaard, S. Sakata, R. Duits; 
    \Cref{sec:consequences}: G. Bellaard, S. Sakata, R. Duits;
    \Cref{sec:architecture}: G. Bellaard, B. Smets, R. Duits;
    \Cref{sec:experiments}: G. Bellaard, S. Sakata, B. Smets;
    \Cref{sec:conclusion}: G. Bellaard, R. Duits.
\fi

\subsection*{Funding}

The EU is gratefully acknowledged for financial support through the REMODEL project (MSCA-SE 101131557). 

We gratefully acknowledge the Dutch Foundation of Science NWO for funding of VICI 2020 Exact Sciences (Duits, Geometric learning for Image Analysis, VI.C.202-031 \url{https://www.nwo.nl/en/projects/vic202031}).

\subsection*{Availability of Data and Code}

{\if\arxiv0\color{RoyalBlue}\fi
All code be found at \url{https://gitlab.com/gijsbel/semifield-pde-cnns}.
}
The DRIVE dataset \cite{staal2004ridge} can (currently\footnote{The old address was \url{https://web.archive.org/web/20191003101812/http://www.isi.uu.nl/Research/Databases/DRIVE/}.}) be found at \url{https://drive.grand-challenge.org/}.
The LieTorch package is public and can be found at \url{https://gitlab.com/bsmetsjr/lietorch}.
The original PDE-CNN implementation can be found at \url{https://github.com/adrien-castella/PDE-based-CNNs}. 

\if\arxiv0
    \subsection*{Competing Interests}
    
    R. Duits is a member of the editorial board of Journal of Mathematical Imaging and Vision (JMIV).
\fi

\begin{appendices}

\section{Semifield Fourier Transforms} \label{sec:employed_fourier_satisfy_definition}

\begin{lemma} \label{res:employed_fourier_satisfy_definition}
    The employed semifield Fourier transforms satisfy \Cref{def:semifield_fourier_transform}.
\end{lemma}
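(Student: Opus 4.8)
The plan is to reduce the five cases of \Cref{def:employed_fourier} to the two genuinely distinct ones identified in \Cref{res:semifield_isomorphism}: the linear semifield $L$ and the tropical max semifield $\tropmax$. The key observation is that each remaining transform is a \emph{conjugation} of one of these two by a pointwise semifield isomorphism. Writing $\varphi_p(x)=x^p$ and $\varphi_\mu(x)=e^{\mu x}$ as in \Cref{res:semifield_isomorphism}, one checks directly from the formulas that
\[
\cF_{R_p} = \PO_{\varphi_p^{-1}} \circ \cF_L \circ \PO_{\varphi_p}, \quad
\cF_{L_\mu} = \PO_{\varphi_\mu^{-1}} \circ \cF_L \circ \PO_{\varphi_\mu},
\]
and, with $\varphi(x)=-x$, that $\cF_{\tropmin}=\PO_{\varphi}\circ\cF_{\tropmax}\circ\PO_{\varphi}$. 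I would therefore first establish a \textbf{transfer principle}: if $\cF_R$ satisfies \Cref{def:semifield_fourier_transform} and $\varphi:\tilde R\to R$ is a semifield isomorphism whose induced scaling factors agree, i.e.\ $\varphi\circ\chi_{\tilde R}=\chi_R$, then $\PO_{\varphi^{-1}}\circ\cF_R\circ\PO_\varphi$ satisfies the definition for $\tilde R$. This works because the pointwise operator $\PO_\varphi$ commutes with every spatial operator $\TO_v,\RO_Q,\SO_s$ (they act on disjoint ``slots''), and, being a homomorphism, it intertwines $\rp,\rt$ and the convolution $\rconv$. With this principle only the linear and tropical max cases need direct verification.

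For the linear case every axiom is classical. Semifield linearity and the convolution property \eqref{eq:convolution_fourier} are the usual linearity and convolution theorem for $\cF_L$. Rotoreflection equivariance \eqref{eq:rotoreflection_fourier} follows from the substitution $x\mapsto Qx$, using $|\det Q|=1$ and $Q^{-T}=Q$ for orthogonal $Q$. Scaling equivariance \eqref{eq:scaling_fourier} follows from $x\mapsto sy$, which produces the factor $s^2=\chi_L(s)$ together with the dilation $\SO_{1/s}$. Injectivity holds because $\dom(\cF_L)$ is restricted to even, continuous, absolutely integrable functions with absolutely integrable transform, on which the stated inverse formula is valid.

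For the tropical max case each axiom becomes a property of the Legendre--Fenchel transform. Semifield linearity holds because a supremum of a pointwise maximum splits into a maximum of suprema, matching $\rp=\max$ and $\rt=+$. The convolution property holds because the supremal convolution is sent to an ordinary sum: substituting $z=x-y$ in $\sup_{x}\sup_{y}\bigl(f(x-y)+g(y)-\omega\cdot x\bigr)$ separates the two suprema, with $\rt=+$. Rotoreflection and scaling equivariance again follow from linear changes of variables in the spatial supremum, the latter using $\chi_{\tropmax}(s)=0$. Invertibility is the Fenchel--Moreau theorem restricted to the prescribed class of even, continuous, concave, superlinear functions, on which the Fenchel transform is a bijection with the stated inverse. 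The tropical min case then follows from the transfer principle via $\varphi(x)=-x$, consistent with $\sup_{s} s=-\inf_{s}(-s)$.

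The hard part will be the compatibility checks underlying the transfer principle, rather than the two base cases. One must verify that the isomorphisms carry the semifield measures and scaling factors onto one another, e.g.\ $\varphi_p(\mu_{R_p}(A))=\lambda(A)$ and $\varphi_p(\chi_{R_p}(s))=s^{2}=\chi_L(s)$, and likewise $\varphi_\mu(\chi_{L_\mu}(s))=s^2$, so that scaling equivariance transfers with the \emph{correct} factor $\chi$; this is exactly the hypothesis $\varphi\circ\chi_{\tilde R}=\chi_R$, and it is where the choices in \Cref{def:employed_measure} and \Cref{def:employed_integration} were engineered to fit. A secondary subtlety is the bookkeeping of domains: each conjugated transform inherits its domain as the $\PO_\varphi$-preimage of $\dom(\cF_L)$ (respectively $\dom(\cF_{\tropmax})$), and one must confirm that the positivity and nonnegativity conditions stated in \Cref{def:employed_fourier} are precisely those needed to re-apply $\PO_{\varphi^{-1}}$ and to keep the Fenchel--Moreau involution valid.
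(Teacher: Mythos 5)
Your proposal is correct and follows essentially the same route as the paper's own proof: the paper also verifies the linear case by appeal to classical Fourier theory, handles the tropical max case by direct computation (citing Fenchel biconjugation for invertibility), and disposes of the root, logarithmic, and tropical min cases via exactly the conjugation identities $\cF_{R_p} = \PO_{\varphi_p^{-1}}\circ\cF_L\circ\PO_{\varphi_p}$, $\cF_{L_\mu} = \PO_{\varphi_\mu^{-1}}\circ\cF_L\circ\PO_{\varphi_\mu}$, and $\varphi(x)=-x$ for $\tropmin$. The only difference is presentational: you package the conjugation argument as an explicit transfer principle with the scaling-compatibility hypothesis $\varphi\circ\chi_{\tilde R}=\chi_R$, which the paper leaves implicit (it only works out the convolution property as an illustrative sample), so your version is if anything slightly more complete on that point.
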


\begin{proof}
    In the linear semifield \(L\) case we know that the familiar Fourier transform satisfies the definition.
    
    As for the root and logarithmic semifields, being isomorphic to the linear semifield, we can quickly deduce that they also satisfy the definitions through the equalities
    \begin{equation}
    \begin{split}
        \cF_{L_\mu} &= \cP_{\varphi_\mu^{-1} } \circ \cF_{L} \circ \cP_{\varphi_\mu},\\
        \cF_{R_p} &= \cP_{\varphi_p^{-1} } \circ \cF_{L} \circ \cP_{\varphi_p},
    \end{split}
    \end{equation}
    where \(\cP\) is the pointwise operator \eqref{eq:pointwise_operator}, \(\varphi_\mu(x) = e^{\mu x}\) the semifield isomorphism \(\varphi_\mu : L_\mu \to L_{\geq 0}\), and \(\varphi_p(x) = x^p\) the semifield isomorphism \(\varphi_p : R_p \to L_{\geq 0}\).
    For example, to show that \(\cF_{L_\mu}\) satisfies the convolution property:
    \begin{equation}
    \begin{split}
        &\cF_{L_\mu}(f \rconv g) \\
        &= \cP_{\varphi_\mu^{-1} } \cF_{L} \cP_{\varphi_\mu}(f \rconv g) \\
        &= \cP_{\varphi_\mu^{-1} } \cF_{L}((\cP_{\varphi_\mu} f) * (\cP_{\varphi_\mu} g))\\
        &=\cP_{\varphi_\mu^{-1} } ( ( \cF_{L} \cP_{\varphi_\mu} f ) \times ( \cF_{L} \cP_{\varphi_\mu} g ) )\\
        &=  ( \cP_{\varphi_\mu^{-1} }  \cF_{L} \cP_{\varphi_\mu} f ) \rt ( \cP_{\varphi_\mu^{-1} } \cF_{L} \cP_{\varphi_\mu} g )\\
        &= (\cF_{L_\mu} f) \rt (\cF_{L_\mu} g),
    \end{split}
    \end{equation}
    where \(\rconv\) and \(\rt\) are the semifield convolution and multiplication of \(L_\mu\) and where $\times$ denotes the standard pointwise product of functions. In the above derivation we have used that
    \begin{equation}
    \begin{split}
        \cP_{\varphi_\mu}(f \rconv g) &= (\cP_{\varphi_\mu} f) * (\cP_{\varphi_\mu} g),\\
        \cP_{\varphi_\mu^{-1}}(f \times g) &= (\cP_{\varphi_\mu^{-1}} f) \rt (\cP_{\varphi_\mu^{-1}} g),
    \end{split}
    \end{equation}
    and that \(\cF_{L}\) has the convolution property.

    Consider now the tropical max semifield \(\tropmax\).
    That \(\cF_\tropmax\) satisfies the linearity, equivariances, and the zero-frequency properties is immediate.
    As for the convolution property we have
    \begin{equation}
    \begin{split}
        &(\cF_{\tropmax}(f \rconv g))(\omega)\\
        &= \sup_{x}\ (f \rconv g)(x) - \omega \cdot x \\
        &= \sup_{x}\ (\sup_{y} f(x-y) + g(y)) - \omega \cdot x \\
        &= \sup_{x}\ (\sup_{x_1 + x_2 = x} f(x_1) + g(x_2)) - \omega \cdot x \\
        &= \sup_{x_1,x_2}\ f(x_1) + g(x_2) - \omega \cdot (x_1 + x_2) \\
        &= (\sup_{x_1} f(x_1) - \omega \cdot x_1) + (\sup_{x_2} g(x_2) - \omega \cdot x_2)\\
        &= (\cF_{\tropmax} f)(\omega) \rt (\cF_{\tropmax} g)(\omega),
    \end{split}
    \end{equation}
    where \(\rt\) and \(\rconv\) are the tropical max \(\tropmax\) multiplication and convolution.
    For the invertibility we refer to the Fenchel biconjugation theorem \cite[Thm.4.2.1]{borwein2006convex}.
    That the tropical min semifield Fourier transform \(\cF_\tropmin\) satisfies all properties follows from the fact that \(\tropmin\) is semifield isomorphic to \(\tropmax\) with the isomorphism being \(\phi(x) = -x\).
\end{proof}



\section{Tropical Integration}\label{sec:tropical_integration_correct}
\begin{proposition}
    The natural integration of sum-approachable and bounded from above functions \(f : \bbR^2 \to \tropmax\) is
    \begin{equation}
        \oint^{\tropmax} f = \sup_{x \in \bbR^2} f(x).
    \end{equation}
    The natural integration of sum-approachable and bounded from below functions \(f : \bbR^2 \to \tropmin\) is
    \begin{equation}
        \oint^{\tropmin} f = \inf_{x \in \bbR^2} f(x).\\
    \end{equation}
\end{proposition}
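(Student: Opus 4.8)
The plan is to exploit the stated definition of the natural integration as the extension from simple functions, and to show that in the tropical case this extension collapses to the supremum. I will treat the \(\tropmax\) statement in detail; the \(\tropmin\) statement then follows by transporting along the semifield isomorphism \(\varphi(x) = -x\) (\Cref{res:semifield_isomorphism}) together with the elementary identity \(\inf_x f(x) = -\sup_x(-f(x))\).

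First I would record the relevant structure: in \(\tropmax\) one has \(\rp = \max\), \(\rt = +\), \(\ro = 0\), \(\rz = -\infty\), and the employed measure is \(\mu(A) = 0 = \ro\) for every measurable \(A\). For a simple function \(s = \bigoplus_{i=1}^n a_i \rt \ro_{A_i}\), semifield linearity forces \(\oint s = \bigoplus_{i=1}^n a_i \rt \mu(A_i) = \max_{i=1}^n (a_i + 0) = \max_{i=1}^n a_i\). On the other hand \(s(x) = \max_{i : x \in A_i} a_i\), so \(\sup_x s(x) = \max_{i : A_i \neq \varnothing} a_i\), which equals \(\max_i a_i\) once we discard indices with \(A_i = \varnothing\) (these contribute \(\rz\) to both sides). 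Thus \(\oint s = \sup_x s(x)\) already at the level of simple functions. Next I would pass to a sum-approachable \(f(x) = \lim_{n\to\infty} \bigoplus_{i=1}^n a_i \rt \ro_{A_i}(x)\) with each \(A_i\) open. In \(\tropmax\) the partial sums are non-decreasing in \(n\), so the pointwise limit is a pointwise supremum and \(f(x) = \sup_{i : x \in A_i} a_i\). By the (caveated) definition of the extended integral, \(\oint f = \lim_n \bigoplus_{i=1}^n a_i \rt \mu(A_i) = \sup_i a_i\). Interchanging the two suprema gives \(\sup_x f(x) = \sup_x \sup_{i : x\in A_i} a_i = \sup_i \sup_{x \in A_i} a_i = \sup_i a_i\), where the final equality uses that each open \(A_i\) is non-empty. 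Hence \(\oint f = \sup_x f(x)\), finite precisely because \(f\) is bounded from above, which matches the prescribed domain.

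The main obstacle I anticipate lies in the well-definedness buried in the ``caveats'': a sum-approachable function admits many representing sequences \(\{(a_i, A_i)\}\), and a priori the value \(\sup_i a_i\) could depend on the chosen representation. The interchange-of-suprema computation is exactly what resolves this, since it identifies \(\sup_i a_i\) with \(\sup_x f(x)\), an intrinsic quantity attached to \(f\) alone; consequently the extended integral is representation-independent and equals the supremum. A secondary and more routine verification is that \(\sup\) genuinely satisfies every clause of \Cref{def:semifield_integration} — semifield linearity, translation and rotoreflection invariance, the scaling law with \(\chi(s) = 0\), and Fubini — which confirms that \(\sup\) is a bona fide semifield integration and not merely a formal limit. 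This last step I would dispatch directly from elementary properties of suprema, and the tropical min case then transfers verbatim via \(\varphi(x) = -x\).
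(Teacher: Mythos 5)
Your proposal is correct and follows essentially the same route as the paper's own proof: compute \(\oint f\) via semifield linearity and the indicator property to get \(\sup_i a_i\), then identify this with \(\sup_x f(x)\) by interchanging suprema and using non-emptiness of the \(A_i\). Your added remarks (the preliminary simple-function check, the explicit representation-independence observation, and handling \(\tropmin\) via the isomorphism \(\varphi(x)=-x\) rather than ``analogously'') are sound refinements of the same argument, not a different approach.
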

\begin{proof}
    We will only prove this for the tropical max semifield \(\tropmax\), the tropical min semifield case goes completely analogously.
    As we are working with \(\tropmax\) we remind ourselves that we have \(\rp=\max\), \(\rt = +\), \(\mu_{\tropmax}(A)=0\), and \[\ro_A(x) = \begin{cases}
            0 & \text{ if } x \in A\\
            -\infty & \text{ otherwise }
        \end{cases}. \]
    The function $f$, being sum-approachable and bounded from above, is pointwise defined by the limit
    \begin{equation}
        f(x) = \lim_{n \to \infty} \bigoplus_{i=1}^n a_i \rt \ro_{A_i}(x),
    \end{equation}
    with \(A_i\) non-empty and \(a_i\) bounded from above.
    We define its natural integral by
    \begin{equation}
    \begin{split}
        \oint f 
        &:= \lim_{n \to \infty} \oint \Par{\bigoplus_{i=1}^n a_i \rt \ro_{A_i}}\\
        &= \lim_{n \to \infty} \bigoplus_{i=1}^n a_i \rt \mu(A)\\
        &= \lim_{n \to \infty} \max_{i=1,\dots,n} a_i + 0\\
        &= \sup_{i \in \bbN} a_i,\\
    \end{split}
    \end{equation}
    where the second equality is by the linearity of the integration \eqref{eq:linearity_integration} and the indicator function property \eqref{eq:semifield_integration_indicator_function}.
    
    Similarly, we have
    \begin{equation}
    \begin{split}
        \sup_{x \in \bbR^2} f(x) 
        &= \sup_{x \in \bbR^2} \lim_{n \to \infty} \bigoplus_{i=1}^n a_i \rt \ro_{A_i}(x)\\
        &= \sup_{x \in \bbR^2} \lim_{n \to \infty} \max_{i=1,\dots,n} a_i + \ro_{A_i}(x)\\
        &= \sup_{x \in \bbR^2} \sup_{i \in \bbN} a_i + \ro_{A_i}(x) \\
        &= \sup_{i \in \bbN} \sup_{x \in \bbR^2} a_i + \ro_{A_i}(x) \\
        &= \sup_{i \in \bbN} a_i,
    \end{split}
    \end{equation}
    where in the fourth equality we interchanged the order of suprema, and in the fifth equality we used the definition of \(\ro_{A_i}\).

    Combining these two results, it follows that
    \begin{equation}
        \oint f = \sup_{x \in \bbR^2} f(x)
    \end{equation}
    is the natural tropical max semifield integration.
\end{proof}

\end{appendices}

\bibliography{references.bib}

\end{document}